\def\eqref#1{equation~\ref{#1}}
\def\1{\bm{1}}
\DeclareMathAlphabet{\mathsfit}{\encodingdefault}{\sfdefault}{m}{sl}
\SetMathAlphabet{\mathsfit}{bold}{\encodingdefault}{\sfdefault}{bx}{n}
\newcommand{\bfA}{{\bf A}}
\newcommand{\bfD}{{\bf D}}
\newcommand{\bfI}{{\bf I}}
\newcommand{\bfJ}{{\bf J}}
\newcommand{\bfK}{{\bf K}}
\newcommand{\bfP}{{\bf P}}
\newcommand{\bfQ}{{\bf Q}}
\newcommand{\bfT}{{\bf T}}
\newcommand{\bfU}{{\bf U}}
\newcommand{\bfV}{{\bf V}}
\newcommand{\bfW}{{\bf W}}
\newcommand{\bfX}{{\bf X}}
\newcommand{\bfZ}{{\bf Z}}
\newcommand{\bfa}{{\bf a}}
\newcommand{\bfe}{{\bf e}}
\newcommand{\bfh}{{\bf h}}
\newcommand{\bfs}{{\bf s}}
\newcommand{\bfx}{{\bf x}}
\newcommand{\bfy}{ {\bf y}}
\newcommand{\bfp}{{\bf p}}
\newcommand{\bfz}{{\bf z}}
\newcommand{\bftheta}{{\boldsymbol \theta}}
\newcommand{\bfgamma}{{\boldsymbol \gamma}}
\newcommand{\bfGamma}{{\boldsymbol \Gamma}}
\newcommand{\bfbeta}{{\boldsymbol \beta}}
\newcommand{\bfxi}{{\boldsymbol \xi}}
\newcommand{\tr}[1]{\textcolor{red}{#1}}
\renewcommand{\eqref}[1]{(\ref{#1})}
\theoremstyle{plain}
\newtheorem{theorem}{Theorem}
\newtheorem{proposition}[theorem]{Proposition}
\newtheorem{lemma}[theorem]{Lemma}
\theoremstyle{definition}
\theoremstyle{remark}
\newtheorem{remark}[theorem]{Remark}
\crefname{proposition}{Proposition}{Propositions}
\Crefname{proposition}{Proposition}{Propositions}
\crefname{theorem}{Theorem}{Theorems}
\Crefname{theorem}{Theorem}{Theorems}
\crefname{lemma}{Lemma}{Lemmas}
\Crefname{lemma}{Lemma}{Lemmas}
\crefname{corollary}{Corollary}{Corollaries}
\Crefname{corollary}{Corollary}{Corollaries}
\crefname{definition}{Definition}{Definitions}
\Crefname{definition}{Definition}{Definitions}
\crefname{assumption}{Assumption}{Assumptions}
\Crefname{assumption}{Assumption}{Assumptions}
\crefname{remark}{Remark}{Remarks}
\Crefname{remark}{Remark}{Remarks}
\renewcommand{\footnoterule}{%
  \kern -3pt                         
  \hrule width 0.4\columnwidth       
  \kern 2.6pt                        
}
\newcounter{tmpTheoremCounter}
\newcommand{\settheoremnumber}[1]{%
  \setcounter{tmpTheoremCounter}{\value{theorem}}
  \setcounter{theorem}{\getrefnumber{#1}}%
  \addtocounter{theorem}{-1}%
}
\newcommand{\restoretheoremnumber}{%
  \setcounter{theorem}{\value{tmpTheoremCounter}}
}
\begin{document}
\title{Stability of Transformers under Layer Normalization}

\author{ Kelvin Kan, Xingjian Li, Benjamin J. Zhang, Tuhin Sahai, \\Stanley Osher, Krishna Kumar, 
Markos A. Katsoulakis
\thanks{K. Kan and SO are with UCLA, XL and K. Kumar are with UT Austin, BJZ is with UNC Chapel Hill, TS is with SRI International, MAK is with UMass Amherst.}}


\maketitle

\begin{abstract}
    Despite their widespread use, training deep Transformers can be unstable. 
    Layer normalization, a standard component, improves training stability, but its placement has often
    been \emph{ad hoc}.
    In this paper, we conduct a principled study on the forward (hidden states) and backward (gradient) stability of Transformers under different layer normalization placements. By utilizing discrete-time optimal control theory, our results provide key insights into the training dynamics: whether training drives Transformers toward regular solutions or pathological behaviors. 
    For forward stability, we derive explicit bounds on the growth of hidden states in trained Transformers. For backward stability, we analyze how layer normalization affects the backpropagation of gradients, thereby explaining the training dynamics of each layer normalization placement.
    Our analysis also guides the scaling of residual steps in Transformer blocks, where appropriate choices can further improve stability and performance. Our numerical results corroborate our theoretical findings. Beyond these results, our framework provides a principled way to sanity-check 
    the stability of Transformers under new architectural modifications, offering guidance for future designs.
\end{abstract}

\IEEEpeerreviewmaketitle

\section{Introduction}

Transformers have become the foundation of modern deep learning, driving state-of-the-art models across language, vision, and beyond. The training of Transformers, however, remains challenging due to instabilities in both their forward evaluation and gradient backpropagation, particularly as model depth grows. A key architectural component that improves the stability of Transformers is layer normalization (LN), which normalizes hidden states within each layer \cite{ba2016layernormalization,xiong2020layer}. While LN is widely applied in practice, its position within the Transformer block  is not rigorously justified, and has evolved largely through empirical testing and heuristics. 

Early Transformer designs used Post-LN, which applies LN \emph{after} adding the residual connection to the attention and feedforward modules. But this choice often exhibits suboptimal performance~\cite{xiong2020layer,kim2025peri} and requires delicate optimization scheduling~\cite{Popel2018TrainingTF,liu2019variance} to achieve stable training. Pre-LN, which places LN at the \emph{input} of the attention and feedforward modules, has since become the standard due to improved performance over Post-LN. Pre-LN, however, is known to produce \emph{excessively large} hidden states \cite{dettmers2022gptint,yu2024super,sun2024massive,fishman2025scaling,kim2025peri}, which can lead to numerical instability during training. Peri-LN, a newer alternative that places LN at \emph{both the input and output} of the attention and feedforward modules, has only recently been adopted in large-scale models due to its improved training stability and more regular hidden states compared to Pre-LN~\cite{kim2025peri}. However, its theoretical properties remain poorly understood, with recent work focusing on empirical studies. 

In this paper, we theoretically analyze how layer normalization placement affects Transformer stability during both evaluation and training. We examine forward stability --- the growth of hidden states of trained models --- and backward stability --- the regularity of gradients during backpropagation --- by casting the Transformer training formulation as a discrete-time optimal control problem. Our analysis explains the instability of Pre-LN and provides rigorous justification for the stability properties of Peri-LN observed in empirical studies. 

Using discrete-time optimal control theory, we show that the optimal solution for Pre-LN architectures grow unbounded in magnitude, while Peri-LN maintains controlled growth for entry- and data-wise moments. Specifically, we derive growth rates for hidden states that align with empirical observations for Peri-LN. Our backward stability analysis shows that for Pre-LN, gradients at individual layers grow proportionally with activations, which together with the unbounded hidden state growth results in training instability. In contrast, Peri-LN produces gradients that are invariant to the activation magnitude, implying stable gradients. These findings are then demonstrated through comprehensive numerical experiments in~\Cref{sec:experiments}, showing clear distinctions between different LN placements.

To summarize, our contributions are as follows:
\begin{itemize}
    \item We propose a novel theoretical framework grounded in discrete-time optimal control theory to study the stability of Transformers under different layer normalization placements. While prior work often focuses on models at initialization or relies on empirical evidence, our framework analyzes the trained models and provides a systematic assessment of whether the training drives Transformers toward regular solutions or pathologies. This assessment can also guide the design and evaluation of future Transformer architectures.
    \item We provide a novel derivation and rigorous analysis of this discrete-time optimal control framework. To the best of our knowledge, it is the first application of this theoretical approach to the analysis of Transformer training dynamics.
    \item We derive explicit bounds on Transformers' hidden state growth and analyze the training gradients under different layer normalization placements. Our theoretical results provide a principled explanation for empirical observations reported in the literature, which remain theoretically underexplored.
    \item Guided by our stability analysis, we introduce a residual step scaling and show theoretically that it improves both stability and performance in Peri-LN. We validate these improvements through experiments on language models from medium to large scales.
\end{itemize}

\section{Background and Setup}\label{sec:background}
In this section, we present the necessary background on Transformer architectures and their connection to discrete-time dynamics, which will underpin our analysis.
\paragraph{Notation} 
We use bold uppercase (e.g., $\bfX$) and lowercase letters (e.g., $\bfx$) to denote matrices and vectors, respectively.

\paragraph{Transformers} Let $\bfX_0 \in \mathbb{R}^{d \times n}$ be an (embedded and positionally encoded) input to a Transformer, where $d$ is the feature dimension, and $n$ is the number of tokens. The input is passed sequentially through a series of \emph{Transformer blocks}, such that the output of one block feeds into the next. Specifically, the $i$-th Transformer block reads
\begin{align}
\begin{split}
     &\bfX_{i+\frac{1}{2}} = \bfX_i + f_{\rm attn}(\bfX_i; \bftheta^{\rm attn}_i)  \\ & = \bfX_{i} + \sum_{h=1}^H  \bfW_i^h \bfV_i^h \bfX_i \, {\rm softmax} \left( \frac{(\bfK_i^h \bfX_i)^\top \bfQ_i^h \bfX_{i}}{\sqrt{k}} \right), \label{eq:self-attention}
     \end{split}
     \\
     &\bfX_{i+1} = \bfX_{i+\frac{1}{2}} + f_{\rm ffn}(\bfX_{i+\frac{1}{2}};\bftheta^{\rm ffn}_{i+\frac{1}{2}}), \label{eq:fully-connected}
\end{align}
for $i=0,1,...,D-1$. Here, $D$ is the total number of Transformer blocks. 
Each summand of the RHS of (\ref{eq:self-attention}) is called a \emph{self-attention head}, and the upper limit $H$ denotes the number of heads. The matrices $\bfQ_i^h,\bfK_i^h, \bfV_i^h \in \mathbb{R}^{k \times d}$ are commonly referred to as query, key, and value matrices, respectively, and $\bfW_i^h \in \mathbb{R}^{d \times k}$ is a weight matrix. All of these matrices are trainable and are collectively denoted as $\bftheta^{\rm attn}_i$. The module $f_{\rm ffn}$ is a feedforward network with parameters $\bftheta^{\rm ffn}_{i+\frac{1}{2}}$. It is applied separately to each token (i.e., each column of $\bfX_{i+\frac{1}{2}}$). Similarly, the softmax function is applied column-wise. It is noteworthy that both (\ref{eq:self-attention}) and (\ref{eq:fully-connected}) contain a \emph{skip connection}~\cite{he2016deep}.

The operation $f_{\rm attn}$ is known as a multi-head self-attention module, which is the key feature of Transformer architectures. This self-attention mechanism allows the model to dynamically focus on the most relevant parts of an input token sequence, enabling it to capture complex dependencies across both short- and long-range contexts. These capabilities make Transformers highly effective for tasks such as language modeling. Moreover, self-attention can be implemented efficiently: the underlying matrix operations can be parallelized, making Transformers particularly well-suited for long sequences (i.e., large $n$). 

After passing through the $D$ Transformer blocks, the Transformer's output $\tilde{\bfy}$ is computed as
\begin{equation}\label{eq:output_layer}
    \tilde{\bfy} = g(\bfX_{D}; \bfxi),
\end{equation}
where $g$ is either the composition of a decoder and a multilayer perceptron (MLP) or just an MLP, parametrized by $\bfxi$.

\paragraph{Discrete-time Dynamics.} \label{par:dynamics}
The skip connection structure in the Transformer blocks (\ref{eq:self-attention}-\ref{eq:fully-connected}) can be interpreted as a discrete-time dynamical system~\cite{haber2017stable,ruthotto2020deep,lu2020understanding}. The layer index $i\in [0,D]$ represents a discrete (artificial) time, and the Transformer hidden state $\bfX_i$ represents the state of the dynamics at time $i$. In particular, the initial condition is defined by the input $\bfX_0$ at $i=0$. The state then evolves according to the residual terms $f_{\rm attn}$ and $f_{\rm ffn}$ of (\ref{eq:self-attention}-\ref{eq:fully-connected}), which act as the velocity of the dynamics. Finally, the state at terminal time $i=D$ is the last Transformer hidden state $\bfX_D$. In this view, each Transformer block corresponds to evolving the hidden state $\bfX_i$ along the dynamics over two consecutive subintervals, one governed by the attention dynamics (\ref{eq:self-attention}) and the other by the feedforward dynamics (\ref{eq:fully-connected}).

\paragraph{Layer Normalization.} Layer normalization~\cite{ba2016layernormalization} is widely used in Transformer architectures. Given a hidden state $\bfX \in \mathbb{R}^{d \times n}$, the layer normalization operation\footnote{RMSNorm~\cite{zhang2019root} is sometimes used instead of LN. We discuss it in the Appendix, and our theory reamins the same under RMSNorm.} is applied to each of its tokens (columns) $\bfx \in \mathbb{R}^{d}$ as follows
\begin{equation}\label{eq:LN}
    {\rm LN}(\bfx; \bfgamma, \bfbeta) = \bfgamma \odot \hat{\bfx} + \bfbeta,
\end{equation}
where $\bfgamma, \bfbeta \in \mathbb{R}^d$ are trainable parameters, $\odot$ is the Hadamard element-wise product, and $\hat{\bfx}$ is given by\footnote{In practice, a small constant is added to the denominator of (\ref{eq:standardized}) to avoid division by zero; for simplicity, we omit this in our analysis.}
\begin{equation}\label{eq:standardized}
    \hat{\bfx} = \frac{\bfx - \mu}{\sigma}, \; \mu = \frac{1}{d} \sum_{l=1}^d x_l, \; \sigma = \sqrt{\frac{1}{d} \sum_{l=1}^d (x_l - \mu)^2}.
\end{equation}
Here, $x_l \in \mathbb{R}$ denotes the $l$-th entry of $\bfx$. Layer normalization is important because it regulates the magnitude of hidden states across layers. In particular, it is a known property that layer normalization projects its input onto an ellipsoid—a fact we state formally below in Lemma~\ref{lemma:ellipsoid}—providing a geometric explanation for how it prevents exploding activations in deep Transformers.

\begin{lemma}\label{lemma:ellipsoid}
    The layer normalization output $\bfz=\text{LN}(\bfx; \bfgamma, \bfbeta)$ lies on the ellipsoid 
    $$
    \mathcal{E} = \left\{ \mathbf{z} \in \mathbb{R}^d : (\mathbf{z} - \bfbeta)^\top \bfGamma^{-2} (\mathbf{z} - \bfbeta) = d \right\}, 
    $$
    where $\bfGamma = {\rm diag}(\bfgamma)\in \mathbb{R}^{d\times d}$.
\end{lemma}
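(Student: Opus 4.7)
The plan is to reduce the claim to computing $\|\hat{\bfx}\|_2^2$ directly from the definition in \eqref{eq:standardized}, and then translate that identity back to $\bfz$ using the affine relation in \eqref{eq:LN}.

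First I would observe that, by definition of the standardization, $\sum_{l=1}^d (x_l - \mu)^2 = d \sigma^2$, so
\begin{equation*}
\|\hat{\bfx}\|_2^2 \;=\; \sum_{l=1}^d \left( \frac{x_l - \mu}{\sigma} \right)^2 \;=\; \frac{1}{\sigma^2}\sum_{l=1}^d (x_l - \mu)^2 \;=\; d.
\end{equation*}
This scale-invariance of $\hat{\bfx}$ is the geometric content of the lemma: $\hat{\bfx}$ always lies on the sphere of radius $\sqrt{d}$ in $\mathbb{R}^d$.

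Next, I would rewrite the LN output in matrix form. Since the Hadamard product with $\bfgamma$ equals multiplication by $\bfGamma = \mathrm{diag}(\bfgamma)$, equation \eqref{eq:LN} gives $\bfz - \bfbeta = \bfGamma \hat{\bfx}$. Assuming $\bfgamma$ has no zero entries so that $\bfGamma$ is invertible, we get $\hat{\bfx} = \bfGamma^{-1}(\bfz - \bfbeta)$. Substituting this into the sphere identity $\hat{\bfx}^\top \hat{\bfx} = d$ yields
\begin{equation*}
(\bfz - \bfbeta)^\top \bfGamma^{-\top} \bfGamma^{-1} (\bfz - \bfbeta) \;=\; (\bfz - \bfbeta)^\top \bfGamma^{-2} (\bfz - \bfbeta) \;=\; d,
\end{equation*}
where we used that $\bfGamma$ is diagonal, hence symmetric, so $\bfGamma^{-\top}\bfGamma^{-1} = \bfGamma^{-2}$. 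This is exactly the defining equation of $\mathcal{E}$.

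There is no real obstacle here; the only subtlety worth flagging is the invertibility of $\bfGamma$, i.e.\ that no entry of $\bfgamma$ vanishes. If some $\gamma_l = 0$, the corresponding coordinate of $\bfz - \bfbeta$ is forced to be $0$ and the ellipsoid degenerates to a lower-dimensional one inside the hyperplane $\{z_l = \beta_l\}$; the statement can then be read as holding on the non-degenerate subspace, or one can simply assume $\bfgamma$ has strictly positive entries (which is the standard initialization and generically maintained during training).
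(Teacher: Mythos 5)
Your proof is correct and follows essentially the same route as the paper: both reduce the claim to the identity $\hat{\bfx}^\top \hat{\bfx} = d$ (the paper verifies it inline while you state it first), and both use the affine relation $\bfz - \bfbeta = \bfGamma \hat{\bfx}$ with $\bfGamma$ diagonal. Your remark on the invertibility of $\bfGamma$ is a reasonable extra caveat that the paper leaves implicit.
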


\paragraph{Post-LN} An important design choice in Transformer architectures is the placement of layer normalization within each Transformer block. Early Transformer architectures use Post-LN, which applies normalization to the right-hand side of \eqref{eq:self-attention}-\eqref{eq:fully-connected} after the residual connection. Since Post-LN has been replaced by alternative layer normalization strategies in modern Transformers~\cite{takase-etal-2023-b2t,naver2025periln}, we focus on the alternatives in our analysis.

\paragraph{Pre-LN}
A prevalent choice is Pre-LN, which applies layer normalization to the inputs of modules. Specifically, given input $\bfX$, the output of the module under Pre-LN is given by\footnote{Here, LN applied to a matrix is understood as a column-wise operation.}
\begin{equation}\label{eq:Pre-LN}
    f^{\rm Pre}(\bfX) = f( {\rm LN}^{\rm in}(\bfX))
\end{equation}
where the module $f \in \{ f_{\rm attn}, f_{\rm ffn}\}$ is defined in (\ref{eq:self-attention})-(\ref{eq:fully-connected}). 
While Pre-LN can stabilize gradient during early training and reduce training time~\cite{xiong2020layer}, it is observed that the corresponding hidden states can grow exponentially over layers~\cite{sun2024massive,kim2025peri}. This can lead to exploding gradients and hence training instability, especially for deep models. While prior studies have been almost entirely empirical, our work provides a theoretical analysis of this phenomenon, explaining the underlying mechanisms and quantifying the effect.

\paragraph{Peri-LN}
Peri-LN is a recently adopted layer normalization placement. It applies layer normalization to both the inputs and outputs of the modules. In particular, the output of the module under Peri-LN is
\begin{equation}\label{eq:Peri-LN}
    f^{\rm Peri}(\bfX) = {\rm LN}^{\rm out} ( f( {\rm LN}^{\rm in}(\bfX)) ),
\end{equation}
where $f \in \{ f_{\rm attn}, f_{\rm ffn}\}$.
Peri-LN was deployed in major large-scale open-source models~\cite{kim2025peri}, including Olmo2~\cite{OLMo2}, Gemma2~\cite{Gemma2}, and Gemma3~\cite{team2025gemma}. Yet, their documentation provide little explanation of this choice. The study of~\cite{kim2025peri} reports that Peri-LN improves performance over Pre-LN. They provide an intuition that Peri-LN prevents gradient explosion and observe that Peri-LN yields hidden states with more regular magnitudes. However, their findings are mostly empirical, and their theoretical analysis is limited and does not explain this phenomenon. 

\section{Forward Stability of Transformers}\label{sec:forward}
In this section, we use analytical techniques from optimal control theory
 to analyze the forward stability of Transformer models under different layer normalization placements. These results corroborate empirical findings and provide a theoretical perspective on the effects of placement.

\subsection{Transformer Training Formulation and Mean-Field Control}\label{sec:OC_formulation}
We demonstrate that the Transformer training problem corresponds to a discrete-time mean-field control problem. Specifically, the Transformer training problem is given by
\begin{align}\label{eq:discrete_time_OC_obj}
    & \min_{\bftheta} \; \mathbb{E}_{(\bfX_0, \bfy)} \; G(\bfX_D, \bfy) \\ 
    \begin{split} &\text{s.t.} \; \bfX_{i+\frac{1}{2}} = 
    \begin{cases}\label{eq:discrete_time_OC_dynamics}
    \bfX_i + f^{\rm Pre}(\bfX_{i}; \bftheta_i), & \text{for Pre-LN},\\
    \bfX_i + f^{\rm Peri}(\bfX_i; \bftheta_{i}), & \text{for Peri-LN},
    \end{cases}
\end{split}
\end{align}
for $i=0,\frac{1}{2},1,...,D-\frac{1}{2}$.
Here, the expectation is taken over the input-output pairs $(\bfX_0, \bfy)$, and $f^{\rm Pre}$ and $f^{\rm Peri}$ are defined in \eqref{eq:Pre-LN} and \eqref{eq:Peri-LN}, respectively. The loss function $G$ measures the difference between the target output $\bfy$ and the model output $\tilde{\bfy}(\bfX_D)$ in (\ref{eq:output_layer}). For example, in classification and sequence generation tasks, the softmax loss is commonly used; in regression tasks, the mean squared error is used. 

The training formulation (\ref{eq:discrete_time_OC_obj}-\ref{eq:discrete_time_OC_dynamics}) can be interpreted as a discrete-time mean-field control problem~\cite{bensoussan2013mean}, where the loss function $G$ is the terminal cost, and the Transformer block (\ref{eq:discrete_time_OC_dynamics}) is the corresponding discrete-time dynamics; see~\Cref{sec:background} for more details.
Utilizing this connection, we apply optimal control theory to derive properties of the optimal solution (the trained Transformer). In particular, the mean-field control perspective allows us to study the well-definedness of the optimality conditions and characterize 
the properties of the trained Transformer. This provides a principled way to investigate how different design choices, such as layer normalization placements, influence the trained model’s behavior.

Our novel perspective differs from prior analyses, which often focus on models at initialization or rely on empirical observations; see~\Cref{sec:related_work} for related work. By \emph{studying the model at convergence}, we provide insights that directly correspond to the quality of learned representations. Importantly, this perspective allows us to assess \emph{whether training drives Transformers toward regular solutions or pathological behaviors}.

\subsection{Unbounded Growth of Pre-LN}\label{subsec:Pre_forward}
We demonstrate that under Pre-LN, the model hidden states are generally unbounded. The findings are summarized in the following theorem.

\begin{theorem}\label{thm:Pre-LN_illposed}
    The optimal solution $f^{\rm Pre}$ to the training problem (\ref{eq:discrete_time_OC_obj}-\ref{eq:discrete_time_OC_dynamics})  is unbounded in magnitude.
\end{theorem}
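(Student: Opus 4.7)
The plan is to exploit the fact that, under Pre-LN, the per-block update $f^{\rm Pre}(\bfX;\bftheta)=f({\rm LN}(\bfX);\bftheta)$ depends on $\bfX$ only through the bounded ellipsoid ${\rm LN}(\bfX)$ of Lemma~\ref{lemma:ellipsoid}, while being positively one-homogeneous in the ``output-side'' weights (the projections $\bfW^h$ in \eqref{eq:self-attention} and the analogous final linear map of $f_{\rm ffn}$). This combination of bounded inputs and unbounded output scaling prevents the mean-field control problem \eqref{eq:cts_time_training} from admitting any bounded optimum.

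Cast \eqref{eq:cts_time_training} as a mean-field control problem and introduce the Pontryagin Hamiltonian
\[\mathcal{H}(\bfX,\bfP,\bftheta,t)=\langle \bfP,\,f^{\rm Pre}(\bfX;\bftheta)\rangle,\]
with adjoint state $\bfP(t)$ solving $\dot{\bfP}=-\partial_{\bfX}\mathcal{H}$ and terminal condition $\bfP(T)=-\mathbb{E}\,\partial_{\bfX(T)}G(\bfX(T),\bfy)$. Pontryagin's principle requires $\bftheta^*(t)$ to render $\mathcal{H}$ stationary at almost every $t$. Fix $t$ and an output projection $\bfW^h(t)$, and write $\widehat{\bfX}={\rm LN}(\bfX)$; then \eqref{eq:self-attention} yields
\[f^{\rm Pre}(\bfX;\bftheta)=\sum_{h=1}^H \bfW^h\,\bfC^h\bigl(\widehat{\bfX};\bftheta^{-h}\bigr),\]
where $\bfC^h=\bfV^h\widehat{\bfX}\,{\rm softmax}\bigl((\bfK^h\widehat{\bfX})^\top\bfQ^h\widehat{\bfX}/\sqrt{k}\bigr)$ and $\bftheta^{-h}$ collects the remaining parameters. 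Hence $\mathcal{H}$ is affine in $\bfW^h$ with slope $\bfP(t)\,\bfC^h(t)^\top$, so its optimum over $\bfW^h\in\mathbb{R}^{d\times k}$ is attained only if this slope vanishes.

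To rule out generic vanishing, I would argue that for any non-trivial loss $\bfP(T)\neq 0$ on a positive-measure data set, so that backward uniqueness of the adjoint ODE propagates $\bfP(t)\neq 0$ to all $t<T$; simultaneously, $\bfC^h(t)$, the product of $\bfV^h\widehat{\bfX}(t)$ with a column-stochastic softmax matrix, is non-zero for generic $\bfV^h$ and full-column-rank $\widehat{\bfX}(t)$. Therefore no finite $\bfW^h(t)$ renders $\mathcal{H}$ stationary on this time-set, and any minimising sequence $\{\bftheta^{(n)}\}$ must satisfy $\|\bfW^{h,(n)}(t)\|\to\infty$. By one-homogeneity of $f^{\rm Pre}$ in $\bfW^h$ together with the boundedness of ${\rm LN}(\bfX)$ (Lemma~\ref{lemma:ellipsoid}), this forces $\sup_{\bfX}\|f^{\rm Pre}(\bfX,t;\bftheta^{(n)})\|\to\infty$, yielding the theorem; the same reasoning applied to the final linear map of $f_{\rm ffn}$ covers the feed-forward blocks.

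The main obstacle will be the non-degeneracy step: one must guarantee that $\bfP(t)$ and $\bfC^h(t)$ cannot simultaneously collapse to zero, so that the affine coefficient of $\mathcal{H}$ in $\bfW^h$ cannot be cancelled by tuning the remaining parameters. Standard non-degeneracy of the data distribution, combined with real-analyticity / backward-uniqueness properties of the adjoint equation, should be sufficient; once these are in place the remaining steps follow directly from Pontryagin's principle and the one-homogeneity of $f^{\rm Pre}$ in the output-side weights exhibited above.
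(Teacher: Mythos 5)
Your proof exploits exactly the same structural fact as the paper --- $f^{\rm Pre}$ is positively one-homogeneous in the output-side linear maps while ${\rm LN}(\bfX)$ is confined to the compact ellipsoid of Lemma~\ref{lemma:ellipsoid}, so no output normalization bounds the velocity --- but you route it through Pontryagin stationarity rather than the paper's HJB/Hamiltonian-nonexistence argument. In the paper's non-parametric mean-field formulation, the control at each $(\bfX,t)$ is the velocity value $v = f^{\rm Pre}(\bfX,t)$ itself, and the admissible set of such values is all of $\mathbb{R}^{d\times n}$ because $\bfW^h$ (and the last linear layer of $f_{\rm ffn}$) can be scaled freely; hence $H(\bfP) = \sup_v\{-\langle\bfP,v\rangle\} = +\infty$ for any $\bfP\neq 0$, the HJB equation has no solution, and the training problem has no well-posed optimality condition. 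This is immediate and requires no genericity argument beyond $\nabla\Phi_\bfy\not\equiv 0$. Your parametric Pontryagin version requires isolating the specific parameter block that carries the one-homogeneity, which you do correctly via the $\bfW^h\,\bfC^h$ factorization; it buys a more concrete identification of \emph{which} weights diverge, at the price of needing a non-degeneracy certificate that the paper's route avoids entirely.

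Two concerns about the non-degeneracy step you flag as your ``main obstacle.'' First, in the mean-field control problem the parameters $\bftheta(t)$ are shared across the data distribution, so Pontryagin stationarity in $\bfW^h$ requires the \emph{expected} cross-term $\mathbb{E}_{(\bfX_0,\bfy)}\!\left[\bfP(t)\,\bfC^h(t)^\top\right]$ to vanish, not the pointwise quantity $\bfP(t)\,\bfC^h(t)^\top$; your argument that each factor is non-zero per data point does not by itself preclude cancellation of the expectation. Second, backward uniqueness of the linear adjoint equation indeed gives $\bfP(t)\neq 0$ from $\bfP(T)\neq 0$, but the adjoint equation's coefficient $\partial_\bfX \mathcal{H}$ itself involves $\bfW^h$, so the claim must be made uniformly along a minimising sequence where $\|\bfW^{h,(n)}\|\to\infty$; this is not automatic and would need a scaling argument (e.g.\ normalizing $\bfP$). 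Neither issue is fatal --- under standard genericity assumptions on the data and a non-constant loss they can almost certainly be closed --- but they represent real additional work that the paper's non-parametric Hamiltonian route sidesteps by construction.
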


In essence, \Cref{thm:Pre-LN_illposed} states that the norm of the optimal solution $\|f^{\rm Pre}\|_F$ can be arbitrarily large.
Under these $f^{\rm Pre}$'s, the hidden state trajectories can be highly winding, or the hidden states can reach the target almost instantaneously and then remain steady. Moreover, the arbitrarily large $f^{\rm Pre}$ can lead to unbounded hidden states. These highly irregular hidden states can deteriorate the representations and thus the generalizability of the model \cite{zhang2023mean,kan2025optimal}. Critically, this also leads to numerical instability during training; see~\Cref{sec:backward}.

This phenomenon has been empirically observed in~\cite{sun2024massive,kim2025peri}, but limited theoretical explanations are provided. In contrast, our theory shows that even an optimal solution to the training problem can produce unbounded hidden states. This provides a theoretical basis for the common training instabilities reported in the literature.

A detailed proof of~\Cref{thm:Pre-LN_illposed} is given in the Appendix. The central argument is to examine the optimality conditions of the training problem, which are a discrete-time Hamilton-Jacobi-Bellman (HJB) partial differential equation (PDE) coupled with a mass transport equation that characterizes the evolution of the density of $\bfX_i$'s. 
Specifically, under Pre-LN and for each target output $\bfy$, the HJB PDE is given by
\begin{align}\label{eq:HJB_main}
\begin{split}
    &\Phi_{i, \bfy}(\bfX) -\Phi_{i+\frac{1}{2}, \bfy}(T_i(\bfX)) \\ &= - \left[ \langle \nabla \Phi_{i+\frac{1}{2}, \bfy}(T_i(\bfX)), f_i(\bfX) \rangle 
    + H(-\nabla \Phi_{i+\frac{1}{2}, \bfy}(T_i(\bfX))) \right], 
\end{split}
\end{align}
where $f_i(\bfX) = f^{\rm Pre}(\bfX; \theta_i)$, and $T_i(\bfX) = \bfX + f^{\rm Pre}(\bfX; \theta_i)$ denotes the operator for the $i$-layer defined in~\eqref{eq:discrete_time_OC_dynamics}, and $\Phi_{i, \bfy}$ represents the potential function, which characterizes the optimal solution to the training problem (\ref{eq:discrete_time_OC_obj}-\ref{eq:discrete_time_OC_dynamics}) at the $i$th layer and for the target output $\bfy$. Moreover the Hamiltonian is given by
\begin{equation}\label{eq:Pre_LN_Hamiltonian}
    H(-\nabla \Phi_{i+\frac{1}{2}, \bfy}(T_i(\bfX))) = \sup_{\bfV \in \mathbb{R}^{d \times n}} \langle -\nabla \Phi_{i+\frac{1}{2}, \bfy}(T_i(\bfX)), \bfV \rangle.
\end{equation}

The HJB PDE~\eqref{eq:HJB_main} is derived using dynamic programming principle. In the HJB PDE, the Hamiltonian \eqref{eq:Pre_LN_Hamiltonian} acts as a feedback control mechanism. By taking the supremum over $\bfV$, the Hamiltonian explicitly defines the optimal velocity field for the hidden states $\bfX_i$.

The Hamiltonian~\eqref{eq:Pre_LN_Hamiltonian} under Pre-LN is unbounded above and equals infinity. Subsequently, there is no well-defined HJB PDE and hence optimality conditions which characterize the solution. This implies that the training problem is ill-posed. In particular, it admits solutions with arbitrarily large velocity fields $f^{\rm Pre}$.

An intuitive remedy for the unboundedness in \Cref{thm:Pre-LN_illposed} is to apply weight decay during training, which is standard practice in training modern Transformers. Indeed, with weight decay, the Hamiltonian exists, and the HJB PDE and thus optimality conditions are well-defined. However, while weight decay gives rise to optimality conditions, it does not completely eliminate the issue: depending on the decay magnitude, 
the hidden states can still grow exponentially across layers, as formalized below. 

\begin{theorem}\label{thm:PreLN_expo_growth}
    For a Pre-LN Transformer trained with weight decay, given an input $\bfX_0$, the mean absolute value of the terminal hidden states ${\rm MA}(\bfX_D)$ satisfies
\begin{align}\label{eq:expo_growth_discrete}
        {\rm MA}(\bfX_D) \leq  \frac{\bigl(1 + C(\lambda)\bigr)^D}{\sqrt{nd}} \, \|\bfX_0\|_F = \mathcal{O}(e^D),
    \end{align}
    where $\| \cdot \|_F$ denotes the Frobenius norm, $C$ is a constant whose magnitude depends on the weight decay hyperparameter $\lambda$.
\end{theorem}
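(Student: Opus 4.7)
The plan is to derive a per-block additive bound $\|\bfX_{i+1}\|_F \le \|\bfX_i\|_F + C(\lambda)\sqrt{nd}$, telescope it to a linear-in-$D$ Frobenius estimate, upgrade to the stated exponential form via Bernoulli's inequality, and close with a Cauchy--Schwarz conversion to mean absolute value. The proof leans on two ingredients: Lemma~\ref{lemma:ellipsoid}, which caps $\|{\rm LN}(\bfX)\|_F$ independently of $\|\bfX\|_F$, and the fact that weight decay enforces uniform bounds (in $\lambda$) on $\bfgamma$, $\bfbeta$, and on the attention/feedforward weight matrices at the optimum.

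First, I unfold one discretized Pre-LN block via \eqref{eq:Pre-LN}, \eqref{eq:self-attention}, and \eqref{eq:fully-connected} as $\bfX_{i+1} = \bfX_i + f_{\rm attn}({\rm LN}(\bfX_i)) + f_{\rm ffn}({\rm LN}(\bfU_i))$ with $\bfU_i = \bfX_i + f_{\rm attn}({\rm LN}(\bfX_i))$, so that the two residual increments can be bounded separately. Applying Lemma~\ref{lemma:ellipsoid} column-wise to each LN invocation, together with $\|\hat{\bfx}\|_2=\sqrt{d}$ from \eqref{eq:standardized}, yields $\|{\rm LN}(\cdot)\|_F \le \sqrt{n}\,(\|\bfgamma\|_\infty\sqrt{d}+\|\bfbeta\|_2) = \mathcal{O}(\sqrt{nd})$, with the hidden constant controlled by the weight-decayed $\bfgamma$, $\bfbeta$.

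Next, I bound the submodules in Frobenius norm. Writing one attention head as $\bfW_i^h\bfV_i^h\bfZ\bfS_h$ where $\bfS_h$ is the column-stochastic softmax matrix gives $\|f_{\rm attn}(\bfZ)\|_F \le \sqrt{n}\,H\max_h\|\bfW_i^h\bfV_i^h\|_{\rm op}\|\bfZ\|_F$ (using $\|\bfS_h\|_{\rm op}\le\sqrt{n}$), and the feedforward map admits an analogous operator-norm bound via Lipschitz activations. Weight decay keeps all relevant operator norms uniformly bounded, so $\|f^{\rm Pre}(\bfX_i)\|_F \le C(\lambda)\sqrt{nd}$ and, by the triangle inequality, $\|\bfX_{i+1}\|_F \le \|\bfX_i\|_F + C(\lambda)\sqrt{nd}$. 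Telescoping over $i=0,\dots,D-1$ and using the standard embedded-input normalization $\|\bfX_0\|_F \ge \sqrt{nd}$ to absorb the additive slack produces $\|\bfX_D\|_F \le (1+D\,C(\lambda))\|\bfX_0\|_F$. Bernoulli's inequality $(1+x)^D \ge 1+Dx$ for $x\ge 0$ upgrades this to $\|\bfX_D\|_F \le (1+C(\lambda))^D\|\bfX_0\|_F$, and the entrywise Cauchy--Schwarz estimate $\sum_{j,k}|X_{jk}| \le \sqrt{nd}\,\|\bfX_D\|_F$ converts the Frobenius bound to the mean-absolute-value bound claimed.

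The main obstacle I anticipate is turning the qualitative statement ``weight decay'' into an explicit, reusable constant $C(\lambda)$ simultaneously for $\bfgamma$, $\bfbeta$, and the attention/feedforward operator norms. A clean route is to invoke the first-order stationarity condition for the regularized objective: at any minimizer, $\lambda\|\bftheta\|^2$ is controlled by the data loss at $\bftheta=\vzero$, which bounds each parameter norm in terms of $\lambda$ and the initial loss value. A secondary subtlety is the operator-norm bound for the FFN block with nonsmooth activations (ReLU, GELU, SwiGLU), which should be stated under a uniform Lipschitz assumption on the activation. These are standard caveats, but they must be made explicit for the constant $C(\lambda)$ to be unambiguous and for the final bound to be sharp in its dependence on the weight-decay hyperparameter.
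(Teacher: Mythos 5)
Your proof is correct, but it takes a genuinely different route from the paper's and the comparison is instructive. The paper writes the Pre-LN update (in a simplified single-head RMSNorm setting) as a linear recursion $\mathrm{vec}(\bfX_{i+1}) = \bigl(\bfI + (\bfA_i^\top\bfD_i^{-1}\bfGamma)\otimes\bfW_i\bigr)\mathrm{vec}(\bfX_i)$, keeping the normalization factor $\bfD_i^{-1}$ as a separate multiplier on $\bfX_i$, and then bounds the product of per-step multipliers by submultiplicativity of the operator norm --- an intrinsically \emph{multiplicative} decomposition that yields the exponential factor $(1+C(\lambda))^D$ directly. You instead use $f\circ\mathrm{LN}$ as a \emph{bounded additive increment}: the input layer normalization caps $\|\mathrm{LN}(\bfX_i)\|_F$ unconditionally (Lemma~\ref{lemma:ellipsoid}), and provided weight decay bounds the attention and feedforward operator norms, each residual step contributes at most $C(\lambda)\sqrt{nd}$. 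Telescoping gives a \emph{linear} Frobenius bound, strictly sharper than the exponential bound before you weaken it by Bernoulli's inequality. This additive argument is essentially the same skeleton the paper uses in Theorem~\ref{thm:linear_growth_discrete} for Peri-LN; what differs is \emph{where the constant comes from}. For Peri-LN the increment bound follows purely from the output ellipsoid and is $2\sqrt{nd}(\gamma_{\max}+\beta_{\max})$ regardless of the module weights, whereas in your Pre-LN bound $C(\lambda)$ depends on the attention and feedforward operator norms that weight decay only indirectly and partially controls --- so the same linear-in-$D$ form hides a constant that can be arbitrarily large as $\lambda\to 0$ and degrades whenever the optimizer drifts to large weights. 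You would do well to make that asymmetry explicit, since otherwise your proof appears to erase the Pre-LN versus Peri-LN contrast the theorem is meant to illustrate.

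Two smaller items. First, the conversion from the additive bound $\|\bfX_D\|_F \le \|\bfX_0\|_F + 2DC(\lambda)\sqrt{nd}$ to the multiplicative form $(1+C(\lambda))^D\|\bfX_0\|_F$ needs the step $\sqrt{nd}\lesssim\|\bfX_0\|_F$; this is an extra assumption on the input embedding that appears neither in the theorem statement nor the paper's proof, so it should either be stated as a hypothesis or sidestepped (for instance by writing the bound additively in the statement, which would also be tighter). Second, the appeal to first-order stationarity to bound $\|\bftheta\|$ by $\lambda^{-1/2}$ times the loss at $\bftheta=\vzero$ is a clean way to make $C(\lambda)$ explicit, but it bounds $\|\bftheta\|_2$, not individual operator norms of $\bfW^h\bfV^h$ or $\bfW^{(1)},\bfW^{(2)}$; a one-line remark that the Frobenius (hence $\ell^2$) norm dominates the spectral norm closes that gap.
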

Although weight decay mitigates unboundedness, the exponential growth is still undesirable as it can lead to numerical instability. This is particularly problematic as model architectures become deeper, which is the current trend in industrial practice. Moreover, the strength of weight decay requires tuning to balance mitigation of growth and model performance.

In the next subsection, we establish that, in contrast, Peri-LN guarantees only linear growth of the hidden states and quadratic growth of their variance, offering more controlled dynamics.

\subsection{Controlled Growth of Peri-LN} 
The main difference between Pre-LN and Peri-LN is the placement of layer normalization on the module output. Intuitively, this normalization on the output prevents unbounded activations in~\Cref{thm:Pre-LN_illposed} from occurring. Indeed, output normalization restricts the velocity field for each token in the training problem~(\ref{eq:discrete_time_OC_obj}-\ref{eq:discrete_time_OC_dynamics}) to an ellipsoid (Lemma~\ref{lemma:ellipsoid}). Thus, the corresponding HJB PDE is given by \eqref{eq:HJB_main}, with $f_i(\bfX) = f^{\rm Peri}(\bfX; \theta_i)$, and $T_i(\bfX) = \bfX + f^{\rm Peri}(\bfX; \theta_i)$, and the Hamiltonian given by
\begin{equation}\label{eq:Peri_LN_Hamiltonian}
    H(-\nabla \Phi_{i+\frac{1}{2}, \bfy}(T_i(\bfX))) = \sup_{\substack{\bfV_{:, j} \in \mathcal{E}_i \\ j=1,...,n}} \langle -\nabla \Phi_{i+\frac{1}{2}, \bfy}(T_i(\bfX)), \bfV \rangle,
\end{equation}
where $\mathcal{E}_i  = \left\{ \mathbf{z} \in \mathbb{R}^d : (\mathbf{z} - \bfbeta_i^{\rm out})^\top \left(\bfGamma_i^{\rm out} \right)^{-2} (\mathbf{z} - \bfbeta_i^{\rm out}) = d \right\}$ is the ellipsoid defined by the Per-LN output layer normalization parameters at the $i$th layer. Note that the Hamiltonian~\eqref{eq:Peri_LN_Hamiltonian} is bounded and admits a bounded solution. This boundedness ensures that the HJB PDE and consequently the optimality conditions, are well-defined~\cite{zhang2023mean}. Subsequently, the enhanced well-posedness of the training problem leads to more robust and informative results for the behavior of the trained model.

We quantify the improved boundedness as follows.
\begin{theorem}[Controlled Growth of Entry-wise Moments]\label{thm:linear_growth_discrete}
    Given an input $\bfX_0$, the mean absolute value $\rm MA$ and variance $\rm Var$ of the terminal hidden states $\bfX_D$ of a Peri-LN Transformer satisfy, respectively,
    \begin{align}\label{eq:linear_growth_discrete}
        {\rm MA}(\bfX_D) &\leq \frac{1}{\sqrt{nd}} \| \bfX_0 \|_F + 2 D (\gamma_{\rm max} + \beta_{\rm max}) = \mathcal{O}(D),\\
        \begin{split}\label{eq:variance_entry_growth_discrete}
         {\rm Var}(\bfX_D) &\leq \frac{( \left\|  \bfX_{0} \right\|_F + 2D \sqrt{nd} (\gamma_{\rm max} + \beta_{\rm max}) )^2}{nd-1} \\ &= \mathcal{O}(D^2), 
    \end{split}
    \end{align}
where $\gamma_{\rm max} := \max\limits_{1 \leq i \leq D}  \{ \| \bfgamma^{\rm out}_{{\rm attn}, i} \|_\infty, \|\bfgamma^{\rm out}_{{\rm ffn}, {i+\frac{1}{2}}}\|_\infty \}$ takes the maximum over all layers and both attention and feedforward sublayers, with ``out'' denoting output layer normalization; similarly for $\beta_{\rm max}$.
\end{theorem}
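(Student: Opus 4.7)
The plan is to prove both bounds by a clean induction across the $2D$ sublayer updates of the Peri-LN Transformer --- one attention and one feedforward sublayer per block --- using the ellipsoid constraint from Lemma~\ref{lemma:ellipsoid} as the per-step ingredient. The crucial structural observation is that every Peri-LN sublayer has the form $\bfX \mapsto \bfX + \bm{\Delta}$, where each column of the residual $\bm{\Delta}$ is itself a layer-normalized vector; the skip connection then lets the triangle inequality accumulate the bound over depth without any multiplicative growth on the existing hidden state. This is the key qualitative difference from Pre-LN (\Cref{thm:Pre-LN_illposed,thm:PreLN_expo_growth}), where no output normalization bounds the residual.

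The first step is a per-sublayer estimate. By Lemma~\ref{lemma:ellipsoid}, every column $\bfz$ of $\bm{\Delta}$ lies on an ellipsoid centered at $\bfbeta$ with semi-axes at most $\gamma_{\rm max}\sqrt{d}$, which immediately gives $\|\bfz - \bfbeta\|_2 \le \gamma_{\rm max}\sqrt{d}$ and hence $\|\bfz\|_2 \le (\gamma_{\rm max} + \beta_{\rm max})\sqrt{d}$. Summing over the $n$ tokens yields the Frobenius bound $\|\bm{\Delta}\|_F \le (\gamma_{\rm max} + \beta_{\rm max})\sqrt{nd}$, and a column-wise Cauchy--Schwarz step ($\|\bfz\|_1 \le \sqrt{d}\,\|\bfz\|_2$) produces the analogous entry-wise $L^1$ estimate $\sum_{j,l}|\Delta_{jl}| \le (\gamma_{\rm max}+\beta_{\rm max})\,nd$.

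The second step is the induction. Iterating the triangle inequality across the $2D$ sublayers gives $\|\bfX_D\|_F \le \|\bfX_0\|_F + 2D(\gamma_{\rm max} + \beta_{\rm max})\sqrt{nd}$ and the analogous $L^1$ accumulation at the entry level. Dividing by $nd$ and converting the initial term with $\|\bfX_0\|_1 \le \sqrt{nd}\,\|\bfX_0\|_F$ yields~\eqref{eq:linear_growth_discrete}. For the variance bound, I would use that the empirical mean minimizes the sum of squared deviations, so $\mathrm{Var}(\bfX_D) \le \|\bfX_D\|_F^2/(nd-1)$, and substitute the Frobenius estimate to obtain~\eqref{eq:variance_entry_growth_discrete}.

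The proof is mostly careful bookkeeping rather than a deep argument: given the ellipsoid lemma, the structure of Peri-LN essentially forces the result. The only subtle point I anticipate is the norm conversion --- the ellipsoid produces a natural per-column $L^2$ bound, whereas the MA estimate requires entry-wise $L^1$ control and the variance estimate needs the Frobenius norm --- so the Cauchy--Schwarz factors $\sqrt{d}$ and $\sqrt{n}$ must be tracked carefully to combine into the clean $\sqrt{nd}$ appearing in the statement, and the factor $2D$ must reflect both the attention and feedforward sublayers rather than only block-level updates.
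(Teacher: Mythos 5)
Your proposal is correct and follows essentially the same route as the paper's proof: expand the recursion over the $2D$ sublayers, bound each residual's Frobenius norm by $\sqrt{nd}(\gamma_{\rm max}+\beta_{\rm max})$, accumulate by the triangle inequality, and convert via Cauchy--Schwarz for the mean and the fact that the centered second moment is dominated by $\|\bfX_D\|_F^2/(nd-1)$ for the variance. The only cosmetic difference is that you obtain the per-sublayer bound as a direct consequence of the ellipsoid geometry in Lemma~\ref{lemma:ellipsoid}, whereas the paper re-derives the same bound by an explicit column-wise computation on the normalized output; the two are equivalent.
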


The bounds establish that, for each input, the hidden state magnitude and variance grow at most linearly and quadratically. Moreover, we can also bound the variance of each entry across the dataset as follows.
\begin{theorem}[Quadratic Growth of Data-wise Variance]\label{thm:datawise_variance}
    Let $\bfX_0$ be an input and $\bfX_D$ its corresponding terminal hidden states. For each entry $x$ of $\bfX_D$, its variance across the data distribution satisfies
    \begin{equation}\label{eq:variance_across_data}
    \begin{split}
        {\rm Var}(x) &\leq \mathbb{E}_{\bfX_0} \left[ \left(  \|\bfX_0\|_F + 2D\sqrt{nd}(\gamma_{\rm max} + \beta_{\rm max}) \right)^2\right] \\
        &= \mathcal{O}(D^2),
    \end{split}
    \end{equation}
    where $ \gamma_{\rm max}$ and $\beta_{\rm max}$ are defined as in~\Cref{thm:linear_growth_discrete}, and the expectation is taken over $\bfX_0$ drawn from the data distribution.
\end{theorem}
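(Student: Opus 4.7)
The plan is to derive a pointwise, almost-sure upper bound on each entry of $\bfX_D$ as a function of $\bfX_0$, and then pass to the variance inequality ${\rm Var}(x) \le \mathbb{E}[x^2]$ integrated over the data distribution. The key observation, already exploited in the proof of \Cref{thm:linear_growth_discrete}, is that under Peri-LN every sublayer output is produced by an outer layer normalization and therefore lies on the ellipsoid of \Cref{lemma:ellipsoid}. This confines each coordinate of every residual update uniformly in its argument, so telescoping the Peri-LN recursion over the $2D$ sublayers yields a bound on a single entry that depends on $\bfX_0$ only through an entry-wise magnitude.

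More concretely, I would first write $\bfX_D = \bfX_0 + \sum_{i=0}^{D-1} \bigl( f_{\rm attn}^{\rm Peri}(\bfX_i) + f_{\rm ffn}^{\rm Peri}(\bfU_i) \bigr)$ and apply the triangle inequality entry by entry across the $2D$ sublayer contributions. For any coordinate of an outer-layer normalization output $\bfz = {\rm LN}^{\rm out}(\cdot)$, \Cref{lemma:ellipsoid} gives $(z_l - \beta_l)^2 \le d\,\gamma_l^2$, and hence $|z_l| \le \sqrt{d}\,\gamma_{\rm max} + \beta_{\rm max}$ uniformly in the input to the sublayer. Combining this with $|[\bfX_0]_{jk}| \le \|\bfX_0\|_F$ and the elementary inequalities $\sqrt{d} \le \sqrt{nd}$ and $1 \le \sqrt{nd}$, I obtain the pointwise bound
\begin{equation*}
|x| \le A(\bfX_0) := \|\bfX_0\|_F + 2D\sqrt{nd}(\gamma_{\rm max} + \beta_{\rm max}),
\end{equation*}
which is valid for every coordinate $x$ of $\bfX_D$ and every realization of $\bfX_0$.

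The final step converts this into the advertised variance bound: since ${\rm Var}(x) \le \mathbb{E}[x^2]$ and $A(\bfX_0) \ge 0$, the pointwise inequality yields $\mathbb{E}_{\bfX_0}[x^2] \le \mathbb{E}_{\bfX_0}[A(\bfX_0)^2]$, which is exactly the right-hand side of \eqref{eq:variance_across_data}. The $\mathcal{O}(D^2)$ rate then follows by expanding $A(\bfX_0)^2$ and noting that the only $D$-dependent summand contributes $2D\sqrt{nd}(\gamma_{\rm max}+\beta_{\rm max})$, whose square grows as $D^2$.

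The main care --- more a technical pitfall than a deep obstacle --- is to produce the entry-wise control directly instead of inheriting it from the mean-absolute-value bound \eqref{eq:linear_growth_discrete} of \Cref{thm:linear_growth_discrete}. The crude chain $|x| \le \sum_{j,k}|x_{jk}| = nd \cdot {\rm MA}(\bfX_D)$ would introduce a spurious factor of $\sqrt{nd}$ in front of $\|\bfX_0\|_F$ and degrade the constants; applying \Cref{lemma:ellipsoid} coordinate by coordinate to each Peri-LN output is what keeps the coefficient of $\|\bfX_0\|_F$ at one and recovers the stated expression.
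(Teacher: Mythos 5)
Your proposal is correct and reaches the advertised bound, but by a slightly different route than the paper. The paper's own proof never produces an entry-wise pointwise bound: it bounds the scalar $x^2$ crudely by $\|\bfX_D\|_F^2$ and then reuses the Frobenius-norm telescoping inequality $\|\bfX_D\|_F \le \|\bfX_0\|_F + 2D\sqrt{nd}(\gamma_{\rm max}+\beta_{\rm max})$ established as an intermediate step (\eqref{eq:X_D_bound_Fnorm}) in the proof of \Cref{thm:linear_growth_discrete}, followed by ${\rm Var}(x) \le \mathbb{E}[x^2] \le \mathbb{E}_{\bfX_0}\bigl[\|\bfX_D\|_F^2\bigr]$. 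You instead telescope the Peri-LN recursion coordinate by coordinate, using Lemma~\ref{lemma:ellipsoid} to get $|z_l| \le \sqrt{d}\,\gamma_{\rm max} + \beta_{\rm max}$ for each coordinate of each sublayer output, which yields the almost-sure bound $|x| \le \|\bfX_0\|_F + 2D(\sqrt{d}\,\gamma_{\rm max}+\beta_{\rm max})$; this is actually \emph{tighter} than what the Frobenius route produces, and you then loosen $\sqrt{d}\le\sqrt{nd}$, $1\le\sqrt{nd}$ to recover the exact stated constant. Both paths are elementary and land in the same place; the paper's is marginally shorter because it piggybacks on an already-derived norm bound, while yours is more self-contained and makes explicit that the coefficient in front of $\|\bfX_0\|_F$ can be kept at one without passing through $\|\bfX_D\|_F$. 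Your closing caution about the ``spurious $\sqrt{nd}$'' from the MA bound is well taken in general but is not a pitfall the paper falls into, since it bounds by $\|\bfX_D\|_F^2$ rather than $(\sqrt{nd}\cdot{\rm MA}(\bfX_D))^2$; the two coincide only as upper bounds, and the Frobenius detour does not inflate the coefficient of $\|\bfX_0\|_F$.
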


The results \eqref{eq:linear_growth_discrete}-\eqref{eq:variance_across_data} confirm the empirical findings of~\cite{kim2025peri}, and provide the missing theoretical justification. We remark again that, in contrast, both the magnitude and variance of hidden states for Pre-LN Transformers are unbounded. 

Our bounds ensure that, under Peri-LN, the hidden states (i.e., the learned representations) remain well-conditioned and avoid exploding activations. This controlled growth is particularly important to preserve the quality of representation in training deep networks. It can prevent degradation or loss of information across many layers, an undesirable behavior arising from the unboundedness of Pre-LN.

\subsection{Uncertainty Quantification} 
We next analyze how Peri-LN Transformers propagate uncertainty in the input distribution to the distribution of terminal hidden states, enabling uncertainty quantification of the outputs.

\begin{theorem}\label{thm:uq}
    Let $\mu_0$ and $\nu_0$ be any two input distributions, $\mu_D$ and $\nu_D$ denote their pushforwards to the terminal hidden states under a Peri-LN Transformer, and $W^p_p(\mu,\nu) = \inf_{\gamma \in \Gamma(\mu,\nu)} \int_{\mathbb{R}^{dn} \times \mathbb{R}^{dn}} \|\bfX-\bfX'\|_p^p d\gamma $ to be the $p$-Wasserstein distance. There exists $\hat{C}(p)$ such that for any $p \geq 1$,
    \begin{equation}\label{eq:stable_forward_propagation_discrete}
        W_p(\mu_D,\nu_D) \leq 2^{\frac{p-1}{p}} \left( \hat{C}(p)W_p(\mu_0,\nu_0) +4 D \sqrt{nd}   \gamma_{\rm max}\right).
    \end{equation}
    Here, $ \gamma_{\rm max}$ is defined as in~\Cref{thm:linear_growth_discrete}. In contrast, for Pre-LN Transformers, the difference can be unbounded.
\end{theorem}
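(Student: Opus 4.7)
The strategy is to couple the two input distributions optimally for the $W_p$ cost, flow the coupling through the same Peri-LN Transformer applied to both marginals, and track how the per-coupling-point separation evolves block by block. The key structural fact is that in every Peri-LN sublayer the outermost operation is an LN whose range lies on the bounded ellipsoid of \Cref{lemma:ellipsoid}; this yields a bound on the sublayer's output difference that is \emph{uniform in the sublayer's inputs}, rather than a Lipschitz-type bound that scales with activation magnitudes. This is the structural feature that fails for Pre-LN.

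First I would fix an optimal coupling $\pi_0 \in \Gamma(\mu_0,\nu_0)$ with $\int \|\bfX_0 - \bfX'_0\|_p^p \, d\pi_0 = W_p^p(\mu_0,\nu_0)$, push it forward under the (shared) Transformer to obtain a coupling $\pi_D$ of $(\mu_D,\nu_D)$, and use $W_p^p(\mu_D,\nu_D) \leq \int \|\bfX_D - \bfX'_D\|_p^p \, d\pi_D$ to reduce to a pointwise bound. Writing the $i$-th Peri-LN block as $\bfU_i = \bfX_i + A_i(\bfX_i)$ and $\bfX_{i+1} = \bfU_i + F_i(\bfU_i)$, with $A_i, F_i$ denoting the LN-wrapped attention and feedforward submodules of \eqref{eq:Peri-LN}, subtraction and the triangle inequality give
\begin{equation*}
\|\bfX_{i+1} - \bfX'_{i+1}\|_p \leq \|\bfX_i - \bfX'_i\|_p + \|A_i(\bfX_i) - A_i(\bfX'_i)\|_p + \|F_i(\bfU_i) - F_i(\bfU'_i)\|_p.
\end{equation*}
Applying \Cref{lemma:ellipsoid} to the outer LN of each sublayer, every column of $A_i(\cdot)$ lies on a common ellipsoid of Euclidean radius $\sqrt{d}\gamma_{\rm max}$ centered at a common $\bfbeta$, so each column difference is Euclidean-bounded by $2\sqrt{d}\gamma_{\rm max}$ and the full $d\times n$ matrix is Frobenius-bounded by $2\sqrt{nd}\gamma_{\rm max}$, independent of its inputs. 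Converting from Frobenius to the $\ell_p$ norm on $\mathbb{R}^{nd}$ via the standard $\ell_p$--$\ell_2$ equivalence contributes a $p$-dependent constant, which I would absorb into $\hat{C}(p)$; the same bound applies to $F_i$, giving $4\sqrt{nd}\gamma_{\rm max}$ per block.

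Because the identity skip passes $\bfX_i - \bfX'_i$ through unamplified, iterating over $D$ blocks telescopes to $\|\bfX_D - \bfX'_D\|_p \leq \hat{C}(p)\|\bfX_0 - \bfX'_0\|_p + 4D\sqrt{nd}\gamma_{\rm max}$ pointwise on the coupling. Raising to the $p$-th power, applying $(a+b)^p \leq 2^{p-1}(a^p+b^p)$, integrating against $\pi_0$, taking a $p$-th root, and then using $(a^p+b^p)^{1/p} \leq a+b$ produces exactly the stated bound with the $2^{(p-1)/p}$ prefactor. For the Pre-LN assertion, I would invoke~\Cref{thm:Pre-LN_illposed}: since the optimal $f^{\rm Pre}$ has unbounded magnitude, the discrete block map $\bfX_0 \mapsto \bfX_D$ admits no uniform modulus of continuity, so a sequence of admissible Pre-LN parameters with $\|f^{\rm Pre}\|_F \to \infty$ separates two fixed-distance input measures into arbitrarily-far output measures, ruling out any analogue of the Peri-LN bound.

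The main obstacle will be the careful bookkeeping of norm-conversion constants: the ellipsoid bound is naturally per-token Euclidean, and translating it to the $nd$-dimensional $\ell_p$ norm appearing in $W_p$ must be shown to produce a constant depending only on $p$ (absorbed into $\hat{C}(p)$), not on $D$, on $n$, or on the activations. A secondary subtlety is that the \emph{inner} LN inside each $A_i,F_i$ plays no role in the bound---the diameter of the sublayer's range is controlled purely by its outermost normalization---so the argument routes exclusively through \Cref{lemma:ellipsoid} applied to ${\rm LN}^{\rm out}$ rather than through any Lipschitz analysis of $f_{\rm attn}$ or $f_{\rm ffn}$.
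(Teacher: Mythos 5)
Your proposal is correct and matches the paper's proof in essence: both reduce to a pointwise estimate $\|\bfX_D-\bfX'_D\|_F \leq \|\bfX_0-\bfX'_0\|_F + 4D\sqrt{nd}\,\gamma_{\rm max}$ by exploiting that each Peri-LN sublayer's output is confined to the LN ellipsoid (so each of the $2D$ sublayer differences is bounded by $2\sqrt{nd}\,\gamma_{\rm max}$ uniformly in the inputs, the $\bfbeta$ offsets cancel, and the skip connection passes differences through unamplified), then push through an optimal $W_p$ coupling, apply $(a+b)^p\le 2^{p-1}(a^p+b^p)$, integrate, and take a $p$-th root via Minkowski, with the norm-equivalence constant absorbed into $\hat C(p)$. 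The only cosmetic difference is that you telescope block-by-block inductively and cite Lemma~\ref{lemma:ellipsoid} directly, whereas the paper uses the closed-form expanded recursion and a short direct computation of the ellipsoid bound, but these are the same argument.
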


The bound~\eqref{eq:stable_forward_propagation_discrete} provides a quantitative bound: under Peri-LN, uncertainty in the input distribution, measured in Wasserstein distance, leads to a controlled level of uncertainty in the terminal distribution, up to an additive constant. The uncertainty can be due to new data, noise, or adversarial modifications. We remark that this bound represents a worst-case scenario; in practice, the difference between terminal distributions can be significantly smaller.
In contrast, Pre-LN may amplify amplify uncertainty in the input, leading to unbounded differences, as shown in~\Cref{thm:Pre-LN_illposed}. This contrast highlights the practical advantage of Peri-LN. 

Moreover, using techniques from distributionally robust optimization (DRO), we can show that~\eqref{eq:stable_forward_propagation_discrete} leads to non-asymptotic generalization bounds for in-distribution data. Using DRO, we can also derive bounds on expected test loss for distributions within a Wasserstein ball around the training distribution, providing theoretical insights into the performance on out-of-distribution data. Due to space constraints, we defer the discussion and derivations to the Appendix.

Notably, the literature on stable architectures achieves similar uncertainty quantification bounds through techniques other than layer normalization, such as explicit regularization~\cite{kan2025optimal} and constraints on model parameters~\cite{haber2017stable,ruthotto2020deep}. To the best of our knowledge, we are the first to analyze layer normalization from this perspective.

\section{Backward Stability of Transformers}\label{sec:backward}
While forward stability bounds hidden-state growth, backward stability concerns the propagation of gradients. In deep Transformers, the gradient magnitude depends on each block’s local sensitivity. We use this aspect to study \emph{the training dynamics} of Pre- and Peri-LN Transformers. We show that Peri-LN ensures stable backpropagation, whereas Pre-LN can produce unbounded gradients for large activations. This is crucial because once gradients explode, the training process becomes unstable and can effectively fail, wasting all computation up to that point.

The gradient of the loss function with respect to $\bftheta_i$, the weights of the $i$th block, is\footnote{We drop $G$'s dependency on $\bfy$ for notational simplicity.}
\begin{align}\label{eq:backprop_grad}
\nabla_{\bftheta_i} G(\bfX_D) 
= \nabla_{\bftheta_i} \bfX_{i+1} \cdot
  \bfJ_{i:D} \cdot
\nabla_{\bfX_D} G(\bfX_D),
\end{align}
where 
\begin{equation}\label{eq:gradient_product_theorem}
    \bfJ_{i:D} =  \prod_{j=i+1}^D \left( \bfI + \nabla_{\bfX_{j-1}} f(\bfX_{j-1}; \bftheta_{j-1})\right) ,
\end{equation}
$f \in \{ f^{\rm Pre}, f^{\rm Peri}\}$, and $\nabla_{\bfX_{j-1}} f(\bfX_{j-1}; \bftheta_{j-1})$ is the local sensitivity of the $j$-th block to its input.

The identity matrix in each factor of~\eqref{eq:gradient_product_theorem} arises from the skip connections. This ensures that, even with small local sensitivity, the gradient does not vanish~\cite{he2016deep}. Thus, vanishing gradients are alleviated, and the dominant concern is whether the sensitivity can be unbounded, leading to gradient explosion.

In the following, we analyze the behavior of the local sensitivity term and its potential to cause gradient explosion under different layer normalization schemes.

\paragraph{Gradient Explosion under Pre-LN} Under Pre-LN, the local sensitivity can become unbounded as the activations grow.

\begin{proposition}\label{thm:gradient_explosion}
Under Pre-LN, the sensitivity $\nabla_{\bfX_{j-1}} f^{\rm Pre}(\bfX_{j-1}; \bftheta_{j-1})$ grows proportionally with the activations ($f_{\rm ffn}$ and $f_{\rm attn}$ in (\ref{eq:self-attention})-(\ref{eq:fully-connected})). 
\end{proposition}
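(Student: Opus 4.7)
The plan is to apply the chain rule to $f^{\rm Pre}(\bfX) = f({\rm LN}(\bfX))$ and then compare the weight-dependent scaling of the two resulting factors. Writing $\bfZ := {\rm LN}(\bfX_{j-1})$, the chain rule gives
\[
\nabla_{\bfX_{j-1}} f^{\rm Pre}(\bfX_{j-1};\bftheta_{j-1}) = \nabla_{\bfZ} f(\bfZ;\bftheta_{j-1}) \, \nabla_{\bfX_{j-1}} {\rm LN}(\bfX_{j-1}).
\]
Since the second factor depends only on $\bfX_{j-1}$ (and the LN parameters $\bfgamma,\bfbeta$, which are separate from the trainable weights inside $f$), every source of weight-driven growth must come through $\nabla_{\bfZ} f(\bfZ;\bftheta_{j-1})$. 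Moreover, by \Cref{lemma:ellipsoid}, $\bfZ$ lies on a bounded ellipsoid, so any growth in the activation $f(\bfZ;\bftheta_{j-1})$ necessarily originates in the weights of $f$, not in $\bfZ$ itself.

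First I would handle $f_{\rm attn}$ in \eqref{eq:self-attention}: the softmax factors $S_h(\bfZ) = {\rm softmax}((\bfK_i^h \bfZ)^\top \bfQ_i^h \bfZ/\sqrt{k})$ are convex combinations (hence uniformly bounded), and $f_{\rm attn}(\bfZ) = \sum_h \bfW_i^h \bfV_i^h \bfZ \, S_h(\bfZ)$ is jointly linear in $(\bfW_i^h, \bfV_i^h)$. Consequently, every term in $\nabla_{\bfZ} f_{\rm attn}$ carries a factor of $\bfW_i^h \bfV_i^h$, and scaling $\bfW_i^h$ by $c$ scales both $f_{\rm attn}(\bfZ)$ and $\nabla_{\bfZ} f_{\rm attn}(\bfZ)$ by the same $c$. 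Next, for the standard form $f_{\rm ffn}(\bfZ)=\bfW_2 \sigma(\bfW_1\bfZ+\bfb_1)+\bfb_2$, the Jacobian $\bfW_2\,\diag(\sigma'(\bfW_1\bfZ+\bfb_1))\,\bfW_1$ is linear in either $\bfW_1$ or $\bfW_2$; for positively homogeneous activations (e.g., ReLU) $\sigma'$ is scale-invariant, so rescaling either weight yields the same proportional scaling of activation and sensitivity.

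Combining the chain-rule decomposition with these two observations, the weight-independent LN-Jacobian factor merely contributes a multiplicative term that does not depend on $\bftheta_{j-1}$, so $\|\nabla_{\bfX_{j-1}} f^{\rm Pre}\|$ and $\|f^{\rm Pre}\|$ scale by the same factor $c$ along the weight directions that drive activation growth. The main obstacle will be the nonlinear dependence of $f_{\rm attn}$ on $\bfK_i^h$ and $\bfQ_i^h$ through the softmax, which prevents a clean proportionality in those parameters; the proportionality I establish therefore runs along $\bfW_i^h, \bfV_i^h$ for attention and $\bfW_1, \bfW_2$ for the FFN. Critically, by \Cref{thm:Pre-LN_illposed} these are precisely the directions whose unboundedness produces the activation explosion, so the proposition closes the loop and ties the forward blow-up to a backward gradient blow-up through \eqref{eq:backprop_grad}.
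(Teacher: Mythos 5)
Your proposal is correct and follows essentially the same route as the paper's proof: chain-rule the Pre-LN module into a weight-independent LN Jacobian (controlled by Lemma~\ref{lemma:ellipsoid}) times the module Jacobian, observe that softmax is bounded so growth must come from $\bfW_i^h, \bfV_i^h$ (attention) or $\bfW^{(1)}, \bfW^{(2)}$ (FFN), and invoke the linearity of both the attention output and its $\bfZ$-Jacobian in $\bfW_i^h\bfV_i^h$ (as in Proposition~\ref{prop:grad_attn}) to get matched scaling of activation and sensitivity. A small point in your favor: for the feedforward path you correctly pinpoint that what is needed is scale-invariance of $\sigma'$ (e.g., positively homogeneous activations such as ReLU) rather than mere boundedness; the paper's appendix phrases this only as ``$\phi'$ bounded,'' which by itself would not preclude $\sigma'$ collapsing to zero for saturating activations and thereby weakening the proportionality claim. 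Your explicit acknowledgment that proportionality runs along $\bfW_i^h,\bfV_i^h$ (not $\bfK_i^h,\bfQ_i^h$) matches the paper's implicit restriction and is a fair statement of the proposition's actual scope.
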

As a result, when large activations occur---which we show in~\Cref{subsec:Pre_forward} is possible---the product in \eqref{eq:gradient_product_theorem} can explode. This causes the gradient for preceding layers to be arbitrarily large and unstable during training.

\paragraph{Gradient Stability under Peri-LN.}
In contrast, we note that the local sensitivity is stable under Peri-LN even in the presence of large activations.

\begin{proposition}\label{thm:gradient_stability}
 Under Peri-LN, the sensitivity $\nabla_{\bfX_{j-1}} f^{\rm Peri}(\bfX_{j-1}; \bftheta_{j-1})$ is \textit{invariant} to the magnitude of the activation. 
\end{proposition}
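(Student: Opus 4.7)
The plan is to combine a chain-rule decomposition of $f^{\rm Peri}$ with the scale-invariance of layer normalization implied by Lemma~\ref{lemma:ellipsoid}. Writing $\hat{\bfX} := {\rm LN}^{\rm in}(\bfX_{j-1})$, the chain rule decomposes the sensitivity into three factors:
\begin{equation*}
\nabla_{\bfX_{j-1}} f^{\rm Peri}(\bfX_{j-1}; \bftheta_{j-1})
= \nabla {\rm LN}^{\rm out}\bigl(f(\hat{\bfX})\bigr)\cdot \nabla_{\hat{\bfX}} f(\hat{\bfX};\bftheta_{j-1})\cdot \nabla {\rm LN}^{\rm in}(\bfX_{j-1}),
\end{equation*}
and I would analyze each factor as a function of the activation magnitude $\|\bfX_{j-1}\|$.

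For the middle and outer factors, Lemma~\ref{lemma:ellipsoid} confines $\hat{\bfX}$ to an ellipsoid whose size depends only on the trainable parameters $(\bfgamma^{\rm in}, \bfbeta^{\rm in})$, and is thus independent of $\|\bfX_{j-1}\|$. Consequently $f(\hat{\bfX};\bftheta_{j-1})$ and its Jacobian $\nabla_{\hat{\bfX}} f(\hat{\bfX};\bftheta_{j-1})$ are functions of quantities invariant in $\|\bfX_{j-1}\|$, and $\nabla {\rm LN}^{\rm out}$ is evaluated at the bounded point $f(\hat{\bfX};\bftheta_{j-1})$, so it too is invariant to $\|\bfX_{j-1}\|$. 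A crisper way to capture the whole conclusion is to invoke the scale invariance of LN directly: in each token of ${\rm LN}^{\rm in}(c\bfX_{j-1})$ the factor $c>0$ cancels between the numerator and $\sigma$ in~(\ref{eq:standardized}), so $f^{\rm Peri}(c\bfX_{j-1}) = f^{\rm Peri}(\bfX_{j-1})$. Differentiating both sides in $\bfX_{j-1}$ and applying the chain rule on the left yields $\nabla f^{\rm Peri}(c\bfX_{j-1}) = (1/c)\,\nabla f^{\rm Peri}(\bfX_{j-1})$, so the sensitivity not only stays bounded but actually contracts as activations grow, in sharp contrast to the Pre-LN behavior of Proposition~\ref{thm:gradient_explosion}.

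The main technical obstacle I anticipate is the potential blow-up of $\nabla {\rm LN}^{\rm out}$ in degenerate regimes where the across-feature standard deviation of $f(\hat{\bfX};\bftheta_{j-1})$ approaches zero, since the Jacobian of LN carries a $1/\sigma$ factor. This is mitigated by the standard numerical $\epsilon$ added to $\sigma$, noted in the footnote to~(\ref{eq:standardized}), and I would include a brief remark to clarify that the conclusion holds uniformly with this safeguard, so the argument reflects the LN implementations actually used in practice.
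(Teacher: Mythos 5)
You have read the proposition as a statement about the size of the \emph{input} hidden states $\bfX_{j-1}$, but in context ``the activation'' refers to the unnormalized sublayer outputs $f_{\rm attn}$ and $f_{\rm ffn}$ named explicitly in \Cref{thm:gradient_explosion}. These blow up not because $\|\bfX_{j-1}\|$ is large (the input is normalized by ${\rm LN}^{\rm in}$, so it cannot be) but because the weight matrices $\bfW_i^h, \bfV_i^h$ or $\bfW_i^{(1)}, \bfW_i^{(2)}$ are large. This misreading causes two concrete breakdowns in your argument.

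First, you assert that the middle factor $\nabla_{\hat{\bfX}} f(\hat{\bfX};\bftheta_{j-1})$ is a ``function of quantities invariant in $\|\bfX_{j-1}\|$'' and that ``$\nabla {\rm LN}^{\rm out}$ is evaluated at the bounded point $f(\hat{\bfX};\bftheta_{j-1})$''. That is true with respect to $\|\bfX_{j-1}\|$, but it is precisely the wrong variable: $\nabla_{\hat{\bfX}} f$ scales linearly in the weights (\Cref{prop:grad_attn}, and trivially for the feedforward case), and $f(\hat{\bfX};\bftheta_{j-1})$ is not bounded as the weights grow. The actual content of the proposition, and the cancellation carried out in the paper's proof, is that rescaling $\bfW\to c_1\bfW$, $\bfV\to c_2\bfV$ multiplies $\nabla_{\hat{\bfX}} f$ by $c_1 c_2$ and simultaneously multiplies the activation by $c_1 c_2$, so that by \Cref{prop:grad_LN} the outer factor $\nabla {\rm LN}^{\rm out}$ picks up exactly the compensating $1/(c_1 c_2)$; this cancellation is what makes $\nabla_{\bfX_{j-1}} f^{\rm Peri}$ invariant. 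Your decomposition has the right three factors, but the cancellation you identify is between the \emph{inner} ${\rm LN}^{\rm in}$ and the input scale, not between $\nabla f$ and the \emph{outer} ${\rm LN}^{\rm out}$.

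Second, the conclusion you derive, $\nabla f^{\rm Peri}(c\bfX_{j-1}) = (1/c)\,\nabla f^{\rm Peri}(\bfX_{j-1})$, is not invariance but $1/c$-contraction; and this homogeneity comes entirely from ${\rm LN}^{\rm in}$, which Pre-LN also has. So the mechanism you isolate cannot be the one that distinguishes Peri-LN from Pre-LN, which is the whole point of \Cref{thm:gradient_explosion,thm:gradient_stability} taken together. Your final remark about a $1/\sigma$ blow-up in degenerate regimes points at the right object ($\nabla {\rm LN}^{\rm out}$ does carry a $1/\sigma$ factor) but in the wrong direction: in the regime of interest $\sigma$ is \emph{large} because the activations are large, and the $1/\sigma$ is exactly the source of the stabilizing cancellation, not a pathology to be patched by $\epsilon$. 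To fix the proof, replace the input-rescaling argument with a weight-rescaling argument and use \Cref{prop:grad_LN} applied to ${\rm LN}^{\rm out}$ evaluated at $c f(\hat{\bfX};\bftheta_{j-1})$.
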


Importantly, by Proposition~\ref{thm:gradient_stability}, even when a large activation occurs, the sensitivity remains at its nominal magnitude. This invariance is especially critical in deep networks, where $\bfJ_{i:D}$ involves a product of many terms: by having each term invariant, Peri-LN helps control the compounding effect that could otherwise lead to gradient explosion. This facilitates more stable backpropagation of training signals and improves the stability of training dynamics.

\section{Improved Stability via Scaled Residual Steps}\label{sec:delta_t}
Guided by our forward and backward stability analysis, we consider a generalized discrete-time dynamics with time increment $\Delta t$, which reads
\begin{align*}
     \bfX_{i+\frac{1}{2}} &= \bfX_i + \Delta t \cdot f_{\rm attn}(\bfX_i; \bftheta^{\rm attn}_i) 
     \\
     \bfX_{i+1} &= \bfX_{i+\frac{1}{2}} + \Delta t \cdot f_{\rm ffn}(\bfX_{i+\frac{1}{2}} ;\bftheta^{\rm ffn}_{i+\frac{1}{2}}). 
\end{align*}
Here, $f_{\rm attn}$ and $f_{\rm ffn}$ are defined in \eqref{eq:self-attention}-\eqref{eq:fully-connected}. 
Remark that $\Delta t=1$ recovers the standard Transformer blocks (\ref{eq:self-attention})-(\ref{eq:fully-connected}). We consider the case when the time increment is shorter $\Delta t<1$, which scales the magnitude of each residual update. 

\paragraph{Improved Forward Stability.} 
This modification improves the forward stability bounds for Peri-LN. For instance, it sharpens the bound on output uncertainty~\eqref{eq:stable_forward_propagation_discrete}: for two input distributions $\mu_0$ and $\nu_0$, the corresponding output distributions now satisfy
\begin{equation}\label{eq:forward_stable_improved}
        W_p(\mu_D,\nu_D) \leq 2^{\frac{p-1}{p}} \left( \hat{C}(p)W_p(\mu_0,\nu_0) +4 \Delta t D \sqrt{nd}   \gamma_{\rm max}\right).
    \end{equation}
Here, $\Delta t<1$ explicitly scales the constant term. 
The scaling reduces how each sub-layer amplifies differences, thereby limiting their growth.
This sharper bound leads to improved generalization bounds for both in-distribution and out-of-distribution; details are provided in the Appendix. 

In the same way, the modification also sharpens the bounds on hidden state growth~\eqref{eq:linear_growth_discrete}-\eqref{eq:variance_across_data} by scaling the $D$ dependent constant term with $\Delta t<1$, thereby controlling the growth rate.

\paragraph{Improved Backward Stability.}
With the modification, the backpropagated gradient is given by~\eqref{eq:backprop_grad}, where the matrix $\bfJ_{i:D}$ is given by
\begin{equation}
\label{eq:gradient_product_improved}
    \bfJ_{i:D} =  \prod_{j=i+1}^D \left( \bfI + \Delta t \cdot \nabla_{\bfX_{j-1}} f(\bfX_{j-1}; \bftheta_{j-1})\right) ,
\end{equation}
where $f \in \{ f^{\rm Pre}, f^{\rm Peri} \}$. The factor $\Delta t < 1$ scales the local sensitivity of each block, mitigating the potential for gradient explosion and thus improving training stability for both Pre-LN and Peri-LN Transformers.

\paragraph{Overall Stabilization.} Overall, applying $\Delta t<1$ provides a simple yet practical mechanism to control both forward and backward signal propagation, complementing the inherent stabilizing effect of Peri-LN, \emph{at no additional computational or memory cost}. We emphasize again that these two forms of stabilization correspond to distinct aspects: forward stability determines the hidden-state growth of the trained Transformer, while backward stability impacts the stability of the training dynamics.

\section{Experimental Results}
\label{sec:experiments}
In this section, we present experimental results that support the theoretical findings of this work.

We experiment with the GPT-2 family of architectures using the implementation provided in~\cite{Karpathy2022}. We consider three models, GPT-2, GPT-2 Large, and GPT-2 XL, with sizes ranging from $100$M to $1.5$B parameters. We perform pretraining on OpenWebText dataset, which was originally curated in~\cite{Gokaslan2019OpenWeb} 
and includes approximately 9 billion training tokens and 4 million validation tokens.

To isolate the effect of layer normalization and residual scaling, we use hyperparameters (weight decay, learning rate, etc.) tuned for Pre-LN models and do not optimize them for Peri-LN. Even without hyperparameter tuning, Peri-LN achieves competitive performance, demonstrating our theoretical claims are robust and not reliant on cherry-picked results.

All experiments are conducted using NVIDIA H200 GPUs. The code and trained models will be made publicly available upon publication.

\paragraph{Training Stability Test}
We compare the numerical stability of Pre-LN and Peri-LN models under different weight decay settings by counting the number of diverged runs over 5 trials on the base GPT-2 model. We train for $20$K iterations, which is sufficient to demonstrate significant differences. 
As shown in~\Cref{tab:blowup_results}, Pre-LN training can diverge even when all hyperparameters, including weight decay, are properly selected.  In addition, weight decay alone is not sufficient to guarantee stability. In contrast, Peri-LN models remain stable in all trials, highlighting their robust training regardless of weight decay.

\begin{table}[htbp]
\centering
\caption{Diverged run count across 5 trials. The performance of the converged runs only is reported in~\Cref{tab:gpt2_main}
}
\begin{tabular}{@{}lcc@{}}
\toprule
LN Setting & Weight Decay On & Weight Decay Off \\
\midrule
Pre-LN   & $1$ out of $5$ & $3$ out of $5$ \\
Peri-LN  & $\bf 0$ \textbf{out of} $\bf 5$ & $\bf 0$ \textbf{out of} $\bf 5$ \\
LN Off   & $5$ out of $5$ & ---   \\
\bottomrule
\end{tabular}
\label{tab:blowup_results}
\end{table}

\begin{table*}[t]
\centering
\caption{Model performance comparison for different choices of GPT-2 variants, LN types and $\Delta t$ scaling. \\
Diverged runs for Pre-LN are not reported. Even under sub-optimal hyperparameters, Peri-LN achieve on-par performance with Pre-LN while being significantly more stable; see also~\Cref{tab:blowup_results}}
\renewcommand{\arraystretch}{1.14}
\small
\begin{tabular}{lcccccccccc}
\toprule
Size & LN type & $\Delta t$ & Val Loss & Perplexity & Rouge1 & Rouge2 & RougeL & BertP & BertR & BertF1 \\
\midrule
\multirow{4}{*}{$124$M} 
  & Pre-LN  & $1$   & $5.43$ & $247.52$ & $37.50\%$ & $9.62\%$ & $23.94\%$ & $87.40\%$ & $85.38\%$ & $86.38\%$ \\
  & Pre-LN  & $0.1$ & $3.13$ & $24.43$  & $62.45\%$ & $25.48\%$ & $42.70\%$ & $90.27\%$ & $89.75\%$ & $90.00\%$ \\
  & Peri-LN & $1$   & $3.12$ & $24.17$  & $62.26\%$ & $25.42\%$ & $42.80\%$ & $90.25\%$ & $89.72\%$ & $89.99\%$ \\
  & Peri-LN & $0.1$ & $\bf 3.10$ & $\bf 23.63$ & $\bf 62.59\%$ & $\bf 25.71\%$ & $\bf 43.01\%$ & $\bf 90.28\%$ & $\bf 89.76\%$ & $\bf 90.02\%$ \\
\midrule
\multirow{4}{*}{$774$M}  
  & Pre-LN  & $1$   & $2.90$ & $19.44$ & $63.83\%$ & $27.20\%$ & $44.95\%$ & $90.44\%$ & $89.98\%$ & $90.21\%$ \\
  & Pre-LN  & $0.1$ & $2.91$ & $19.61$ & $63.80\%$ & $27.17\%$ & $44.93\%$ & $90.43\%$ & $90.00\%$ & $90.20\%$ \\
  & Peri-LN & $1$   & $2.91$ & $19.70$ & $63.63\%$ & $27.00\%$ & $44.88\%$ & $90.44\%$ & $89.98\%$ & $90.21\%$ \\
  & Peri-LN & $0.1$ & $\bf 2.89$ & $\bf 19.24$ & $\bf 63.89\%$ & $\bf 27.29\%$ & $\bf 45.09\%$ & $\bf 90.47\%$ & $\bf 90.02\%$ & $\bf 90.24\%$ \\
\midrule
\multirow{4}{*}{$1.5$B}
  & Pre-LN  & $1$   & $2.88$ & $19.14$ & $64.13\%$ & $27.46\%$ & $45.19\%$ & $90.49\%$ & $90.07\%$ & $90.28\%$ \\
  & Pre-LN  & $0.1$ & $2.88$ & $19.14$ & $64.14\%$ & $27.49\%$ & $45.19\%$ & $90.49\%$ & $90.06\%$ & $90.27\%$ \\
  & Peri-LN & $1$   & $2.89$ & $19.36$ & $64.04\%$ & $27.35\%$ & $45.09\%$ & $90.50\%$ & $90.08\%$ & $90.29\%$ \\
  & Peri-LN & $0.1$ & $\bf 2.87$ & $\bf 18.93$ & $\bf 64.21\%$ & $\bf 27.58\%$ & $\bf 45.30\%$ & $\bf 90.52\%$ & $\bf 90.09\%$ & $\bf 90.30\%$ \\
\bottomrule
\end{tabular}
\label{tab:gpt2_main}
\end{table*}

\begin{figure*}[t]  
    \centering
    \includegraphics[width=0.98\textwidth]{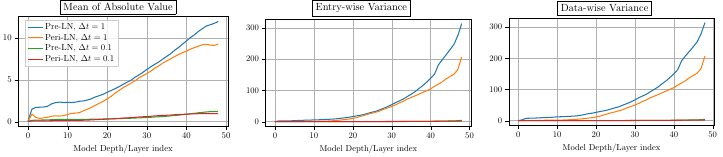}
        \caption{Moments of hidden states across layers for the trained GPT-2 XL. With tuned weight decay for Pre-LN, the growth rate remains below the theoretical exponential upper bound~\eqref{eq:expo_growth_discrete}; exponential growth is observed in, e.g., \cite{kim2025peri}. The residual step scaling (see \Cref{sec:delta_t})  effectively controls the growth at no extra cost.}
    \label{fig:gpt2_xl_growth}
\end{figure*}

We also conduct experiments on the GPT-2 variants under difference layer normalization strategies and residual step scalings. For the base GPT-2 model, we use $100$k iterations, and for the GPT-2 Large and XL variants, we use $60$k iterations. We evaluate the performance using several widely adopted LLM metrics and report the results in~\Cref{tab:gpt2_main}. 

From the top row of~\Cref{tab:gpt2_main}, we see again that Pre-LN can suffer from training instability, leading to poor performance even when all hyperparameters are tuned. While Peri-LN does not consistently improve performance, it does not degrade performance relative to Pre-LN. These results justify Peri-LN's adoption, particularly given that the experiments use hyperparameters chosen for Pre-LN and that Peri-LN provides significantly enhanced training stability; see~\Cref{tab:blowup_results}.

\paragraph{Benefits of Scaled Residual Steps.}
We highlight the performance and stability improvements achieved through using the residual steps scaling, validating our analysis in~\Cref{sec:delta_t}. 

First, we see that combining Peri-LN with residual scaling $\Delta t = 0.1$ consistently yields the best performance across all metrics, corroborating our theoretical analysis on improved generalization bounds~\eqref{eq:forward_stable_improved}.

Second, from~\Cref{tab:gpt2_main}, the Pre-LN model with $\Delta t=1$ suffers from training instability and thus degraded performance. By reducing the residual scaling to $\Delta t = 0.1$, the model becomes stable and achieves much better performance, demonstrating its effectiveness. This aligns with the improved gradient sensitivity~\eqref{eq:gradient_product_improved}.

Third, from~\Cref{fig:gpt2_xl_growth}, we see that the residual scaling effectively controls the growth of hidden states, in terms of both their mean absolute value and variance, across layers. In particular, when $\Delta t = 0.1$, the hidden states grow at a substantially lower rate, resulting in more regular hidden state dynamics while maintaining model performance.
The observation is consistent with our analysis in~\Cref{sec:delta_t}.

Finally, we emphasize again that the residual step scaling, which requires only a simple code modification, provides the above three benefits \emph{without any additional computational or memory cost,} making it a compelling technique in practice. This practical effectiveness further highlights the value of the continuous-time perspective, which not only yields theoretical insights but also guides useful architectural modifications.

\section{Discussion}
In this paper, we presented a novel theoretical framework grounded in optimal control theory to study the effects of layer normalization placements on Transformer stability. Our theory explains the empirical observations reported in the literature, which have previously lacked sufficient theoretical understanding. Building on these insights, we introduce a residual step scaling that enhances the stability and performance of Transformers. Moreover, our experiments show that even under tuned hyperparameters, Pre-LN Transformers can still exhibit training instability. In contrast, Peri-LN models achieve comparable performance while remaining stable throughout training, even when the hyperparameters are suboptimal. 

Looking forward, our framework provides a principled workflow to determine whether new architectural modifications lead to regular trained models. In particular, it serves as theoretical criteria for screening architectures before expensive empirical training, thereby guiding the design of future Transformers.

\ifCLASSOPTIONcaptionsoff
  \newpage
\fi

\bibliographystyle{IEEEtran}
\bibliography{iclr2025_conference}

\newpage
\onecolumn

\appendices
\section{Related Work}\label{sec:related_work}
Layer normalization~\cite{ba2016layernormalization} has become a standard component of virtually all Transformer architectures, playing a critical role in their stability and performance. The original Transformer employed Post-LN~\cite{Wang2019LearningDT}. It has later been reported that Post-LN requires delicate optimization scheduling~\cite{Popel2018TrainingTF,liu2019variance} to achieve stable training and often yields sub-par performance~\cite{xiong2020layer,kim2025peri}.

Pre-LN~\cite{baevski2018adaptive,child2019generating,wang-etal-2019-learning-deep}, a prominent alternative, eliminates the need for careful optimization scheduling and has been shown to achieve improved performance~\cite{nguyen-salazar-2019-Transformers,xiong2020layer}. 

However, it has been widely observed that Pre-LN Transformers are prone to excessively large activations~\cite{dettmers2022gptint,yu2024super,sun2024massive,fishman2025scaling,kim2025peri}, especially in deep and large models. These activations can deteriorate the quality of the learned representation~\cite{kim2025peri}. While existing studies are mostly empirical, or focus on models at initialization~\cite{xiong2020layer,kedia2024Transformers}, our theory provides a principled explanation of why large activations arise for trained Pre-LN models.

To the best of our knowledge, Peri-LN was first used in~\cite{ding2021cogview}, where it was used as an ad-hoc method to stabilize training. Recently, Peri-LN has been deployed in major open-source packages including Olmo2~\cite{OLMo2}, Gemma2~\cite{Gemma2}, and Gemma3~\cite{team2025gemma}. But their documentations provide little explanation and performance comparison. In~\cite{kim2025peri}, a systematic empirical study comparing the three layer normalization strategies was conducted, demonstrating the empirical advantages of Peri-LN. However, since Peri-LN is still relatively new in widespread use, there has been little theoretical analysis of it. In this work, we address this gap by performing a theoretical analysis on the stability and performance benefits of Peri-LN.

\section{Diagnostic Workflow Before Training}
\label{sec:workflow}

The mathematical analysis developed in this paper suggests a systematic workflow for architectural stability analysis and training diagnostics of Transformer architecture variants. This procedure applies to new architecture (e.g., changes in normalization placement, residual scalings, or weight decay), and provides theoretical criteria for screening architectures prior to expensive empirical training.

\paragraph{Step 1: Well-posedness via HJB theory.} 
We first cast the training problem for the proposed architecture as a continuous-time mean-field control problem, analogous to~(9). The existence of a Hamiltonian 
of the associated Hamilton–Jacobi–Bellman (HJB) equation provides a necessary certificate that the training is a well-posed control problem. Theorem~2 shows that this condition fails for Pre-LN, whereas Theorem~4 establishes well-posedness for Peri-LN through bounds on the possible ``velocity fields'' in the Transformer architecture.

\paragraph{Step 2: Forward stability analysis.} 
Conditional on well-posedness in Step 1, Theorems~4--5 provide a means to assess forward stability by bounding the growth of hidden states. This step allows one to identify whether the architecture exhibits controlled, linear/quadratic growth, as in Peri-LN, or exponential growth, as in Theorem~3 for Pre-LN. Moreover, the discretization analysis in Section \ref{sec:delta_t} reveals that scaling residual updates with a factor $\Delta t<1$ sharpens the forward stability bounds by reducing amplification across layers, at no additional computational cost. This provides a simple and universal stabilization knob that can be applied before training.

\paragraph{Step 3: Backward stability analysis.} 
Propositions~7--8 enable a local sensitivity analysis for the backward pass, independent of Step~2, by examining the Jacobian structure of each block and its interaction with layer normalization. Assessing how sensitivities scale with activations reveals whether gradients remain bounded during backpropagation. Combined with Step~2, this yields a comprehensive diagnostic: architectures with both large activations and sensitivity growth are especially prone to gradient explosion. The same $\Delta t < 1$ scaling effectively reduces local sensitivities in each block, mitigating gradient growth and enhancing training stability.

\paragraph{Step 4: Uncertainty quantification.} 
Theorem~6 provides a quantitative Wasserstein bound: input uncertainty from new data, noise, or adversarial perturbations leads to a controlled level of uncertainty in the terminal representation. This worst-case estimate highlights the robustness of Peri-LN, whereas Pre-LN may amplify input uncertainty and produce unbounded differences (Theorem~2). Because this analysis depends only on the forward dynamics (Step~2), it applies to any proposed architecture to assess in- and out-of-distribution stability before training.

This four-step workflow can be applied to candidate architectures before pretraining. It offers a mathematically grounded diagnostics to identify and discard ill-posed or unstable designs, complementing empirical architecture search and reducing the need for costly simulations. 

\section{Properties of Layer Normalization}
We prove properties for layer normalization operations which will be used in our derivations later.

\paragraph{Layer Normalization} We first recall the definition of the layer normalization operation introduced in~\Cref{sec:OC_formulation}. Given a hidden state $\bfX \in \mathbb{R}^{d \times n}$, layer normalization applies to each of its tokens (columns) $\bfx \in \mathbb{R}^d$ as follows
\begin{equation}\label{eq:LN_appendix}
    {\rm LN}(\bfx; \bfgamma, \bfbeta) = \bfgamma \odot \hat{\bfx} + \bfbeta,
\end{equation}
where $\bfgamma, \bfbeta \in \mathbb{R}^d$ are trainable parameters, $\odot$ is the Hadamard element-wise product, $\hat{\bfx}$ is given by
\begin{equation}\label{eq:standardized_appendix}
    \hat{\bfx} = \frac{\bfx - \mu}{\sigma}, \quad \text{with} \quad \mu = \frac{1}{d} \sum_{l=1}^d x_l, \quad \sigma = \sqrt{\frac{1}{d} \sum_{l=1}^d (x_l - \mu)^2}.
\end{equation}

In the following proposition, we show that the output of the layer normalization always lies on an ellipsoid. We first denote $\bfGamma := \textnormal{diag}(\gamma_1, \gamma_2, ..., \gamma_d) \in \mathbb{R}^{d\times d}$, where $\gamma_i$ is the $i$th entry of $\bfgamma$.

\settheoremnumber{lemma:ellipsoid}
\begin{lemma}
    The LayerNorm output $\bfz=\text{LayerNorm}(\bfx; \bfgamma, \bfbeta)$ always lies on the ellipsoid 
    $$
    \mathcal{E} = \left\{ \mathbf{z} \in \mathbb{R}^d : (\mathbf{z} - \bfbeta)^\top \bfGamma^{-2} (\mathbf{z} - \bfbeta) = d \right\},
    $$
    where
    \begin{equation}\label{eq:Sigma_diag}
    \bfGamma^{-2} = \textnormal{diag}(\gamma_1^{-2}, \gamma_2^{-2}, ..., \gamma_d^{-2}) \in \mathbb{R}^{d\times d}.
    \end{equation} 
\end{lemma}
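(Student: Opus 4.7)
The plan is to verify the ellipsoid equation by direct algebraic substitution, exploiting the diagonal structure of $\bfGamma$ and the built-in normalization of $\hat{\bfx}$. The proof is short, so rather than working up to the statement via auxiliary lemmas, I would simply evaluate the quadratic form $(\bfz-\bfbeta)^\top\bfGamma^{-2}(\bfz-\bfbeta)$ and show it equals $d$.

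First, starting from \eqref{eq:LN_appendix}, I would write the Hadamard product as matrix multiplication with the diagonal matrix $\bfGamma$, i.e.\ $\bfgamma\odot\hat{\bfx}=\bfGamma\hat{\bfx}$. This gives the clean identity $\bfz-\bfbeta=\bfGamma\hat{\bfx}$, which removes $\bfbeta$ from the problem and reduces the claim to a statement about $\hat{\bfx}$ alone.

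Next, I would substitute this into the quadratic form and simplify: since $\bfGamma$ is diagonal (hence symmetric), $(\bfGamma\hat{\bfx})^\top\bfGamma^{-2}(\bfGamma\hat{\bfx})=\hat{\bfx}^\top\bfGamma\bfGamma^{-2}\bfGamma\hat{\bfx}=\hat{\bfx}^\top\hat{\bfx}=\|\hat{\bfx}\|_2^2$. The $\bfGamma$ factors collapse exactly because $\bfGamma^{-2}$ in \eqref{eq:Sigma_diag} is defined entrywise as $\gamma_l^{-2}$, matching the diagonal form needed for this cancellation.

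Finally, I would evaluate $\|\hat{\bfx}\|_2^2$ directly using \eqref{eq:standardized_appendix}: $\|\hat{\bfx}\|_2^2=\sum_{l=1}^d\bigl(\tfrac{x_l-\mu}{\sigma}\bigr)^2=\sigma^{-2}\sum_{l=1}^d(x_l-\mu)^2=\sigma^{-2}\cdot d\sigma^2=d$, where the last equality uses the definition of $\sigma^2$ as the sample variance with divisor $d$. This shows $\bfz\in\mathcal{E}$. There is no real obstacle here; the only subtlety is making sure the quadratic form is set up with $\bfGamma^{-2}$ so that the $\bfGamma$ factors from $\bfz-\bfbeta=\bfGamma\hat{\bfx}$ cancel cleanly, and implicitly assuming $\sigma>0$ (equivalently, $\bfx$ is not a constant vector) so that $\hat{\bfx}$ is well-defined, which is the same nondegeneracy tacitly assumed throughout the paper.
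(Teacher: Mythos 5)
Your proof is correct and takes essentially the same route as the paper's: both reduce $(\bfz-\bfbeta)^\top\bfGamma^{-2}(\bfz-\bfbeta)$ to $\hat{\bfx}^\top\hat{\bfx}$ via the diagonal structure of $\bfGamma$ and then evaluate $\|\hat{\bfx}\|_2^2=d$ from the definition of $\sigma$. Writing the Hadamard product as $\bfGamma\hat{\bfx}$ is a minor notational variant of the paper's step but does not change the argument.
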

\restoretheoremnumber

\begin{proof}
        Let $\bfz=\text{LN}(\bfx; \bfgamma, \bfbeta)$ for $\bfx \in \mathbb{R}^d$. We have
\begin{align*}
    (\bfz - \bfbeta)^\top \bfGamma^{-2} (\bfz - \bfbeta) &= (\bfgamma \odot \hat{\bfx})^\top \bfGamma^{-2} (\bfgamma \odot \hat{\bfx}), \quad &\text{by~(\ref{eq:LN_appendix})}, \\
    &= \hat{\bfx}^\top \hat{\bfx}, \quad &\text{by~(\ref{eq:Sigma_diag})}, \\
    &= \left( \frac{\bfx - \mu}{\sigma}\right)^\top \left( \frac{\bfx - \mu}{\sigma}\right), \quad &\text{by definition of $\hat{\bfx}$ in (\ref{eq:standardized_appendix})}, \\
    &= \frac{1}{\sigma^2} \sum_{l=1}^d (x_l - \mu)^2 \\
    &= d, \quad &\text{by definition of $\sigma$ in (\ref{eq:standardized_appendix})}.
\end{align*}
Thus, $\bfz \in \mathcal{E}$.
\end{proof}

\begin{lemma}\label{prop:grad_LN}
    The gradient of $\rm LN$ with respect to $\bfx$ is given by 
    $$
    \nabla {\rm LN}(\bfx; \bfgamma, \bfbeta) = \frac{{\rm diag}(\bfgamma)}{\sigma} - \frac{1}{d} \frac{(\bfx-\mu)(\bfgamma \odot (\bfx-\mu))^\top}{\sigma^3}. 
    $$
    Moreover, for any $c > 0$, 
    $$
    \nabla {\rm LN}(c\bfx; \bfgamma, \bfbeta) = \frac{1}{c}\nabla {\rm LN}(\bfx; \bfgamma, \bfbeta).
    $$
\end{lemma}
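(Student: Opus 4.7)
The plan is to prove the two claims in sequence: (i) derive the closed form for the Jacobian by a direct chain-rule computation, and (ii) obtain the scaling identity as an immediate consequence of the degree-one homogeneity of $\mu$ and $\sigma$ in $\bfx$.

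For the first part, I would expand the $i$-th component $y_i(\bfx) = \gamma_i (x_i - \mu)/\sigma + \beta_i$ and apply the chain rule entry-wise, tracking the three sources of $x_j$-dependence: the explicit $x_i$, the mean $\mu(\bfx)$, and the standard deviation $\sigma(\bfx)$. Routine differentiation gives $\partial \mu/\partial x_j = 1/d$ and, after expanding $\sigma^2 = \frac{1}{d}\sum_l (x_l-\mu)^2$, the cross term $\sum_l(x_l-\mu)\,\partial\mu/\partial x_j$ vanishes because $\sum_l(x_l-\mu)=0$, leaving $\partial\sigma^2/\partial x_j = (2/d)(x_j-\mu)$ and hence $\partial\sigma/\partial x_j = (x_j-\mu)/(d\sigma)$. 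Substituting these into the quotient-rule expansion for $\partial y_i/\partial x_j$ and collecting terms produces a diagonal contribution $\gamma_i\delta_{ij}/\sigma$ together with a rank-one correction of the form $\gamma_i(x_i-\mu)(x_j-\mu)/(d\sigma^3)$. Re-assembling in matrix form, the diagonal part becomes $\mathrm{diag}(\bfgamma)/\sigma$ and the rank-one correction becomes the outer product $(\bfx-\mu)(\bfgamma\odot(\bfx-\mu))^\top/(d\sigma^3)$, matching the stated expression.

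For the second part, I would simply observe that $\mu$ and $\sigma$ are positively homogeneous of degree one in $\bfx$: $\mu(c\bfx)=c\mu(\bfx)$ and $\sigma(c\bfx)=c\sigma(\bfx)$ for $c>0$, so the standardized vector $\hat{\bfx}$ is invariant under positive rescaling and therefore $\mathrm{LN}(c\bfx;\bfgamma,\bfbeta)=\mathrm{LN}(\bfx;\bfgamma,\bfbeta)$. Differentiating both sides in $\bfx$ via the chain rule gives $c\,\nabla\mathrm{LN}(c\bfx;\bfgamma,\bfbeta)=\nabla\mathrm{LN}(\bfx;\bfgamma,\bfbeta)$, which rearranges to the claimed identity. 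As a consistency check, one can also verify the identity directly from the explicit formula derived in the first step by substituting $\bfx\mapsto c\bfx$, since every surviving term is homogeneous of degree $-1$ in $\bfx$ (the diagonal term picks up $1/\sigma\to 1/(c\sigma)$, and the rank-one term picks up a net factor $c\cdot c/c^3 = 1/c$).

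The computation is essentially bookkeeping; the only mildly delicate point is the cancellation $\sum_l(x_l-\mu)=0$ that removes what would otherwise be a more complicated dependence of $\sigma$ on $x_j$. I would take care with the convention used for $\nabla$ (Jacobian versus its transpose) to ensure the outer product is written in the same orientation as in the statement, but no substantive obstacle arises.
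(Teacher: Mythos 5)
Your approach to the first part is the same as the paper's: differentiate the $i$-th component $[\mathrm{LN}(\bfx)]_i = \gamma_i(x_i-\mu)/\sigma + \beta_i$ by the chain rule and assemble columnwise. However, there is an internal inconsistency in your write-up that points to a genuine gap in the lemma statement itself (and in the paper's proof). You explicitly list the mean $\mu(\bfx)$ as one of the three sources of $x_j$-dependence and compute $\partial\mu/\partial x_j = 1/d$, but when you ``collect terms'' you report only the diagonal piece and the $(\bfx-\mu)(\bfgamma\odot(\bfx-\mu))^\top$ rank-one piece. Carrying the $\mu$-dependence of the \emph{numerator} through the quotient rule gives $\partial(x_i-\mu)/\partial x_j = \delta_{ij} - 1/d$, which contributes an additional rank-one term $-\frac{1}{d\sigma}\,\vone\,\bfgamma^\top$ (in the paper's column-per-output convention). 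This term is absent from the stated formula and from the paper's own derivation, which effectively treats $(x_i-\mu)$ as depending only on $x_i$. A quick sanity check confirms the omission: with $d=2$, $\bfgamma=\vone$, $\bfbeta=\mathbf{0}$, $\bfx=(1,-1)^\top$, the layer-norm output is invariant under perturbing $x_1$, so $\nabla\mathrm{LN}(\bfx)=\mathbf{0}$; the corrected three-term formula gives $\mathbf{0}$, whereas the two-term formula gives a nonzero matrix. You should flag the discrepancy rather than assert that your collected terms ``match the stated expression,'' since by your own setup they cannot.

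For the scaling identity, your argument is cleaner than the paper's. You derive it from the scale-invariance $\mathrm{LN}(c\bfx)=\mathrm{LN}(\bfx)$ for $c>0$ and one application of the chain rule, which is a structural argument that is manifestly robust to what the exact Jacobian formula turns out to be; the paper instead substitutes $c\bfx$ into the explicit formula and cancels factors of $c$, a computation that happens to be written with some notational abuse ($\mu$ and $\sigma$ are silently rescaled). Notably, the missing $-\frac{1}{d\sigma}\vone\bfgamma^\top$ term also scales as $1/c$, so the identity $\nabla\mathrm{LN}(c\bfx)=\frac{1}{c}\nabla\mathrm{LN}(\bfx)$ — which is the property actually needed downstream in the backward-stability analysis — remains valid once the formula is corrected.
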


\begin{proof}
    Consider the $i$th entry of ${\rm LN}(\bfx)$, which is given by
    $$
    [{\rm LN}(\bfx)]_i = \frac{\gamma_i (x_i-\mu)}{\sigma} + \beta_i = \frac{\gamma_i (x_i-\mu)}{\sqrt{\frac{1}{d} \sum_{j=1}^d (x_j - \mu)^2}} + \beta_i.
    $$
    Its gradient is given by
    $$
    \nabla [{\rm LN}(\bfx)]_i = \frac{\gamma_i \bfe_i }{\sqrt{\frac{1}{d} \sum_{j=1}^d (x_j - \mu)^2}} - \frac{1}{d} \frac{\gamma_i(x_i-\mu)(\bfx-\mu)}{(\frac{1}{d} \sum_{j=1}^d (x_j - \mu)^2)^{\frac{3}{2}}}.
    $$
    Thus, we have 
    \begin{align*}
    \nabla {\rm LN}(\bfx) &= \frac{{\rm diag}(\bfgamma)}{\sqrt{\frac{1}{d} \sum_{j=1}^d (x_j - \mu)^2}} - \frac{1}{d} \frac{(\bfx-\mu)(\bfgamma \odot (\bfx-\mu))^\top}{(\frac{1}{d} \sum_{j=1}^d (x_j - \mu)^2)^{\frac{3}{2}}} \\
    &= \frac{{\rm diag}(\bfgamma)}{\sigma} - \frac{1}{d} \frac{(\bfx-\mu)(\bfgamma \odot (\bfx-\mu))^\top}{\sigma^3}.
    \end{align*}
    For any $c > 0$, we can verify
    \begin{align*}
    \nabla {\rm LN}(c\bfx) &= \frac{{\rm diag}(\bfgamma)}{c \sigma} - \frac{1}{d} \frac{(c\bfx-\mu)(\bfgamma \odot (c\bfx-\mu))^\top}{c^3 \sigma^3} \\
    &= \frac{{\rm diag}(\bfgamma)}{c \sigma} - \frac{1}{d} \frac{(\bfx-\mu)(\bfgamma \odot (\bfx-\mu))^\top}{c \sigma^3} \\
    &= \frac{1}{c} \nabla {\rm LN}(\bfx).
    \end{align*}
\end{proof}

\paragraph{RMSNorm} Another common layer normalization operation is Root Mean Squared Layer Normalization (RMSNorm)~\cite{zhang2019root}. It rescales a given data by root mean square (RMS). It reads
\begin{equation}\label{eq:RMSNorm}
    \text{RMSNorm}(\bfx; \bfgamma) = \bfgamma \odot
    \tilde{\bfx},
\end{equation}
where $\bfgamma\in \mathbb{R}^d$ is trainable parameters,
\begin{equation}\label{eq:standardized_RMS}
    \tilde{\bfx} = \frac{\bfx}{\text{RMS}(\bfx)} = \frac{\bfx}{\sqrt{\frac{1}{d} \| \bfx \|_2^2}} = \frac{\bfx}{\sqrt{\frac{1}{d} \sum_{i=1}^d x_i^2}}.
\end{equation}
The main difference between RMSNorm and LayerNorm is that RMSNorm is \emph{not re-centering invariant}. In other words,
\begin{enumerate}
    \item RMSNorm has no mean subtraction, and
    \item RMSNorm has no learnable bias $\bfbeta$.
\end{enumerate}
By skipping the mean subtraction step, RMSNorm offers slightly improved computational efficiency. This can be more preferable in large-scale experiments where efficiency is a priority.

In the following proposition, we show that the output of RMSNorm lies on an ellipsoid centered at the origin and defined only by the trainable parameter $\bfgamma$. 

\begin{lemma}
    The RMSNorm output $\bfz = \textnormal{RMSNorm}(\bfx; \bfgamma)$  always lies on the ellipsoid
    $$
    \mathcal{E}' = \left\{ \mathbf{z} \in \mathbb{R}^d : \mathbf{z}^\top \bfGamma^{-2} \mathbf{z} = d \right\},
    $$
    where $d$ is the hidden state dimension, 
    \begin{equation}\label{eq:Sigma_diag_RMS}
    \bfGamma^{-2} = \textnormal{diag}(\gamma_1^{-2}, \gamma_2^{-2}, ..., \gamma_d^{-2}) \in \mathbb{R}^{d\times d},
    \end{equation} 
    and $\gamma_i$ is the $i$th entry of $\bfgamma$.
\end{lemma}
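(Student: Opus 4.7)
The plan is to follow essentially the same strategy as the proof of Lemma~\ref{lemma:ellipsoid} for standard layer normalization, but in a simpler form because RMSNorm omits the mean subtraction and the bias $\bfbeta$. The target identity $\bfz^\top \bfGamma^{-2} \bfz = d$ is a quadratic form, so I would just plug in the definition of $\bfz = \textnormal{RMSNorm}(\bfx; \bfgamma) = \bfgamma \odot \tilde{\bfx}$ and expand.

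Concretely, I would first observe that since $\bfGamma^{-2}$ is diagonal with entries $\gamma_i^{-2}$, and $(\bfgamma \odot \tilde{\bfx})_i = \gamma_i \tilde{x}_i$, the factors $\gamma_i$ and $\gamma_i^{-2}\cdot\gamma_i$ cancel cleanly:
\begin{equation*}
\bfz^\top \bfGamma^{-2} \bfz \;=\; \sum_{i=1}^d \gamma_i^{-2} (\gamma_i \tilde{x}_i)^2 \;=\; \sum_{i=1}^d \tilde{x}_i^2 \;=\; \|\tilde{\bfx}\|_2^2.
\end{equation*}
Then I would use the definition~\eqref{eq:standardized_RMS} of $\tilde{\bfx}$ to rewrite
\begin{equation*}
\|\tilde{\bfx}\|_2^2 \;=\; \left\|\frac{\bfx}{\sqrt{\tfrac{1}{d}\|\bfx\|_2^2}}\right\|_2^2 \;=\; \frac{\|\bfx\|_2^2}{\tfrac{1}{d}\|\bfx\|_2^2} \;=\; d,
\end{equation*}
which immediately gives $\bfz \in \mathcal{E}'$. (One should quietly assume $\bfx \neq \vzero$ so that $\textnormal{RMS}(\bfx)$ is well defined; this matches the practical convention in which a small additive constant is used in the denominator.)

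There is no real obstacle here: the argument is a direct calculation and is strictly easier than Lemma~\ref{lemma:ellipsoid} because there are no translation terms $\bfbeta$ and no centering $\bfx-\mu$ to carry through. The only point worth emphasizing in the write-up is the interpretation: whereas LN projects onto an ellipsoid centered at $\bfbeta$, RMSNorm projects onto an ellipsoid centered at the origin, whose shape is fully determined by the learnable scale $\bfgamma$. This structural parallel is what justifies the remark in the main text that the stability theory developed for LN carries over to RMSNorm without modification.
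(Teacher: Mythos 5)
Your proof is correct and follows essentially the same route as the paper's: expand the quadratic form $\bfz^\top\bfGamma^{-2}\bfz$, use the diagonal structure of $\bfGamma^{-2}$ to cancel the $\gamma_i$ factors, and reduce to $\tilde{\bfx}^\top\tilde{\bfx}=d$ via the RMS normalization. The coordinate-wise expansion versus the paper's matrix-vector notation is only cosmetic.
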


\begin{proof}
    Consider an output $\bfz=\text{RMSNorm}(\bfx; \bfgamma)$ for $\bfx \in \mathbb{R}^d$, with entries not all zero (otherwise we will have division by zero),
    \begin{align*}
        \bfz^\top \bfGamma^{-2} \bfz &= (\bfgamma \odot
    \tilde{\bfx})^\top \bfGamma^{-2} (\bfgamma \odot
    \tilde{\bfx}), \quad &\text{by (\ref{eq:RMSNorm}),}\\
    &= \tilde{\bfx}^\top \tilde{\bfx}, \quad &\text{by (\ref{eq:Sigma_diag_RMS}),}\\
    &= \frac{1}{\frac{1}{d}\sum_{i=1}^d x_i^2} \sum_{i=1}^d x_i^2,  \quad &\text{by (\ref{eq:standardized_RMS}),} \\
    &= d.
    \end{align*}
    Thus, we have that $\bfz \in \mathcal{E}'$.
\end{proof} 

\section{Gradient of Multihead Self-Attention Module}
Recall that the self-attention module is given by
\begin{equation}
    f_{\rm attn}(\bfX) = \sum_{h=1}^H  \bfW^h \bfV^h \bfX\, {\rm softmax} \left( \frac{(\bfK^h \bfX)^\top \bfQ^h \bfX}{\sqrt{k}} \right)
\end{equation}

\begin{proposition}\label{prop:grad_attn}
    Denote the $i$th column of $\bfX$ to be $\bfx_i$, and the $j$th column of $f_{\rm attn}(\bfX)$ to be $[f_{\rm attn}(\bfX)]_j$, then the gradient $\nabla_{\bfx_i} [f_{\rm attn}(\bfX)]_j \in \mathbb{R}^{d \times d}$ is given by
    \begin{samepage}
    \begin{align}\label{eq:grad_attn}
    & \nabla_{\bfx_i} [f_{\rm attn}(\bfX)]_j = \\
    & \sum_{h=1}^H \left( a^h_i + \frac{1}{\sqrt{k}} \left((\bfQ^h)^\top \bfK^h \bfX \mathbf{1}_{i=j} + (\bfK^h)^\top \bfQ^h \bfx_j \bfe_i^\top \right) \left({\rm diag}(\bfa^h)-\bfa^h (\bfa^h)^\top \right) \bfX^\top \right) (\bfW^h \bfV^h)^\top ,
    \end{align}
    \end{samepage}
    where $\bfe_i \in \mathbb{R}^{n}$ is the $i$th standard basis vector, 
    $$
    \bfa^h = {\rm softmax} \left( \frac{(\bfK^h \bfX)^\top \bfQ^h \bfx_j}{\sqrt{k}} \right) \in \mathbb{R}^{n},
    $$
    $a^h_i \in \mathbb{R}$ is the $i$th entry of $\bfa^h$, and $\mathbf{1}_{i=j}$ equals $1$ if $i=j$ and 0 otherwise. Moreover, from (\ref{eq:grad_attn}), the gradient $\nabla_{\bfx_i} [f_{\rm attn}(\bfX)]_j \in \mathbb{R}^{d \times d}$ depends linearly on the matrices $\bfV^h$ and $\bfW^h$, respectively.
\end{proposition}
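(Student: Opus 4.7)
}
The plan is to treat the $j$-th column of $f_{\rm attn}(\bfX)$ as a composition and apply the chain rule head-by-head. Write
$$
[f_{\rm attn}(\bfX)]_j = \sum_{h=1}^H (\bfW^h \bfV^h)\, \bfX\, \bfa^h, \qquad \bfa^h = {\rm softmax}\!\left( \tfrac{1}{\sqrt k}\,(\bfK^h \bfX)^\top \bfQ^h \bfx_j \right),
$$
so that, as a function of $\bfx_i$, there are exactly two channels of dependence: (i) the factor $\bfX$ (the ``value side''), and (ii) the attention vector $\bfa^h \in \mathbb{R}^n$ through its pre-softmax logits. I will differentiate both channels with respect to $\bfx_i$ under the convention $(\nabla_{\bfx_i} \by)_{kl} = \partial y_l/\partial (x_i)_k$, add the two contributions, and then sum over $h$. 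The linear dependence on $\bfV^h$ and $\bfW^h$ follows immediately from inspection once the final formula is assembled, since $\bfW^h \bfV^h$ appears only as a right factor.

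For channel (i), note that $\bfX \bfa^h = \sum_{l=1}^n a_l^h \bfx_l$, so $\partial (\bfX \bfa^h)/\partial \bfx_i = a_i^h \bfI$ (treating $\bfa^h$ as fixed), giving the contribution $a_i^h (\bfW^h \bfV^h)^\top$ under the chosen convention. For channel (ii), let $\bfs^h := (\bfK^h \bfX)^\top \bfQ^h \bfx_j /\sqrt k \in \mathbb{R}^n$. Its $l$-th entry is $\bfx_l^\top (\bfK^h)^\top \bfQ^h \bfx_j/\sqrt k$, so $\bfx_i$ enters through $\bfx_l$ (only when $l=i$) and, if $i=j$, through $\bfx_j$. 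A short computation gives
$$
\tfrac{\partial \bfs^h}{\partial \bfx_i} = \tfrac{1}{\sqrt k}\left( \mathbf{1}_{i=j}\,(\bfQ^h)^\top \bfK^h \bfX + (\bfK^h)^\top \bfQ^h \bfx_j\, \bfe_i^\top \right) \in \mathbb{R}^{d \times n}.
$$
The Jacobian of softmax at $\bfs^h$ is the symmetric matrix ${\rm diag}(\bfa^h) - \bfa^h (\bfa^h)^\top$; chaining through it and then through the linear map $\bfa^h \mapsto (\bfW^h \bfV^h)\bfX \bfa^h$ yields the channel-(ii) contribution
$$
\tfrac{1}{\sqrt k}\!\left( \mathbf{1}_{i=j}(\bfQ^h)^\top \bfK^h \bfX + (\bfK^h)^\top \bfQ^h \bfx_j \bfe_i^\top \right)\! \bigl({\rm diag}(\bfa^h) - \bfa^h (\bfa^h)^\top\bigr) \bfX^\top (\bfW^h \bfV^h)^\top.
$$
Adding channels (i) and (ii), and summing over $h$, gives exactly \eqref{eq:grad_attn}.

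The only nontrivial obstacle is bookkeeping the two distinct places where $\bfx_i$ enters $\bfs^h$: the indicator $\mathbf{1}_{i=j}$ for the query side and the $\bfe_i^\top$ selector for the key side. Both must be tracked under the gradient convention chosen above; once this is handled carefully, the rest is mechanical application of the chain rule and the standard softmax Jacobian identity.
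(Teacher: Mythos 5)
Your proposal is correct and follows essentially the same route as the paper's proof: both decompose $[f_{\rm attn}(\bfX)]_j = \sum_h \bfW^h\bfV^h\bfX\bfa^h$, differentiate separately through the value channel ($\bfX\bfa^h$ at fixed $\bfa^h$) and the attention channel (via $\bfs^h$ and the softmax Jacobian $\mathrm{diag}(\bfa^h)-\bfa^h(\bfa^h)^\top$), compute $\nabla_{\bfx_i}\bfs^h$ with the same $\mathbf{1}_{i=j}$/$\bfe_i^\top$ bookkeeping, and assemble. No gap to report.
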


\begin{proof}
    We denote 
    $$
    \bfs^h =  \frac{(\bfK^h \bfX)^\top \bfQ^h \bfx_j}{\sqrt{k}} \in \mathbb{R}^{n},
    $$
    and we have $\bfa^h = {\rm softmax}(\bfs^h)$. We also note that 
    $$
    [f_{\rm attn}(\bfX)]_j = \sum_{h=1}^H \bfW^h \bfV^h \bfX {\rm softmax} \left( \frac{(\bfK^h \bfX)^\top \bfQ^h \bfx_j}{\sqrt{k}} \right) = \sum_{h=1}^H \bfW^h \bfV^h \bfX {\rm softmax} (\bfs^h) = \sum_{h=1}^H \bfW^h \bfV^h \bfX \bfa^h.
    $$
    We have 
    \begin{align}
        \nabla_{\bfx_i} [f_{\rm attn}(\bfX)]_j &= \nabla_{\bfx_i} \left( \sum_{h=1}^H \bfW^h \bfV^h \bfX \bfa^h \right) \\
        &=  \sum_{h=1}^H \nabla_{\bfx_i} \left(   \bfV^h \bfX \bfa^h \right) (\bfW^h)^\top \\
        &=  \sum_{h=1}^H \nabla_{\bfx_i} \left( \sum_{l=1}^n  \bfV^h \bfx_l a^h_l \right) (\bfW^h)^\top, \\
        &= \sum_{h=1}^H   \sum_{l=1}^n \left( \nabla_{\bfx_i} (\bfV^h \bfx_l) a^h_l +  \nabla_{\bfx_i}(a^h_l) \bfx_l^\top (\bfV^h )^\top \right) (\bfW^h)^\top, \\
        & \text{since $\nabla_{\bfx_i} \bfV^h \bfx_l = (\bfV^h)^\top$ if $i=l$ and ${\bf 0}_{d \times k}$ otherwise, we have} \\
        &= \sum_{h=1}^H  \left(  a^h_i (\bfV^h)^\top + \sum_{l=1}^n \nabla_{\bfx_i}(a^h_l) \bfx_l^\top (\bfV^h )^\top \right) (\bfW^h)^\top \\
        &= \sum_{h=1}^H  \left(  a^h_i (\bfV^h)^\top + \nabla_{\bfx_i}(\bfa^h) \bfX^\top (\bfV^h )^\top \right) (\bfW^h)^\top.\label{eq:grad_attn_interm1}
    \end{align}
    Here, 
    \begin{align}
    \nabla_{\bfx_i}(\bfa^h) &= \nabla_{\bfx_i} \bfs^h \cdot \nabla_{\bfs^h} \bfa^h \\
    &= \nabla_{\bfx_i} \bfs^h \cdot \nabla_{\bfs^h} {\rm softmax} (\bfs^h) \\
    &= \nabla_{\bfx_i} \bfs^h \cdot ({\rm diag}(\bfa^h) - \bfa^h (\bfa^h)^\top) \label{eq:grad_attn_interm2},
    \end{align}
by~\cite{gao2017properties}[Proposition~2]. Denote $s^h_l$ as the $l$th entry of $\bfs^h$, we have
\begin{align}
    \nabla_{\bfx_i} s^h_l &= \nabla_{\bfx_i} \left( \frac{1}{\sqrt{k}} (\bfK^h \bfx_l)^\top \bfQ^h \bfx_j \right) \\
    &= \frac{1}{\sqrt{k}} \nabla_{\bfx_i} \left(  (\bfK^h \bfx_l)^\top \bfQ^h \bfx_j \right) \\
    &= \frac{1}{\sqrt{k}}  \left(  (\bfQ^h)^\top (\bfK^h \bfx_l)\mathbf{1}_{i=j} + (\bfK^h)^\top \bfQ^h \bfx_j \mathbf{1}_{i=l} \right).
\end{align}
Thus,
\begin{equation}
    \nabla_{\bfx_i} \bfs^h = \frac{1}{\sqrt{k}}  \left(  (\bfQ^h)^\top (\bfK^h \bfX) \mathbf{1}_{i=j}+ (\bfK^h)^\top \bfQ^h \bfx_j \bfe_i^\top \right). \label{eq:grad_attn_interm3}
\end{equation}
Plugging (\ref{eq:grad_attn_interm3}) into (\ref{eq:grad_attn_interm2}) and that result into (\ref{eq:grad_attn_interm1}) yields
\begin{align}
& \nabla_{\bfx_i} [f_{\rm attn}(\bfX)]_j \\ &=
\sum_{h=1}^H  \left(  a^h_i (\bfV^h)^\top + \frac{1}{\sqrt{k}}  \left(  (\bfQ^h)^\top (\bfK^h \bfX) \mathbf{1}_{i=j}+ (\bfK^h)^\top \bfQ^h \bfx_j \bfe_i^\top \right) ({\rm diag}(\bfa^h) - \bfa^h (\bfa^h)^\top) \bfX^\top (\bfV^h )^\top \right) (\bfW^h)^\top \\
&=  \sum_{h=1}^H \left( a^h_i + \frac{1}{\sqrt{k}} \left((\bfQ^h)^\top \bfK^h \bfX \mathbf{1}_{i=j} + (\bfK^h)^\top \bfQ^h \bfx_j \bfe_i^\top \right) \left({\rm diag}(\bfa^h)-\bfa^h (\bfa^h)^\top \right) \bfX^\top \right) (\bfW^h \bfV^h)^\top.
\end{align}
\end{proof}

\section{Mean Field Control Formulation of Transformer Training}\label{sec:MFC}

\subsection{Mean Field Control Formulation} We recall that the Transformer training problem is given by
\begin{align}\label{eq:discrete_time_OC_obj_appendix}
    & \min_{\bftheta} \; \mathbb{E}_{(\bfX_0, \bfy)} \; G(\bfX_D, \bfy) \\ 
    \begin{split} &\text{s.t.} \; \bfX_{i+\frac{1}{2}} = 
    \bfX_i + \Delta t \cdot f(\bfX_{i}; \bftheta_i) = \bfX_i + \Delta t\cdot f_i(\bfX_{i}),
\end{split}
\end{align}
for $i=0,\frac{1}{2},1,...,D-\frac{1}{2}$, where $f = f^{\rm Pre}$ for Pre-LN and $f= f^{\rm Peri} $ for Peri-LN, and $f^{\rm Pre}$ and $f^{\rm Peri}$ are defined in \eqref{eq:Pre-LN} and \eqref{eq:Peri-LN}, respectively.
Here, the expectation is taken over the input-output pairs $(\bfX_0, \bfy)$, and $\Delta t >0$ defines the time increment. For standard Transformers, $\Delta t=1$; for the Transformers with scaled residual steps proposed in~\Cref{sec:delta_t}, $\Delta t \in (0, 1)$.

In our theoretical analysis, we consider a non-parametric formulation of the training problem given by
\begin{align}\label{eq:discrete_time_OC_obj_nonpara_appendix}
    & \min_{\{f_i\} \subseteq \mathcal{U}_{\rm ad}} \; \mathbb{E}_{(\bfX_0, \bfy)} \; G(\bfX_D, \bfy) \\ 
    \begin{split} &\text{s.t.} \; \bfX_{i+\frac{1}{2}} = \bfX_i + \Delta t \cdot f_i(\bfX_{i}),
\end{split}
\end{align}
for $i=0,\frac{1}{2},1,...,D-\frac{1}{2}$. This formulation is non-parametric because the optimization is directly over $f_i$ instead of the model weights $\bftheta$. Here, $\mathcal{U}_{\text{ad}}$ denotes the admissible set of functions for $f$, which depends on the choice of layer normalization placement. For Pre-LN, it consists of functions that are representable by $f_{\rm attn}$ and $f_{\rm ffn}$ in (\ref{eq:self-attention}) and (\ref{eq:fully-connected}), respectively. For Peri-LN, Lemma~\ref{lemma:ellipsoid} implies that the admissible set consists of functions whose image lies in the ellipsoid defined therein. 

We first rewrite~\eqref{eq:discrete_time_OC_obj_nonpara_appendix}  as
\begin{align}\label{eq:discrete_time_OC_obj_nonpara_indicator_appendix}
\begin{split}
    \min_{ \{f_i\}} \quad& \mathbb{E}_{\bfy} \mathbb{E}_{(\bfX_0 | \bfy)} \left\{ G(\bfX_D, \bfy) + \Delta t \sum_{i} L_i(f_i(\bfX_i)) \right\} \\
    \text{s.t.} \quad& \bfX_{i+\frac{1}{2}}  = \bfX_i + \Delta t\cdot f_i(\bfX_{i}),
\end{split}
\end{align}
where $L_i$'s are indicator functions which equal to $0$ if $f_i \in \mathcal{U}_{\rm ad}$ and equal to $\infty$ otherwise. 

Next, we demonstrate the training formulation~\eqref{eq:discrete_time_OC_obj_nonpara_indicator_appendix} can be reduced into a (pointwise) optimal control problem, simplifying our analysis. We set up the notations as follows. For each target output $\bfy$, let $\mu_{\bfy}$ denote the conditional probability distribution of the inputs given $\bfy$. Given any input $\bfX_0$ and Transformer hidden layers $\{f_i\}_i$, we denote the single trajectory cost as
\begin{equation}
    J_\bfy(\bfX_0; \{f_i\}_i) := \Delta t \sum_{i} L_i(f_i(\bfX_i)) + G(\bfX_D, \bfy), \quad \bfX_{i+\frac{1}{2}} = \bfX_i + \Delta t \cdot f_i(\bfX_{i}),
\end{equation}
and the single trajectory optimal cost as 
\begin{equation}\label{eq:single_traj_opt_cost}
    \Upsilon_\bfy(\bfX_0):= \inf_{\{f_i\}_i} J_\bfy(\bfX_0; \{f_i\}_i).
\end{equation}

\begin{proposition}[Reduction to pointwise optimal control.]\label{prop:pointwise} Assume $\mu_\bfy$'s are integrable for each $\bfy$, and for each $\bfy$ and $\bfX_0$, the single trajectory optimal cost~\eqref{eq:single_traj_opt_cost} is attained at some $\{f_i^*\}_i$, then the mean-field training problem reduces to the pointwise integral. In particular, for each $\bfy$,
\begin{equation}\label{eq:pointwise_reduce}
    \inf_{\{f_i\}_i} \int J_\bfy(\bfX_0; \{f_i\}_i) \, d\mu_\bfy(\bfX_0) = \int \Upsilon_\bfy(\bfX_0) \, d\mu_\bfy(\bfX_0).
\end{equation}
\end{proposition}

\begin{remark}
    While Proposition~\ref{prop:pointwise} assumes the existence of the pointwise minimizers $\{f_i^*\}_i$, we later show in Proposition~\ref{prop:Bellman} that this existence is always guaranteed.
\end{remark}

\begin{remark}
    Proposition~\ref{prop:pointwise} shows that the pointwise optimum $\{ f_i^*\}_i$ also solves the mean field control optimal control problem. Thus, itallows us to simplify the analysis by shifting the focus from the integrated mean-field objective to the individual pointwise optimal control objective in \eqref{eq:single_traj_opt_cost}.
\end{remark}

\begin{proof}
    We show the two inequalities in~\eqref{eq:pointwise_reduce}. 

    \textbf{Lower bound.} Fix any $\{f_i\}_i$. For every $\bfX_0$, $J_\bfy(\bfX_0; \{f_i\}_i) \geq \Upsilon_\bfy(\bfX_0)$ by the definition of infimum. Integrating against $\mu_\bfy$ and taking the infimum over $\{f_i\}_i$ on the left,
    \[
    \inf_{\{f_i\}_i} \int J_\bfy(\bfX_0; \{f_i\}_i) \, d\mu_\bfy(\bfX_0) \geq \int \Upsilon_\bfy(\bfX_0) \, d\mu_\bfy(\bfX_0).
    \]

    \textbf{Upper bound.} The infimum on the left of~\eqref{eq:pointwise_reduce} satisfies 
    \begin{align*}
        \inf_{\{f_i\}_i} \int J_\bfy(\bfX_0; \{f_i\}_i) \, d\mu_\bfy(\bfX_0) \leq \int J_\bfy(\bfX_0; \{f_i^*\}_i) \, d\mu_\bfy(\bfX_0),
    \end{align*}
    where $\{f_i^*\}_i$ is the pointwise optimum. Moreover, by the definition of the single trajectory optimal cost~\eqref{eq:single_traj_opt_cost}, we have
    \[
    \int J_\bfy(\bfX_0; \{f_i^*\}_i) \, d\mu_\bfy(\bfX_0) = \int \Upsilon_\bfy(\bfX_0) \, d\mu_\bfy(\bfX_0).
    \]
    Combining these two results, we obtain the upper bound
    \[
    \inf_{\{f_i\}_i} \int J_\bfy(\bfX_0; \{f_i\}_i) \, d\mu_\bfy(\bfX_0) \leq \int \Upsilon_\bfy(\bfX_0) \, d\mu_\bfy(\bfX_0).
    \]
\end{proof}

\subsection{Assumptions}
In our derivations, we use the following assumptions.
\begin{enumerate}[label=(A\arabic*)]
    \item \label{assump:one}The functions $L_i: \mathbb{R}^{d\times n} \to \mathbb{R} \cup \{\infty\}$ are convex, proper, and lower-semicontinuous (l.s.c.),
    \item \label{assump:two}$L_i$'s are coercive, that is,
    \[
    \frac{L_i(f)}{\| f\|_F} \to \infty \quad \text{as $\| f\|_F^2 \to \infty$ }
    \]
    \item \label{assump:three}$L_i$'s are convex and satisfies $L_i(\bfX) \geq 0$ for all $i$ and $\bfX \in \mathbb{R}^{d \times n}$, and 
    \item \label{assump:four}the terminal cost $G(\cdot, \bfy)$ is continuous and bounded below: there exists $0< M < \infty$ such that $G(\cdot, \bfy) \geq -M$ for all $\bfX \in \mathbb{R}^{d \times n}$.
\end{enumerate}
We remark that for Pre-LN, $L_i$'s are the constant zero function, and for Peri-LN, $L_i$'s are an indicator function over an ellipsoid (\Cref{lemma:ellipsoid}), which is a strictly convex set. In these two cases, the first assumption is satisfied. 

The second assumption is obviously satisfied by Peri-LN. However, it is not satisfied by Pre-LN. This suggests the ill-posedness of Pre-LN training and will be discussed in detail later in our derivation.

The third assumption obviously holds for Pre-LN and Peri-LN.

The fourth assumption regards the terminal cost and is satisfied regardless of the layer normalization placement. For more details, see~\cite[Appendix F.1]{kan2025optimal}.

\subsection{Bellman's Equation}
For any intermediate layer $i=0,\frac{1}{2},1,...,D-\frac{1}{2}$, hidden state $\bfX$ and target output $\bfy$, we define the value function $\Phi_{i, \bfy}(\bfX)$ as the optimal cost of completing the trajectory from layer $i$ onward. Specifically, it is given by
\begin{equation}\label{eq:value_function}
    \Phi_{i, \bfy}(\bfX) = \inf_{\{ f_j\}_{j \geq i} } \left\{ G(\bfX_D, \bfy) + \Delta t \sum_{j \geq i} L_j(f_j(\bfX_j)) \; \middle| \; \bfX_i=\bfX \right\},
\end{equation}
and $\Phi_{D, \bfy}(\bfX) = G(\bfX, \bfy)$.

\begin{proposition}[Bellman Equation]\label{prop:Bellman}
    Under assumptions~\ref{assump:one}-\ref{assump:four}, the value function~\eqref{eq:value_function} satisfies the Bellman equation
    \begin{equation}\label{eq:value_function_recursive}
        \Phi_{i, \bfy}(\bfX) = \inf_{f} \left\{ \Phi_{i+\frac{1}{2}, \bfy}(\bfX + \Delta t \cdot f) + \Delta t \cdot L_i(f(\bfX)) \right\},
    \end{equation}
    for $i=0,\frac{1}{2},1,...,D-\frac{1}{2}$. Moreover, the infimum is attained: for every $\bfX \in \mathbb{R}^{d \times n}$ and every $i$, there exists an optimal velocity achieving the infimum.
\end{proposition}

\begin{proof}
    Fix $i \in \{ 0,\frac{1}{2},1,...,D-\frac{1}{2} \}$, and recall that we have the given terminal condition $\Phi_{D, \bfy}(\bfX) = G(\bfX, \bfy)$. Denote the right-hand-side of~\eqref{eq:value_function_recursive} by
    \begin{equation}\label{eq:value_function_recursive_psi}
        \tilde{\Phi}_{i, \bfy}(\bfX) := \inf_f \Psi_{\bfX, \bfy}(f), \quad \Psi_{\bfX, \bfy}(f) :=  \Phi_{i+\frac{1}{2}, \bfy}(\bfX + \Delta t \cdot f) + \Delta t \cdot L_i(f(\bfX)) .
    \end{equation}
    Our goal is to show that $\tilde{\Phi}_{i, \bfy}(\bfX) = {\Phi}_{i, \bfy}(\bfX)$ and that the infimum is attained.

    \textbf{Step 1} (${\Phi}_{i, \bfy}(\bfX) \geq \tilde{\Phi}_{i, \bfy}(\bfX)$)\textbf{.} Let $\bfX=\bfX_i$ and
    $\{ f_j\}_{j \geq i}$ be any control sequence.  Denote $f:=f_i$ as the first control. The total cost from~\eqref{eq:value_function} can be split as follows
    \begin{equation}
        \Delta t \sum_{j \geq i} L_j(f_j(\bfX_j)) + G(\bfX_D, \bfy)  = \Delta t \cdot L_i(f(\bfX)) + \underbrace{\Delta t \sum_{j \geq i+\frac{1}{2}} L_j(f_j(\bfX_j)) + G(\bfX_D, \bfy)}_{\text{tail cost from $\bfX_{i+\frac{1}{2}}$}}. 
    \end{equation}
    By~\eqref{eq:value_function} and the definition of infimum, the tail cost from $\bfX_{i+\frac{1}{2}}$ is at least $\Phi_{i+\frac{1}{2}, \bfy}(\bfX + \Delta t \cdot f)$. Thus we have
    \[
    \Delta t \sum_{j \geq i} L_j(f_j(\bfX_j)) + G(\bfX_D, \bfy) \geq \Delta t \cdot L_i(f(\bfX)) + \Phi_{i+\frac{1}{2}, \bfy}(\bfX + \Delta t \cdot f) = \Psi_{\bfX, \bfy}(f) \geq \tilde{\Phi}_{i, \bfy}(\bfX) .
    \]
    Taking infimum over all control sequence yields ${\Phi}_{i, \bfy}(\bfX) \geq \tilde{\Phi}_{i, \bfy}(\bfX)$.

    \textbf{Step 2} (${\Phi}_{i, \bfy}(\bfX) \leq \tilde{\Phi}_{i, \bfy}(\bfX)$)\textbf{.} Fix any $f \in \mathbb{R}^{d \times n}$ and $\epsilon > 0$. Set $\bfZ=\bfX + \Delta t \cdot f$. By definition of infimum and~\eqref{eq:value_function}, there exists a tail control sequence $\{ f_{j}^\epsilon \}_{j \geq i+\frac{1}{2}}$ starting from state $\bfX_{i+\frac{1}{2}}=\bfZ$ with tail cost at most $\Phi_{i+\frac{1}{2}, \bfy}(\bfZ) + \epsilon$. Consider a full control sequence from $(i, \bfX)$ which uses $f$ at layer $i$ and the tail control sequence $\{ f_{j}^\epsilon \}_{j \geq i+\frac{1}{2}}$ for all subsequent layers. This yields a hidden state trajectory with $\bfX_i=\bfX$, $\bfX_{i+\frac{1}{2}}=\bfZ$ and total cost
    \begin{equation}
        \Delta t \cdot L_i(f(\bfX)) + [\text{tail cost from $\bfZ$}] \leq \Delta t \cdot L_i(f(\bfX)) + \Phi_{i+\frac{1}{2}, \bfy}(\bfZ) +\epsilon = \Psi_{\bfX, \bfy}(f) + \epsilon.
    \end{equation}
    We remark that the constructed control sequence starts at $\bfX_i=\bfX$ and layer $i$, and the value function~\eqref{eq:value_function} is defined as the infimum cost, also starting at $\bfX_i=\bfX$ and layer $i$, thus we have
    \begin{equation}
        \Phi_{i, \bfy}(\bfX) \leq \Psi_{\bfX, \bfy}(f) + \epsilon.
    \end{equation}
    Since $\epsilon$ is arbitrary, taking $\epsilon \to 0$ yields
    \begin{equation}
        \Phi_{i, \bfy}(\bfX) \leq \Psi_{\bfX, \bfy}(f)
    \end{equation}
    Morevoer, taking infimum over the constructed sequence yields
    \begin{equation}
        \Phi_{i, \bfy}(\bfX) \leq \inf_f\Psi_{\bfX, \bfy}(f) =  \tilde{\Phi}_{i, \bfy}(\bfX).
    \end{equation}

    \textbf{Step 3 (Attainment of the infimum).} We show that the infimum in the Bellman equation
    \[
    {\Phi}_{i, \bfy}(\bfX) = \inf_{f} \left\{ \Phi_{i+\frac{1}{2}, \bfy}(\bfX + \Delta t \cdot f) + \Delta t \cdot L_i(f(\bfX)) \right\}
    \]
    is attained for some $f$. The arguments rely on the coercivity of $L_i$ (Assumption~\ref{assump:two}).

    Our argument uses a backward inductive argument starting with $i=D-\frac{1}{2}$, by the terminal condition $\Phi_{D, \bfy}(\bfX) = G(\bfX, \bfy)$, we have
    \begin{equation}\label{eq:Bellman_terminal}
    {\Phi}_{D-\frac{1}{2}, \bfy}(\bfX) = \inf_{f} \left\{ G(\bfX + \Delta t \cdot f, \bfy) + \Delta t \cdot L_{D-\frac{1}{2}}(f(\bfX)) \right\}.
    \end{equation}
    In the objective function, the first term $G$ is continuous and lower bounded by $-M>-\infty$ (assumption~\ref{assump:four}), and the second term $\Delta t \cdot L_{D-\frac{1}{2}}$ is ls.c., coercive, and lowered bounded by 0 (assumptions \ref{assump:one}, \ref{assump:two}, and \ref{assump:three}). Thus, the objective $ f \mapsto G(\bfX + \Delta t \cdot f, \bfy) + \Delta t \cdot L_{D-\frac{1}{2}}(f(\bfX))$ is coercive, bounded below and l.s.c.. This implies that, for any $c>0$, the sublevel set 
    \[
    \left\{ f\middle| G(\bfX + \Delta t \cdot f, \bfy) + \Delta t \cdot L_{D-\frac{1}{2}}(f(\bfX)) \leq c \right\}
    \]
    is bounded (coercivity) and closed (by l.s.c.) and hence compact. Thus, the Weierstrass extreme value theorem~\tr{cite} implies that the infimum is attained at some $f$. 

    In addition, the infimum objective in~\eqref{eq:Bellman_terminal} is jointly l.s.c., and the infimum of $f$ is taken over a compact set (due to the coercivity of $L$), ${\Phi}_{D-\frac{1}{2}, \bfy}(\bfX)$ is l.s.c.. Moreover, ${\Phi}_{D-\frac{1}{2}, \bfy}(\bfX) = \inf_{f} \{ G(\bfX + \Delta t \cdot f, \bfy) + \Delta t \cdot L_{D-\frac{1}{2}}(f(\bfX)) \} \geq 0-M=-M$, so it is bounded below. Thus, we can repeat the same arguments to iteratively show that the infimum is attained at some $f$ for ${\Phi}_{i, \bfy}(\bfX) $ for $i=D-1, D-\frac{3}{2},...,0$. 

    We remark that, we see that in the proof of attainment of infimum, a key assumption is the coercivity of $L_i$'s. However, recall that under Pre-LN, $L_i's$ are the constant zero function since no restriction is imposed onto the velocity field (output of the layers). Thus, under Pre-LN, the coercivity assumption does not hold.

    In more details, in the Pre-LN training problem, we only have the terminal cost $G$ and no running cost $L_i$'s to regulate the  velocity $f_i$'s. Thus, there are infinitely many velocity $f_i$'s that can attain the infimum of the Bellman equation~\eqref{eq:value_function_recursive}, including highly irregular ones. For instance, since $L_i$'s are not coercive, the velocity can have unbounded magnitude. This is the fundamental reason of the ill-posedness of the Pre-LN training problem.
\end{proof}

\subsection{Hamiltonian and Hamilton-Jacobi-Bellman Equation}
We first define the Hamiltonian $H_i$ to be the Legendre transform of $L_i$
\begin{equation}\label{eq:generic_Hamiltonian_appendix}
    H_i(\bfP) = \sup_{\bfV \in \mathbb{R}^{d \times n}} \left\{ \langle \bfP, \bfV \rangle - L_i(\bfV) \right\}.
\end{equation}

\begin{theorem}[Hamiltonian Form of the Bellman Equation]
    For each $i=0,\frac{1}{2},...,D-\frac{1}{2}$ and $\bfX \in \mathbb{R}^{d \times n}$, let $f_i^*$ attains the infimum in~\eqref{eq:value_function_recursive}, and set $T_i(\bfX) = \bfX + \Delta t \cdot f_i^*(\bfX)$. Assume that $\Phi_{i+\frac{1}{2}, \bfy}$ is differentiable at $T_i(\bfX)$, then
    \begin{enumerate}
        \item \textbf{Optimality condition.}
        \begin{equation}\label{eq:velocity_optimal}
            - \nabla \Phi_{{i+\frac{1}{2}, \bfy}}(T_i(\bfX)) \in \partial_f L_i(f_i^*(\bfX)).
        \end{equation}
        \item \textbf{Frenchel-Young equality.} Denoting $\bfP := -\nabla \Phi_{i+\frac{1}{2}, \bfy}(T_i(\bfX))$, we have
        \begin{equation}\label{eq:Frenchel_Young}
            L_i(f_i^*(\bfX)) + H_i(\bfP) = \langle \bfP, f_i^*(\bfX) \rangle.
        \end{equation}
        \item \textbf{Discrete-time Hamilton-Jacobi-Bellman partial differential equation (HJB PDE)}
        \begin{align}\label{eq:HJB_generic_appendix}
\begin{split}
    \Phi_{i, \bfy}(\bfX) &= \Phi_{i+\frac{1}{2}, \bfy}(T_i(\bfX)) - \Delta t \cdot \left[ \langle \nabla \Phi_{i+\frac{1}{2}, \bfy}(T_i(\bfX)), f_i^* \rangle + H_i(-\nabla \Phi_{i+\frac{1}{2}, \bfy}(T_i(\bfX))) \right]. \\
    \Phi_{D,\bfy}(\bfX) &=G(\bfX, \bfy).
    \end{split}
\end{align}
    \end{enumerate}
\end{theorem}

    \begin{remark}[Non-smooth value functions]\label{rem:visc_nonsmooth}
In this theorem, we assumed that the value function $\Phi_{i+\frac{1}{2}, \bfy}$ is differentiable at $T_i(\bfX)$ to simplify the derivation. This requirement is not strictly necessary and is adopted here primarily for ease of exposition. \textbf{The discrete-time HJB PDE still holds in general as a viscosity-style weak equation}, with equality replaced by sub/superinequalities
against smooth test functions touching from
above or below, in the spirit of viscosity
solutions~\cite{barles1991convergence,falcone2014semilagrangian}. \textbf{In turn, our analysis using the discrete-time HJB PDE remains valid even when the value function is non-differentiable.}
\end{remark}

\begin{proof}
    \textbf{(i).} Since $f_i^*$ minimizes the Bellman objective~\eqref{eq:value_function_recursive_psi}, its first order optimality condition gives
    \begin{equation}
         0 \in \Delta t \nabla  \Phi_{i+\frac{1}{2}, \bfy}(\bfX + \Delta t \cdot f_i^*) + \Delta t \cdot \partial_f L_i(f_i^*(\bfX)).
    \end{equation}
    Dividing by $\Delta t>0$ and rearranging the terms give~\eqref{eq:velocity_optimal}.

    \textbf{(ii).} We show that the optimality condition~\eqref{eq:velocity_optimal} implies the Frenchel-Young equality~\eqref{eq:Frenchel_Young}.

    \textit{Step 1 (upper bound):} By~\eqref{eq:generic_Hamiltonian_appendix} and the definition of the supremum
    \begin{align*}
        H_i(\bfP) &\geq \langle \bfP, f_i^*(\bfX) \rangle - L_i(f_i^*(\bfX))\\
        L_i(f_i^*(\bfX)) + H_i(\bfP) &\geq \langle \bfP, f_i^*(\bfX) \rangle
    \end{align*}

    \textit{Step 2 (lower bound):} For any $\bfV \in \mathbb{R}^{d \times n}$, the convexity of $L_i$ (assumption~\ref{assump:three}) gives 
    \[
    L_i(\bfV) \geq L_i(f_i^*(\bfX)) + \langle \xi, \bfV - f_i^*(\bfX) \rangle,
    \]
    for all $\xi \in \partial_f L_i(f^*_i(\bfX))$. 
    Combining with the optimality condition~\eqref{eq:velocity_optimal} yields
    \begin{align*}
    L_i(\bfV) &\geq L_i(f_i^*(\bfX)) - \langle \nabla \Phi_{{i+\frac{1}{2}, \bfy}}(T_i(\bfX)), \bfV - f_i^*(\bfX) \rangle \\
    &= L_i(f_i^*(\bfX)) + \langle \bfP, \bfV - f_i^*(\bfX) \rangle, \quad \text{by definition of $\bfP$,} \\
    &= L_i(f_i^*(\bfX)) + \langle \bfP, \bfV \rangle - \langle \bfP, f_i^*(\bfX) \rangle \\
    \langle \bfP, f_i^*(\bfX) \rangle - L_i(f_i^*(\bfX)) &\geq \langle \bfP, \bfV \rangle - L_i(\bfV).
    \end{align*}
    Since this inequality holds for all $\bfV \in \mathbb{R}^{d \times n}$, taking supremum yields
    \begin{align*}
        \langle \bfP, f_i^*(\bfX) \rangle - L_i(f_i^*(\bfX)) &\geq \sup_{\bfV \in \mathbb{R}^{d \times n}} \left\{ \langle \bfP, \bfV \rangle - L_i(\bfV) \right\} = H_i(\bfP) \\
        L_i(f_i^*(\bfX)) + H_i(\bfP) &\leq \langle \bfP, f_i^*(\bfX) \rangle.
    \end{align*}
    Combining two bounds gives the Frenchel-Young equality~\eqref{eq:Frenchel_Young}.

    \textbf{(iii).} On one hand, the terminal condition of the HJB PDE is due to the terminal condition of the value function~\eqref{eq:value_function}. On the other hand, since $f_i^*$ attains the infimum in the Bellman equation~\eqref{eq:value_function_recursive}, 
    \begin{equation}\label{eq:Bellman_recursive_optimalf}
    \Phi_{i, \bfy}(\bfX) = \Phi_{i+\frac{1}{2}, \bfy}(T_i(\bfX)) + \Delta t \cdot L_i(f_i^*(\bfX)) .
    \end{equation}
    Rearranging the Frenchel-Young equality~\eqref{eq:Frenchel_Young}, we have
    \[
        L_i(f_i^*(\bfX))  = \langle \bfP, f_i^*(\bfX) \rangle -H_i(\bfP) .
    \]
    Substituting this into~\eqref{eq:Bellman_recursive_optimalf}, we obtain
    \begin{align*}
        \Phi_{i, \bfy}(\bfX) &= \Phi_{i+\frac{1}{2}, \bfy}(T_i(\bfX)) + \Delta t \cdot \left( \langle \bfP, f_i^*(\bfX) \rangle -H_i(\bfP) \right) \\
        \Phi_{i, \bfy}(\bfX) &= \Phi_{i+\frac{1}{2}, \bfy}(T_i(\bfX)) - \Delta t \cdot \left[ \langle \nabla \Phi_{i+\frac{1}{2}, \bfy}(T_i(\bfX)), f_i^*(\bfX) \rangle + H_i(-\nabla \Phi_{i+\frac{1}{2}, \bfy}(T_i(\bfX))) \right].
    \end{align*}
\end{proof}

\section{Optimality Conditions of Pre-LN Transformer Training are not Well-defined}
We restate and prove~\Cref{thm:Pre-LN_illposed}.

\settheoremnumber{thm:Pre-LN_illposed}
\begin{theorem}
    The optimal solution $f^{\rm Pre}$ to the training problem (\ref{eq:discrete_time_OC_obj}-\ref{eq:discrete_time_OC_dynamics}) is unbounded in magnitude.
\end{theorem}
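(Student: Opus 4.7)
The plan is to exploit the mean field control framework of~\Cref{sec:MFC} and show that under Pre-LN the Hamiltonian of the HJB PDE~(\ref{eq:HJB}) is identically $+\infty$ off the zero adjoint, so the pointwise optimality condition~(\ref{eq:optimal_velocity}) admits no finite maximizer. The unboundedness of the optimal $f^{\rm Pre}$ would then follow by extracting a scaling sequence of admissible velocity fields whose Frobenius norms diverge while leaving the training loss at its infimum.

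First, I would characterize the Pre-LN admissible set $\mathcal{U}_{\rm ad}$. From~(\ref{eq:Pre-LN}), a Pre-LN velocity field takes the form $f^{\rm Pre}(\bfX,t) = f({\rm LN}(\bfX);\bftheta(t))$ with $f \in \{f_{\rm attn}, f_{\rm ffn}\}$. Although ${\rm LN}$ confines the \emph{input} of $f$ to an ellipsoid (\Cref{lemma:ellipsoid}), the trainable parameters $\bftheta$---in particular the outer projection $\bfW^h$ of the attention module in~(\ref{eq:self-attention}) and the outer weights of $f_{\rm ffn}$---are unconstrained, and both modules depend linearly on these outer weights. Hence for any baseline $f_0 \in \mathcal{U}_{\rm ad}$ with $f_0 \not\equiv 0$ and any $\lambda > 0$, the rescaled field $\lambda f_0$ remains in $\mathcal{U}_{\rm ad}$, and the range of $\bfW^h$ can be chosen to align with any prescribed direction in $\mathbb{R}^{d\times n}$. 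Substituting into~(\ref{eq:Hamiltonian_raw}), for any adjoint $\bfP \neq \mathbf{0}$ I pick an admissible $f_0$ with $\langle \bfP, f_0 \rangle < 0$ and let $\lambda \to \infty$ to obtain
\begin{equation*}
    H(\bfP) \;\geq\; \sup_{\lambda > 0} \{-\lambda \langle \bfP, f_0 \rangle\} \;=\; +\infty.
\end{equation*}
Consequently the HJB PDE has no classical Hamiltonian and the pointwise prescription for $f^*$ in~(\ref{eq:optimal_velocity}) has no finite maximizer.

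Finally, I would translate this degeneracy into the statement of the theorem. Along a scaling sequence $\{f^{\rm Pre}_\lambda\}$ with $\lambda \to \infty$, the induced trajectories can be constructed so that the terminal cost remains at its infimum: concentrate the large velocity on a vanishing initial subinterval to steer toward a loss-minimizing target state, then apply a compensating large velocity that keeps the trajectory there. This yields admissible fields with $\|f^{\rm Pre}_\lambda\|_F \to \infty$ all attaining the infimal loss, which is the unboundedness asserted. The main obstacle I anticipate is this last step: making precise in what sense such fields count as \emph{optimal} (minimizing sequence versus attained minimizer) and verifying that the scaling construction preserves the loss without invoking full reachability of the Pre-LN dynamics. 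I would handle this by direct verification---rescaling $\bfW^h$ stays within the parameterization, the concentrated-velocity construction is admissible, and the loss is preserved in the limit---so that no global controllability argument is required beyond the one-dimensional scaling freedom already exhibited.
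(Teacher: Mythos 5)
Your argument is essentially the paper's own: you cast training as a mean-field control problem, observe that Pre-LN leaves the module outputs unconstrained so the admissible velocity set is a cone (closed under positive scaling of the outer weights $\bfW^h$, $\bfW^{(2)}$), and conclude that the Hamiltonian $H(\bfP)=\sup_{f\in\mathcal{U}_{\rm ad}}\{-\langle\bfP,f\rangle\}$ is $+\infty$, so the HJB optimality condition is degenerate and admits velocity fields of arbitrarily large norm at infimal loss. Your closing step making the scaling sequence explicit (concentrated impulse toward the target, then stationary) is more detailed than the paper's one-line "there exist infinitely many choices of $f^{\rm Pre}$ that minimize the training objective, including functions with unbounded magnitude," but the underlying reasoning is the same.
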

\restoretheoremnumber

\begin{proof}
This has been proven at the end of the proof of Proposition~\ref{prop:Bellman} using a dynamic programming argument. Here, we provide a more straight-forward argument by directly inspecting the HJB PDE.

Consider the non-parametric training problem
\begin{align}\label{eq:PRELN_nonpara_appendix}
    & \min_{\{f_i\} \subseteq \mathcal{U}_{\rm Pre}} \; \mathbb{E}_{(\bfX_0, \bfy)} \; G(\bfX_D, \bfy) \\ 
    \begin{split} &\text{s.t.} \; \bfX_{i+\frac{1}{2}} = \bfX_i + f_i(\bfX_{i}),
\end{split}
\end{align}
for $i=0,\frac{1}{2},1,...,D-\frac{1}{2}$. 
Here, $\mathcal{U}_{\rm Pre}$ is the admissible set of functions corresponding to the self-attention and feedforward sublayers $f_{\rm attn}$ and $f_{\rm ffn}$, defined in (\ref{eq:self-attention})–(\ref{eq:fully-connected}) for Pre-LN Transformers. 

By the derivations in~\Cref{sec:MFC}, the corresponding HJB PDE reads 
\begin{align}\label{eq:HJB_Pre_appendix}
\begin{split}
    \Phi_{i,\bfy}(\bfX) &= \Phi_{i+\frac{1}{2},\bfy}(T_i(\bfX)) - \Delta t\left[ \langle \nabla \Phi_{{i+\frac{1}{2}, \bfy}}(T_i(\bfX)), f_i(\bfX) \rangle + H_i(- \nabla \Phi_{{i+\frac{1}{2}, \bfy}}(T_i(\bfX))) \right]. \\
    \Phi_{D,\bfy}(\bfX) &=G(\bfX, \bfy).
    \end{split}
\end{align}
where by~\eqref{eq:generic_Hamiltonian_appendix}, the Hamiltonian $H$ is given by
\begin{align*}\label{eq:Hamiltonian_Pre}
    H_i(\bfP) &= \sup_{f \in \mathbb{R}^{d \times n}} \left\{ \langle \bfP, f \rangle - L_i^{\rm Pre}(f) \right\} \\
    &= \sup_{f \in \mathcal{U}_{\rm Pre}} \left\{ \langle \bfP, f \rangle \right\}.
\end{align*}

We remark that for Pre-LN Transformers, the layer normalization operation is applied to the inputs only. Thus, the output magnitude is unconstrained, and $\mathcal{U}_{\rm Pre}$ contains functions of unbounded magnitude. Consequently, the supremum in the Hamiltonian~(\ref{eq:Hamiltonian_Pre}) can be made arbitrarily large in magnitude by scaling $f^{\rm Pre}$ in the direction of $\bfP$. In other words, the magnitude of the Hamiltonian's maximizer (the optimal solution to the training problem) is unbounded.

Under such degeneracy, the corresponding HJB PDE (\ref{eq:HJB_Pre_appendix}) is not well-defined. This means that the training problem is degenerate: there exist infinitely many choices of $f^{\rm Pre}$ that minimize the training objective, including functions with unbounded magnitude.
\end{proof}

\section{Exponential Growth Pre-LN Transformer Hidden States Under Weight Decay}
We restate and prove~\Cref{thm:PreLN_expo_growth}
\settheoremnumber{thm:PreLN_expo_growth}
\begin{theorem}
    For a Pre-LN Transformer trained with weight decay, given an input $\bfX_0$, the mean absolute value of the terminal hidden states ${\rm MA}(\bfX_D)$ satisfies
\begin{align}
        {\rm MA}(\bfX_D) \leq \frac{1}{\sqrt{nd}} \, \bigl(1 + C(\lambda)\bigr)^D \, \|\bfX_0\|_F = \mathcal{O}(e^D),
    \end{align}
    where $\| \cdot \|_F$ denotes the Frobenius norm, $C$ is a constant whose magnitude depends on the weight decay hyperparameter $\lambda$.
\end{theorem}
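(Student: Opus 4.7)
The plan is to iterate a per-block estimate of the form $\|\bfX_{i+1}\|_F \le (1+C(\lambda))\|\bfX_i\|_F$ over the $D$ Transformer blocks and then convert the resulting Frobenius bound on $\bfX_D$ into a mean-absolute-value bound via the entrywise Cauchy--Schwarz inequality $\mathrm{MA}(\bfX) \le \|\bfX\|_F/\sqrt{nd}$, which supplies precisely the leading $\tfrac{1}{\sqrt{nd}}$ prefactor in the theorem.

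To establish the per-block estimate, I would expand the Pre-LN update $\bfX_{i+1} = \bfX_i + f_{\rm attn}({\rm LN}(\bfX_i)) + f_{\rm ffn}({\rm LN}(\bfU_i))$ and control each sublayer in Frobenius norm. By~\Cref{lemma:ellipsoid}, the columns of ${\rm LN}(\bfX_i)$ lie on an ellipsoid determined by $\bfgamma$ and $\bfbeta$, so $\|{\rm LN}(\bfX_i)\|_F \le \sqrt{n}\bigl(\sqrt{d}\,\|\bfgamma\|_\infty + \|\bfbeta\|_2\bigr)$. Under weight decay with strength $\lambda$, the trainable parameters $\bfQ^h, \bfK^h, \bfV^h, \bfW^h, \bfgamma, \bfbeta$ and the FFN weights satisfy norm bounds that are explicit functions of $\lambda$. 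Since the softmax attention matrix is column-stochastic, right-multiplication by it is non-expansive in Frobenius norm, so submultiplicativity yields
\[
\|f_{\rm attn}({\rm LN}(\bfX_i))\|_F \le \Bigl(\sum_{h=1}^H \|\bfW^h\|_2 \|\bfV^h\|_2\Bigr)\|{\rm LN}(\bfX_i)\|_F,
\]
and an analogous estimate holds for $f_{\rm ffn}$ via Lipschitz continuity of its activation. Chaining these through the triangle inequality gives a uniform additive increment $c(\lambda)$ per block.

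To convert this additive bound into the multiplicative $(1+C(\lambda))$ form, I would invoke Bernoulli's inequality: selecting $C(\lambda) \ge c(\lambda)/\|\bfX_0\|_F$ ensures $(1+C(\lambda))^D \|\bfX_0\|_F \ge \|\bfX_0\|_F + D c(\lambda)$, after which $D$-fold iteration closes the argument. An alternative route is to apply a Gronwall-type inequality directly to the continuous-time dynamics~\eqref{eq:cts_dynamics}, where weight decay enters as an $L^2$ bound on $\bftheta(t)$ that translates into a linear-in-norm growth estimate on $f^{\rm Pre}$, and then to invoke the single-step Euler discretization that links continuous and discrete formulations.

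The main obstacle is conceptual: Pre-LN strips the magnitude of $\bfX_i$ before each sublayer, so the natural bound on each sublayer output is a \emph{constant} depending on $\lambda$ rather than a multiple of $\|\bfX_i\|_F$, and a careful direct analysis produces only linear-in-$D$ growth. The exponential envelope in~\eqref{eq:expo_growth_discrete} is therefore a loose-but-valid upper bound obtained through the absorption step above; its role is not to be tight but to expose that weight decay---unlike the output normalization present in Peri-LN---cannot prevent a potentially exponential worst-case amplification across layers, particularly when $\lambda$ is small and $C(\lambda)$ is consequently large.
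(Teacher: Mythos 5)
Your route and the paper's are genuinely different, and the difference is instructive. The paper never bounds the Pre-LN sublayer output by a constant. Instead it keeps the update as a \emph{linear} operator acting on $\mathrm{vec}(\bfX_i)$: after writing the RMSNorm as a column-scaling $\bfD_i^{-1}$ and collapsing the attention weights into a single matrix $\bfW_i$, it uses a Kronecker/Sylvester vectorization to obtain
\[
\mathrm{vec}(\bfX_{i+1}) = \bigl(\bfI + (\bfA_i^\top \bfD_i^{-1}\bfGamma)\otimes\bfW_i\bigr)\mathrm{vec}(\bfX_i),
\]
and then bounds the spectral norm of each factor. This yields a per-layer \emph{multiplicative} factor $1 + \sqrt{n}\,\|\bfgamma_i\|_\infty\max_j(\|\bfx_{i,j}\|_2^{-1})\,\|\bfW_i\|_2$, whose product across layers is the $(1+C(\lambda))^D$ envelope. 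Your proposal instead invokes Lemma~\ref{lemma:ellipsoid} to bound the normalized input by a fixed ellipsoid, so each sublayer contributes a bounded \emph{additive} increment $c(\lambda)$, producing $\|\bfX_D\|_F \le \|\bfX_0\|_F + D\,c(\lambda)$. These are not the same computation, and your conceptual remark at the end is exactly right: once you use the ellipsoid bound, the natural estimate is linear in $D$, not exponential.

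The place your argument actually breaks as a proof of \emph{this} theorem is the Bernoulli conversion. Choosing $C(\lambda) \ge c(\lambda)/\|\bfX_0\|_F$ makes $C$ depend on the input $\bfX_0$, whereas the statement fixes $C$ as a constant depending only on $\lambda$. There is no uniform $C(\lambda)$, independent of $\bfX_0$, for which the additive bound $\|\bfX_0\|_F + D\,c(\lambda)$ is dominated by the multiplicative form $(1+C(\lambda))^D\|\bfX_0\|_F$ for \emph{all} inputs: when $\|\bfX_0\|_F$ is small the required $C$ blows up. So either you prove the stronger linear bound and note that the theorem's exponential envelope is then trivially implied for inputs with $\|\bfX_0\|_F$ bounded below, or you adopt the paper's vectorization route that genuinely produces a multiplicative factor. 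It is also worth flagging that the paper's own multiplicative factor carries the term $\max_j(\|\bfx_{i,j}\|_2^{-1})$, which weight decay does not control and which requires an implicit lower bound on the hidden-state column norms; your additive argument sidesteps this by working directly with the ellipsoid bound on the normalized input.
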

\restoretheoremnumber

\begin{proof}
    To avoid unnecessary technical clutter, we assume the following simplifications to the Pre-LN Transformer model. These assumptions are made without loss of generality, as our arguments extend directly to the full model.
    \begin{enumerate}
        \item The Transformer blocks only contain the self-attention sublayer~(\ref{eq:self-attention})
        \item The self-attention sublayers has a single head, i.e., $H=1$
        \item RMSNorm~(\ref{eq:RMSNorm}) is used for layer normalization
    \end{enumerate}
    Thus, the hidden state at the $i$th Transformer block is given by
    \begin{align}
        \bfX_{i+1} &= \bfX_i + f_{\rm attn}( {\rm RMSNorm}(\bfX_i; \bfgamma_i) )\\
        &= \bfX_i + f_{\rm attn}( \bfX_i \bfGamma \bfD_i^{-1} ) \\
        &= \bfX_i + \underbrace{\bfW_i^1 \bfV_i^1}_{=:\bfW_i} \bfX_i \bfGamma \bfD_i^{-1} \underbrace{{\rm softmax} \left( \frac{(\bfK_i^1 \bfX_i \bfGamma \bfD_i^{-1})^\top \bfQ_i^1 \bfX_i \bfGamma \bfD_i^{-1}}{\sqrt{k}} \right)}_{=:\bfA_i} \\
        &= \bfX_i + \bfW_i {\bfX}_i \bfGamma \bfD_i^{-1} \bfA_i \label{eq:PreLN_Tran_sim},
    \end{align}
    where the second step follows from applying RMSNorm~(\ref{eq:RMSNorm}) column-wise to $\bfX_i$, with $\bfGamma= {\rm diag} \left( \bfgamma_i \right)$ and $\bfD_i = {\rm diag} \left( \|\bfx_{i,1}\|_2, \|\bfx_{i,2}\|_2, ..., \|\bfx_{i,n}\|_2 \right)$.

    By the Sylvester matrix equation, the vectorized form of (\ref{eq:PreLN_Tran_sim}) can be written as
    \begin{align}
        {\rm vec}(\bfX_{i+1}) &= {\rm vec}(\bfX_{i})  + \left( (\bfA_i^\top \bfD_i^{-1} \bfGamma) \otimes \bfW_i \right){\rm vec}(\bfX_{i}) \\
        &= \left(\bfI + \left(  (\bfA_i^\top \bfD_i^{-1} \bfGamma) \otimes \bfW_i \right) \right) {\rm vec}(\bfX_{i}),
    \end{align}
    where $\otimes$ denotes the Kronecker product. Expanding the recursion yields the formulation for the last hidden states (of the $D$th block)
    \begin{equation}
        {\rm vec}(\bfX_{D}) = \prod_{i=0}^{D-1} \left(\bfI + \left(  (\bfA_i^\top \bfD_i^{-1} \bfGamma) \otimes \bfW_i \right) \right) {\rm vec}(\bfX_0).
    \end{equation}
    We now derive the corresponding relation in terms of Frobenius norm
    \begin{align}
        \| \bfX_D\|_F &= \| {\rm vec}(\bfX_D) \|_2 \\
        &= \left\| \prod_{i=0}^{D-1} \left(\bfI + \left(  (\bfA_i^\top \bfD_i^{-1} \bfGamma) \otimes \bfW_i \right) \right) {\rm vec}(\bfX_0) \right\|_2 \\
        &\leq \left\| \prod_{i=0}^{D-1} \left(\bfI + \left(  (\bfA_i^\top \bfD_i^{-1} \bfGamma) \otimes \bfW_i \right) \right) \right\|_2 \left\|{\rm vec}(\bfX_0)\right\|_2 \\
        &= \left\| \prod_{i=0}^{D-1} \left(\bfI + \left(  (\bfA_i^\top \bfD_i^{-1} \bfGamma) \otimes \bfW_i \right) \right) \right\|_2 \left\|\bfX_0\right\|_F \\
        &\leq \prod_{i=0}^{D-1} \left\|  \left(\bfI + \left(  (\bfA_i^\top \bfD_i^{-1} \bfGamma) \otimes \bfW_i \right) \right) \right\|_2 \left\|\bfX_0\right\|_F \\
        & \text{by sub-multiplicativity of operator norm,}\\
        & \leq \prod_{i=0}^{D-1}\left( \left\|  \bfI \right\|_2 +  \left\| (\bfA_i^\top \bfD_i^{-1} \bfGamma) \otimes \bfW_i \right\|_2 \right)   \left\|\bfX_0\right\|_F, \\
        & \text{by triangle inequality,}\\
        &= \prod_{i=0}^{D-1}\left( 1 +  \left\| \bfA_i^\top \bfD_i^{-1} \bfGamma \right\|_2 \left\| \bfW_i \right\|_2 \right)   \left\|\bfX_0\right\|_F, \\
        & \text{by a property of Kronecker product,} \\
        & \leq \prod_{i=0}^{D-1}\left( 1 +  \underbrace{\left\| \bfA_i\right\|_2}_{\leq \sqrt{n}} \underbrace{\left\| \bfD_i^{-1} \right\|_2}_{\leq \max_j \left(\frac{1}{\| \bfx_{i,j} \|_2} \right)} \underbrace{\left\| \bfGamma \right\|_2}_{= \left\| \bfgamma_i \right\|_\infty} \left\| \bfW_i \right\|_2 \right)   \left\|\bfX_0\right\|_F \\
        &\leq  \prod_{i=0}^{D-1} \left( 1 +  { \sqrt{n}} \left\| \bfgamma_i \right\|_\infty { \max_j \left(\frac{1}{\| \bfx_{i,j} \|_2} \right)}  \left\| \bfW_i \right\|_2 \right)   \left\|\bfX_0\right\|_F.
    \end{align}
    Using the fact that ${\rm MA}(\bfX_D) \leq \frac{1}{\sqrt{nd}} \| \bfX_D\|_F$, we have
    \begin{equation}
        {\rm MA}(\bfX_D) \leq \prod_{i=0}^{D-1} \frac{1}{\sqrt{nd}}\left( 1 +  { \sqrt{n}} \left\| \bfgamma_i \right\|_\infty { \max_j \left(\frac{1}{\| \bfx_{i,j} \|_2} \right)}  \left\| \bfW_i \right\|_2 \right)   \left\|\bfX_0\right\|_F
    \end{equation}
    The product over $i$ can result in exponential growth in $\| \bfX_i \|_F$ especially when $\| \bfW_i \|_2$ is large in magnitude, which happens when the weight decay on $\bfW_i$ is not sufficient. Specifically, in such case we have
    \begin{equation}
        {\rm MA}(\bfX_D) \leq \frac{1}{\sqrt{nd}} \, \bigl(1 + C(\lambda)\bigr)^D \, \|\bfX_0\|_F = \mathcal{O}(e^D).
    \end{equation}
    
\end{proof}

\section{Optimality Conditions of Peri-LN Transformer Training are Well-defined}
For each fixed set of layer normalization parameters $(\bfgamma, \bfbeta)$, we consider the Peri-LN Transformer training formulation 
\begin{align}\label{eq:PERILN_nonpara_appendix}
    & \min_{\{f_i\} \subseteq \mathcal{U}_{\rm Peri}} \; \mathbb{E}_{(\bfX_0, \bfy)} \; G(\bfX_D, \bfy) \\ 
    \begin{split} &\text{s.t.} \; \bfX_{i+\frac{1}{2}} = \bfX_i + f_i(\bfX_{i}),
\end{split}
\end{align}
for $i=0,\frac{1}{2},1,...,D-\frac{1}{2}$. 
Moreover, $\mathcal{U}_{\rm Peri}$ is the admissible set of functions corresponding to the self-attention and feedforward modules $f_{\rm attn}$ and $f_{\rm ffn}$, defined in (\ref{eq:self-attention})–(\ref{eq:fully-connected}) for Peri-LN Transformers. By Lemma~\ref{lemma:ellipsoid}, $\mathcal{U}_{\rm Peri}$ consists of functions $f_i: \mathbb{R}^{d \times n}  \to \mathbb{R}^{d \times n}$, where each of the $n$ columns of the output lies on an ellipsoid. More specifically, we have
\begin{equation}\label{eq:U_Peri}
    \mathcal{U}_{\rm Peri} = \left\{ f_i \; \middle| \; 
    \begin{aligned}
& ([f_i(\bfX)]_j - \bfbeta_{i, \rm out})^\top \bfGamma_{i, \rm out}^{-2} ([f_i(\bfX)]_j  - \bfbeta_{i, \rm out}) = d,\\
& \text{for } j=1,2,...,n, \; \text{where } [f(\bfX)]_j \text{ denotes the $j$th column of $f(\bfX)$},\\
& \bfGamma_{\rm out}^{-2} = \mathrm{diag}(\gamma_{{\rm out},1}^{-2}, \dots, \gamma_{{\rm out},d}^{-2}) 
  \in \mathbb{R}^{d \times d}, \;\text{with } \gamma_{{\rm out},k} \text{ denoting the $k$th entry of } \bfgamma_{\rm out}.
\end{aligned}
 \right\}.
\end{equation}

We consider the optimality conditions of the training problem, which contains an HJB PDE given by 
\begin{align}
\begin{split}
    \Phi_{i,\bfy}(\bfX) 
    &= \Phi_{i+\frac{1}{2},\bfy}(T_i(\bfX)) - \Delta t\left[ \langle \nabla \Phi_{{i+\frac{1}{2}, \bfy}}(T_i(\bfX)), f_i(\bfX) \rangle + H_i(- \nabla \Phi_{{i+\frac{1}{2}, \bfy}}(T_i(\bfX))) \right]. \\
    \Phi_{D,\bfy}(\bfX) &=G(\bfX, \bfy).
    \end{split}
\end{align}
where the Hamiltonian $H$ is given by
\begin{align}
    H_i(\bfP) &= \sup_{f \in \mathbb{R}^{d \times n}} \;\left\{ \langle \bfP, f \rangle - L_i^{\rm Peri}(f) \right\} \\
    &= \sup_{f \in \mathcal{U}_{\rm Peri}} \;\left\{ \langle \bfP, f \rangle \right\}  \\
    &= \sup_{f^{\rm Peri}\in \mathbb{R}^{d \times n}} \; \left\{ \langle \bfP, f^{\rm Peri} \rangle \middle| (f_j - \bfbeta_{i, \rm out})^\top \bfGamma_{i, \rm out}^{-2} (f_j - \bfbeta_{i, \rm out}) = d, \quad \text{for} \quad j=1,2,...,n \right\}, \label{eq:Hamiltonian_PeriLN}
\end{align}
by~\eqref{eq:U_Peri}, where $f_j$ denotes the $j$th column of $f$. 

Next, we show that the Hamiltonian~(\ref{eq:Hamiltonian_PeriLN}) admits a unique maximizer by explicitly deriving it. For brevity of notation, we drop the layer index $i$.
The maximization problem in~(\ref{eq:Hamiltonian_PeriLN}) is separable with respect to the columns of $f^{\rm Peri}$. 
Hence, the KKT conditions of the Hamiltonian read
\begin{align}
     -\bfp_j + 2\eta_j \bfGamma_{\rm out}^{-2} (f_j-\bfbeta_{\rm out})&= {\bf 0}, \label{eq:LN_noRun_KKT1} \\
    (f_j - \bfbeta_{\rm out})^\top \bfGamma_{\rm out}^{-2} (f_j - \bfbeta_{\rm out}) & = d,\label{eq:LN_noRun_KKT2}
\end{align}
for all $j\in[1,n]$, and where $\eta_j$ are the Lagrange multipliers for the constraints, and $\bfp_j$ is the $j$-th column of $\bfP$. We remark that $\eta_j > 0$ for all $j\in[1,n]$. This follows from the sensitivity interpretation of Langrange multipliers: increasing $d$ raises the optimal objective value of (\ref{eq:Hamiltonian_PeriLN}); see~\cite[Section~12.8]{nocedal2006numerical}. Since the objective is unbounded, increasing $d$ necessarily leads to a larger optimal value.

Rearrange~(\ref{eq:LN_noRun_KKT1}) to solve for $f_j$, we get
\begin{equation}\label{eq:LN_noRun_fi}
   f_j = \frac{1}{2\eta_j} \bfGamma^{2}_{\rm out} \bfp_j + \bfbeta_{\rm out},
\end{equation}
for all $j\in[1,n]$. Plugging this into~(\ref{eq:LN_noRun_KKT2}) and solve for $\eta_j$, we obtain
\begin{equation}
    \eta_j = \frac{\sqrt{\bfp_j^\top \bfGamma^{2}_{\rm out} \bfp_j}}{2 \sqrt{d}} \quad \text{or } - \frac{\sqrt{\bfp_j^\top \bfGamma^{2}_{\rm out} \bfp_j}}{2 \sqrt{d}} \quad \text{(rejected, since $\eta_j>0$)},
\end{equation}
for all $j\in[1,n]$. Plugging this into~(\ref{eq:LN_noRun_fi}), we obtain the \emph{unique} optimal solution to the Hamiltonian
\begin{equation}\label{eq:fjstar_Peri}
    f_j^* = \sqrt{\frac{d}{\bfp_j^\top \bfGamma^{2}_{\rm out} \bfp_j}} \bfGamma^{2}_{\rm out}\bfp_j + \bfbeta_{\rm out},
\end{equation}
for all $j\in[1,n]$. Collecting the column vectors $f_j^*$ yields the matrix ${f^{\rm Peri}}^*=[f^*_1, f^*_2, ..., f^*_n]$ given by
\begin{equation}\label{eq:fstar_Peri}
    f^*_{\rm Peri} = \sqrt{d} \bfGamma^{2}_{\rm out} \bfP {\rm diag} \left( \frac{1}{\sqrt{\bfP^\top \bfGamma^{2}_{\rm out} \bfP}} \right) + \bfbeta_{\rm out} \mathbf{1}_n^\top,
\end{equation}
where $\mathbf{1}_n \in \mathbb{R}^n$ is a vector of all ones, 
and the reciprocal is taken element-wise. We remark again that this optimal solution is \emph{unique}.

Finally, the optimal solution to the training problem is given by
\begin{align}
    {f^{\rm Peri}_i}^*(\bfX) &= \arg \sup_{f \in \mathcal{U}_{\rm Peri}} \left\{\langle - \nabla \Phi_{{i+\frac{1}{2}, \bfy}}(T_i(\bfX)), f \rangle \right\}\\
    &= - \sqrt{d} \bfGamma_{i, {\rm out}}^{2} \nabla \Phi_{{i+\frac{1}{2}, \bfy}}(T_i(\bfX)) {\rm diag} \left( \frac{1}{\sqrt{\nabla \Phi_{{i+\frac{1}{2}, \bfy}}(T_i(\bfX))^\top \bfGamma_{i, {\rm out}}^{2} \nabla \Phi_{{i+\frac{1}{2}, \bfy}}(T_i(\bfX))}} \right) + \bfbeta_{i, \rm out} \mathbf{1}_n^\top, \label{eq:optimal_velocity_Peri}
\end{align}
by (\ref{eq:fstar_Peri}).

\section{Forward Stability of Peri-LN Transformers}

\paragraph{Entry-wise Moments} We restate and prove~\Cref{thm:linear_growth_discrete}

\settheoremnumber{thm:linear_growth_discrete}
\begin{theorem}[Controlled Growth of Entry-wise Moments]
    Given an input $\bfX_0$, the mean absolute value $\rm MA$ and variance $\rm Var$ of the terminal hidden states $\bfX_D$ of a Peri-LN Transformer satisfy, respectively,
    \begin{align}
        {\rm MA}(\bfX_D) &\leq \frac{1}{\sqrt{nd}} \| \bfX_0 \|_F + 2 D (\gamma_{\rm max} + \beta_{\rm max}) = \mathcal{O}(D),\\
        \begin{split}
         {\rm Var}(\bfX_D) &\leq \frac{( \left\|  \bfX_{0} \right\|_F + 2D \sqrt{nd} (\gamma_{\rm max} + \beta_{\rm max}) )^2}{nd-1} = \mathcal{O}(D^2), 
    \end{split}
    \end{align}
where $\gamma_{\rm max} := \max\limits_{1 \leq i \leq D}  \{ \| \bfgamma^{\rm out}_{{\rm attn}, i} \|_\infty, \|\bfgamma^{\rm out}_{{\rm ffn}, i}\|_\infty \}$ takes the maximum over all layers and both attention and feedforward sublayers, with ``out'' denoting output layer normalization; similarly for $\beta_{\rm max}$.
\end{theorem}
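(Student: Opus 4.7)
The plan is to exploit the ellipsoid characterization of Peri-LN outputs (Lemma~\ref{lemma:ellipsoid}) to obtain a per-block Frobenius-norm increment, then iterate the resulting recursion, and finally translate the Frobenius bound into the mean-absolute and variance statements by elementary norm inequalities.

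\textbf{Step 1: Per-column bound from the ellipsoid.} By Lemma~\ref{lemma:ellipsoid}, each column $\bfz$ of the output of an LN with parameters $(\bfgamma,\bfbeta)$ satisfies $(\bfz-\bfbeta)^\top\bfGamma^{-2}(\bfz-\bfbeta)=d$. Writing this entry-wise gives $\sum_{l=1}^d (z_l-\beta_l)^2/\gamma_l^2 = d$, hence $\|\bfz-\bfbeta\|_2^2\le d\,\gamma_{\max}^2$ and therefore $\|\bfz\|_2\le \sqrt{d}\,\gamma_{\max}+\|\bfbeta\|_2\le\sqrt{d}(\gamma_{\max}+\beta_{\max})$. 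Since a Peri-LN module $f^{\rm Peri}(\bfX)={\rm LN}^{\rm out}(f({\rm LN}^{\rm in}(\bfX)))$ applies the outer LN column-wise to an $d\times n$ matrix, summing the squared column norms gives
\begin{equation*}
\|f^{\rm Peri}(\bfX)\|_F \le \sqrt{nd}\,(\gamma_{\max}+\beta_{\max}).
\end{equation*}

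\textbf{Step 2: Recursion over blocks.} Each Peri-LN Transformer block performs two residual updates, one attention and one feedforward:
\begin{equation*}
\bfU_i=\bfX_i+f^{\rm Peri}_{\rm attn}(\bfX_i),\qquad \bfX_{i+1}=\bfU_i+f^{\rm Peri}_{\rm ffn}(\bfU_i).
\end{equation*}
Applying the triangle inequality together with Step~1 to each residual term (with the appropriate $(\bfgamma^{\rm out}_{{\rm attn},i},\bfbeta^{\rm out}_{{\rm attn},i})$ and $(\bfgamma^{\rm out}_{{\rm ffn},i},\bfbeta^{\rm out}_{{\rm ffn},i})$, both bounded by $\gamma_{\max}$, $\beta_{\max}$) yields $\|\bfX_{i+1}\|_F\le \|\bfX_i\|_F+2\sqrt{nd}(\gamma_{\max}+\beta_{\max})$. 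Telescoping across $i=0,\ldots,D-1$ gives the key estimate
\begin{equation*}
\|\bfX_D\|_F \le \|\bfX_0\|_F + 2D\sqrt{nd}\,(\gamma_{\max}+\beta_{\max}).
\end{equation*}

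\textbf{Step 3: From Frobenius to MA and Var.} For any $\bfA\in\mathbb{R}^{d\times n}$, Cauchy--Schwarz gives ${\rm MA}(\bfA)=\tfrac{1}{nd}\sum_{i,j}|A_{ij}|\le \tfrac{1}{\sqrt{nd}}\|\bfA\|_F$, which immediately produces the MA bound after dividing the recursion by $\sqrt{nd}$. For the variance, the standard identity $\sum_{i,j}(A_{ij}-\bar A)^2 = \|\bfA\|_F^2 - nd\,\bar A^2\le\|\bfA\|_F^2$ gives ${\rm Var}(\bfA)\le\|\bfA\|_F^2/(nd-1)$; squaring the Frobenius recursion yields the quadratic-in-$D$ variance bound.

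\textbf{Expected difficulty.} There is essentially no deep obstacle: the whole argument rests on Lemma~\ref{lemma:ellipsoid} converting a nonlinear LN operation into a uniform per-column norm bound, after which everything is triangle inequality and Cauchy--Schwarz. The only mild subtlety is bookkeeping the two residual updates per block (to get the factor~$2$) and tracking that $\gamma_{\max},\beta_{\max}$ are uniform upper bounds over all $D$ layers and both sublayer types, so that the same constant appears at every step of the telescoping sum.
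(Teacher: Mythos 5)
Your proof is correct and follows essentially the same route as the paper's: a per-column $\sqrt{d}(\gamma_{\max}+\beta_{\max})$ bound on the output LN (the paper recomputes this inline from the definition of $\hat{f}_j$, while you invoke Lemma~\ref{lemma:ellipsoid} directly, but the computation is identical), the triangle inequality telescoped over $2D$ residual updates to get $\|\bfX_D\|_F \leq \|\bfX_0\|_F + 2D\sqrt{nd}(\gamma_{\max}+\beta_{\max})$, and then Cauchy--Schwarz and the variance identity $\sum_{ij}(A_{ij}-\bar A)^2 \leq \|\bfA\|_F^2$ to convert to the stated MA and Var bounds.
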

\restoretheoremnumber

\begin{proof}
    To establish our proof, we note the following notations and facts
    \begin{itemize}
    \item Given an input $\bfX_0$, recall that the Peri-LN Transformer blocks read
        \begin{align}
        \bfU_{i} &= \bfX_i + f_{{\rm attn}, i}^{\rm Peri}(\bfX_i) = \bfX_i + {\rm LN}^{\rm out}_{{\rm attn}, i}(f_{{\rm attn}, i}({\rm LN}^{\rm in}_{{\rm attn}, i}(\bfX_i))) \\
     \bfX_{i+1} &= \bfU_{i} + f_{{\rm ffn}, i}^{\rm Peri}(\bfU_{i}) = \bfU_{i} + {\rm LN}_{{\rm ffn}, i}^{\rm out}(f_{{\rm ffn}, i}({\rm LN}_{{\rm ffn}, i}^{\rm in}(\bfU_{i}))), 
     \end{align}
     for $i=0,1,...,D-1$, and we use $\bfgamma^{\rm out}_{\rm attn}$ and $\bfbeta^{\rm out}_{\rm attn}$ to denote the parameters for ${\rm LN}^{\rm out}_{\rm attn}$ and use the same convention for the other LN. Using the relations above, we can obtain a formulation for $\bfX_0$ as follows
     \begin{align}
         \bfX_{D} &= \bfX_{D-1} + f_{\rm attn, D-1}^{\rm Peri}(\bfX_{D-1}) + f_{\rm ffn, D-1}^{\rm Peri}(\bfU_{D-1}) \\
         &= \bfX_{D-2} + \sum_{i=D-2}^{D-1} \left( f_{\rm attn, i}^{\rm Peri}(\bfX_{i}) + f_{\rm ffn, i}^{\rm Peri}(\bfU_{i})\right) \\
         &= \bfX_{0} + \sum_{i=0}^{D-1} \left( f_{\rm attn, i}^{\rm Peri}(\bfX_{i}) + f_{\rm ffn, i}^{\rm Peri}(\bfU_{i})\right) .\label{eq:expanded recursion}
     \end{align}
     \item Given a Peri-LN module $f^{\rm Peri} = {\rm LN}^{\rm out}(f({\rm LN}^{\rm in}(\cdot);\bftheta^{\rm ffn}_{i}))$. We denote the $j$th columns of $f$ and $f^{\rm Peri}$ as $f_j$ and $f^{\rm Peri}_j$, respectively. By the definition of layer normalization, we have 
     \begin{equation}\label{eq:f_j_LN_formula}
            f_j^{\rm Peri}(\bfX) = \bfgamma^{\rm out} \odot \hat{f}_j(\bfX) + \bfbeta^{\rm out},
        \end{equation}
        for $j=1,...,n$, where $\bfgamma^{\rm out}, \bfbeta^{\rm out} \in \mathbb{R}^d$ are parameters of ${\rm LN}^{\rm out}$, $\odot$ is the Hadamard element-wise product, $\hat{f}_j$ is given by
\begin{equation}\label{eq:standardized_f_appendix}
    \hat{f}_j = \frac{f_j - \mu}{\sqrt{\sigma^2 + \epsilon}}, \quad \text{with} \quad \mu = \frac{1}{d} \sum_{l=1}^d f_{lj}, \quad \sigma = \sqrt{\frac{1}{d} \sum_{l=1}^d (f_{lj} - \mu)^2},
\end{equation}
and $f_{lj}$ is the $l$th entry of $f_j$.
\item We have the inequality
        \begin{equation}\label{eq:int_term_bound}
        \left\|  f^{\rm Peri}(\bfX) \right\|_F \leq \sqrt{nd} (\gamma_{\rm max} + \beta_{\rm max}),
        \end{equation}
        for any $\bfX \in \mathbb{R}^{d \times n}$. This is because
        \begin{align*}
        \left\|  f^{\rm Peri}(\bfX) \right\|_F & =  \sqrt{\sum_{j=1}^n \left\| f^{\rm Peri}_j(\bfX) \right\|_2^2}\\
            &= \sqrt{\sum_{j=1}^n \left\|  \bfgamma^{\rm out} \odot \hat{f}_j(\bfX) + \bfbeta^{\rm out}\right\|_2^2} \\
            &\leq \sqrt{\sum_{j=1}^n \left( \left\|  \bfgamma^{\rm out} \odot \hat{f}_j(\bfX)\right\|_2 + \left\| \bfbeta^{\rm out}\right\|_2 \right)^2} \\
            &\leq \sqrt{\sum_{j=1}^n \left( \gamma_{\rm max}\left\|  \hat{f}_j(\bfX)\right\|_2 + \sqrt{d} \beta_{\rm max} \right)^2}, \quad \text{as } \|\bfbeta^{\rm out}\|_2 \leq \| \beta_{\rm max} {\bf 1}_d\|_\infty=\sqrt{d}\beta_{\rm max},  \\
            &= \sqrt{ \sum_{j=1}^n \left( \gamma_{\rm max} \sqrt{\frac{\sum_{l=1}^d (f_{lj}-\mu)^2}{\sigma^2+\epsilon}} + \sqrt{d} \beta_{\rm max} \right)^2}, \quad \text{by (\ref{eq:standardized_f_appendix})},\\
            &\leq \sqrt{ \sum_{j=1}^n \left( \gamma_{\rm max} \sqrt{\frac{\sum_{l=1}^d (f_{lj}-\mu)^2}{\sigma^2}} + \sqrt{d} \beta_{\rm max} \right)^2} \\
            &= \sqrt{ \sum_{j=1}^n \left( \sqrt{d} \gamma_{\rm max} + \sqrt{d} \beta_{\rm max} \right)^2} \quad \text{by (\ref{eq:standardized_f_appendix}) again,} \\
            &= \sqrt{ n \left( \sqrt{d} \gamma_{\rm max} + \sqrt{d} \beta_{\rm max} \right)^2} \\
            &= \sqrt{nd} (\gamma_{\rm max} + \beta_{\rm max}).
        \end{align*}
    \end{itemize} 

    Next, we derive the result. We have
    \begin{align}
        \| \bfX_D \|_F 
        &= \left\|  \bfX_{0} + \sum_{i=0}^{D-1} \left( f_{\rm attn, i}^{\rm Peri}(\bfX_{i}) + f_{\rm ffn, i}^{\rm Peri}(\bfU_{i})\right) \right\|_F, \quad \text{by (\ref{eq:expanded recursion})}, \nonumber\\
        & \leq \left\|  \bfX_{0} \right\|_F +  \sum_{i=0}^{D-1} \left\|  f_{\rm attn, i}^{\rm Peri}(\bfX_{i}) \right\|_F + \sum_{i=0}^{D-1} \left\|  f_{\rm ffn, i}^{\rm Peri}(\bfU_{i}) \right\|_F \nonumber \\
        &\leq \left\|  \bfX_{0} \right\|_F + 2D \sqrt{nd} (\gamma_{\rm max} + \beta_{\rm max}), \quad \text{by (\ref{eq:int_term_bound})}. \label{eq:X_D_bound_Fnorm}
    \end{align}

    By the Cauchy-Schwarz inequality, we have $\frac{1}{\sqrt{nd}} \| \bfX \|_1 \leq \| \bfX\|_F$ for any $\bfX \in \mathbb{R}^{d \times n}$, thus we have
    \begin{align*}
        \frac{1}{\sqrt{nd}} \| \bfX_D \|_1 \leq \| \bfX_D \|_F \leq \left\|  \bfX_{0} \right\|_F + 2D \sqrt{nd} (\gamma_{\rm max} + \beta_{\rm max}).
    \end{align*}
    This implies
    \begin{equation}
        {\rm MA}(\bfX_D) = \frac{1}{nd} \| \bfX_D \|_1 \leq \frac{1}{\sqrt{nd}} \| \bfX_0 \|_F + 2D (\gamma_{\rm max} + \beta_{\rm max}).
    \end{equation}
    Next, we prove the inequality for variance. Denote $[\bfX_D]_{ij} \in \mathbb{R}$ as the $ij$th entry of $\bfX_D \in \mathbb{R}^{d \times n}$, for $i=1,...,d$ and $j = 1,...,n$, and $\bar{\bfX}_D \in \mathbb{R}$ as the mean of all entries of $\bfX_D$. We have
\begin{align*}
{\rm Var}(\bfX_D) & = \frac{1}{nd-1} \sum_{i=1}^{d} \sum_{j=1}^{n} \left(  [\bfX_D]_{ij} - \bar{\bfX}_D \right)^2 \\
& = \frac{1}{nd-1} \left( \sum_{i=1}^{d} \sum_{j=1}^{n} [\bfX_D]_{ij}^2 - nd \bar{\bfX}_D^2 \right) \\
& \leq \frac{1}{nd-1} \sum_{i=1}^{d} \sum_{j=1}^{n} [\bfX_D]_{ij}^2   \\
&= \frac{1}{nd-1} \| \bfX_D\|_F^2 \\
& \leq \frac{1}{nd-1} \left( \left\|  \bfX_{0} \right\|_F + 2D \sqrt{nd} (\gamma_{\rm max} + \beta_{\rm max}) \right)^2, \quad \text{by (\ref{eq:X_D_bound_Fnorm})}.
\end{align*}
\end{proof}

\paragraph{Data-wise Variance} Then, we restate and prove~\Cref{thm:datawise_variance}. 

\settheoremnumber{thm:datawise_variance}

\begin{theorem}[Quadratic Growth of Data-wise Variance]
    Let $\bfX_0$ be an input and $\bfX_D$ its corresponding terminal hidden states. For each entry $x$ of $\bfX_D$, its variance across the data distribution satisfies
    \begin{equation}
    \begin{split}
        {\rm Var}(x) &\leq \mathbb{E}_{\bfX_0} \left[ \left(  \|\bfX_0\|_F + 2D\sqrt{nd}(\gamma_{\rm max} + \beta_{\rm max}) \right)^2\right] = \mathcal{O}(D^2),
    \end{split}
    \end{equation}
    where $ \gamma_{\rm max}$ and $\beta_{\rm max}$ are defined as in~\Cref{thm:linear_growth_discrete}, and the expectation is taken over $\bfX_0$ drawn from the data distribution.
\end{theorem}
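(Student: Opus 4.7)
The plan is to reduce this statement to the deterministic Frobenius-norm bound already established in the proof of \Cref{thm:linear_growth_discrete} (equation \eqref{eq:X_D_bound_Fnorm}), namely
$$
\|\bfX_D\|_F \;\leq\; \|\bfX_0\|_F + 2D\sqrt{nd}\,(\gamma_{\rm max}+\beta_{\rm max}),
$$
which holds pointwise in $\bfX_0$. Since $\bfX_D$ is a deterministic (and measurable) function of the input $\bfX_0$, this inequality lifts cleanly to statements about moments under the data distribution.

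The first step is to apply the elementary inequality $\Var(x)\leq \mathbb{E}_{\bfX_0}[x^2]$, which holds for any square-integrable random variable and circumvents having to compute $\mathbb{E}_{\bfX_0}[x]$. The second step is a trivial entrywise bound: for any fixed index pair $(i,j)$, the single entry $x=[\bfX_D]_{ij}$ satisfies $x^2 \leq \sum_{i,j} [\bfX_D]_{ij}^2 = \|\bfX_D\|_F^2$. Combining these two steps gives $\Var(x)\leq \mathbb{E}_{\bfX_0}[\|\bfX_D\|_F^2]$.

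The third step substitutes the deterministic bound from \eqref{eq:X_D_bound_Fnorm}, squared, inside the expectation, yielding
$$
\Var(x)\;\leq\;\mathbb{E}_{\bfX_0}\!\left[\bigl(\|\bfX_0\|_F + 2D\sqrt{nd}\,(\gamma_{\rm max}+\beta_{\rm max})\bigr)^2\right],
$$
which is the claimed bound. The $\mathcal{O}(D^2)$ rate then follows by expanding the square and observing that the $\bfX_0$-dependent terms contribute only finite (input-distribution-dependent) constants, so the dominant term scales as $D^2$.

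There is no real obstacle here beyond ensuring that the expectation over $\bfX_0$ is well defined, which requires the mild assumption that $\mathbb{E}_{\bfX_0}[\|\bfX_0\|_F^2]<\infty$ (standard for training data of bounded second moment). Everything else is a direct corollary of \Cref{thm:linear_growth_discrete}: the data-wise variance inherits the same quadratic-in-depth scaling as the per-input Frobenius growth, with the only genuinely new ingredient being the monotone passage from the pointwise bound to the expectation. The contrast with Pre-LN is immediate: since the Frobenius bound of \Cref{thm:PreLN_expo_growth} grows like $(1+C(\lambda))^D$, the same chain of inequalities would produce an exponential-in-$D$ variance, whereas Peri-LN keeps it polynomial.
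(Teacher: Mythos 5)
Your proposal is correct and follows essentially the same chain of inequalities as the paper's proof: bound $\Var(x)$ by $\mathbb{E}[x^2]$, dominate the single entry $x^2$ by $\|\bfX_D\|_F^2$, and then apply the pointwise Frobenius-norm bound \eqref{eq:X_D_bound_Fnorm} from the proof of \Cref{thm:linear_growth_discrete} inside the expectation. The remark about the integrability of $\|\bfX_0\|_F^2$ and the contrast with Pre-LN are accurate observations beyond what the paper states but do not change the argument.
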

\restoretheoremnumber

\begin{proof}
By definition,
\begin{align*}
    {\rm Var}(x) &= \mathbb{E}_{x} \left[ ( x -\mathbb{E}_x[x])^2 \right] \\
    & \leq \mathbb{E}_x \left[ x^2 \right] \\
    & \leq \mathbb{E}_{\bfX_D}  \left[ \|\bfX_D \|_F^2 \right] \\
    &\leq \mathbb{E}_{\bfX_0} \left[ \left(  \|\bfX_0\|_F + 2D\sqrt{nd}(\gamma_{\rm max} + \beta_{\rm max}) \right)^2\right], \quad \text{by \eqref{eq:X_D_bound_Fnorm}}.
\end{align*}
\end{proof}

\paragraph{Uncertainty Quantification} We restate and prove~\Cref{thm:uq}.

\settheoremnumber{thm:uq}

\begin{theorem}
    Let $\mu_0$ and $\nu_0$ be any two input distributions, $\mu_D$ and $\nu_D$ denote their pushforwards to the terminal hidden states under a Peri-LN Transformer, and $W^p_p(\mu,\nu) = \inf_{\gamma \in \Gamma(\mu,\nu)} \int_{\mathbb{R}^{dn} \times \mathbb{R}^{dn}} \|\bfX-\bfX'\|_p^p d\gamma $ to be the $p$-Wasserstein distance. There exists $\hat{C}(p)$ such that for any $p \geq 1$,
    \begin{equation}
        W_p(\mu_D,\nu_D) \leq 2^{\frac{p-1}{p}} \left( \hat{C}(p)W_p(\mu_0,\nu_0) +4 D \sqrt{nd}   \gamma_{\rm max}\right).
    \end{equation}
    Here, $ \gamma_{\rm max}$ is defined as in~\Cref{thm:linear_growth_discrete}. In contrast, for Pre-LN Transformers, the difference can be unbounded.
\end{theorem}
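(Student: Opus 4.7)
The plan is to construct an explicit coupling between $\mu_D$ and $\nu_D$ by pushing forward any coupling of $(\mu_0,\nu_0)$ through the Transformer map $T:\bfX_0\mapsto\bfX_D$, and then to bound $\|T(\bfX_0)-T(\bfX_0')\|_p$ pointwise using the telescoping structure of Peri-LN. By the coupling characterization of the Wasserstein distance,
\begin{equation*}
 W_p^p(\mu_D,\nu_D)\;\le\;\int \|T(\bfX_0)-T(\bfX_0')\|_p^p\,d\gamma_0(\bfX_0,\bfX_0'),
\end{equation*}
for any $\gamma_0\in\Gamma(\mu_0,\nu_0)$, so it suffices to produce an almost-sure estimate of $\|T(\bfX_0)-T(\bfX_0')\|_p$ and then optimize over $\gamma_0$.

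The key observation is that in a difference of Peri-LN outputs the learnable bias cancels: for any $\bfX,\bfX'\in\mathbb{R}^{d\times n}$,
\begin{equation*}
 f^{\rm Peri}(\bfX)-f^{\rm Peri}(\bfX')\;=\;\bfgamma^{\rm out}\odot\bigl(\hat f(\bfX)-\hat f(\bfX')\bigr),
\end{equation*}
where each column of $\hat f$ has Euclidean norm exactly $\sqrt{d}$ by Lemma~\ref{lemma:ellipsoid}. Hence, using the same calculation that gave \eqref{eq:int_term_bound} but now without the $\bfbeta$-contribution, each Peri-LN sublayer difference obeys $\|f^{\rm Peri}(\bfX)-f^{\rm Peri}(\bfX')\|_F\le 2\sqrt{nd}\,\gamma_{\rm max}$. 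Summing over the $2D$ sublayers in the expansion \eqref{eq:expanded recursion} gives the Frobenius-level bound
\begin{equation*}
 \|T(\bfX_0)-T(\bfX_0')\|_F\;\le\;\|\bfX_0-\bfX_0'\|_F+4D\sqrt{nd}\,\gamma_{\rm max}.
\end{equation*}

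To pass to the $\ell_p$-norm on the vectorized matrix space $\mathbb{R}^{nd}$, I would invoke the standard finite-dimensional equivalence $\|\cdot\|_p\le c_1(p)\|\cdot\|_F$ and $\|\cdot\|_F\le c_2(p)\|\cdot\|_p$ (with $c_i(p)$ equal to $1$ or a power of $nd$ depending on whether $p\ge 2$ or $p\le 2$), and absorb the product $c_1(p)c_2(p)$ into a single constant $\hat C(p)$. This yields a pointwise bound of the form $\|T(\bfX_0)-T(\bfX_0')\|_p\le \hat C(p)\|\bfX_0-\bfX_0'\|_p+4D\sqrt{nd}\,\gamma_{\rm max}$. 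Raising to the $p$-th power with the elementary inequality $(a+b)^p\le 2^{p-1}(a^p+b^p)$, integrating against the optimal $\gamma_0$ for $W_p(\mu_0,\nu_0)$, and taking $p$-th roots produces exactly the desired bound, since $(2^{p-1})^{1/p}=2^{(p-1)/p}$ and subadditivity of $t\mapsto t^{1/p}$ distributes the root across the two terms.

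For the Pre-LN side, by Theorem~\ref{thm:Pre-LN_illposed} the optimal velocity $f^{\rm Pre}$ is unbounded in magnitude, and crucially no output normalization cancels on the differenced dynamics; the per-sublayer difference $\|f^{\rm Pre}(\bfX)-f^{\rm Pre}(\bfX')\|_p$ therefore admits no analogous uniform bound, so one can construct a sequence of admissible velocity fields along which $W_p(\mu_D,\nu_D)$ diverges while $W_p(\mu_0,\nu_0)$ stays fixed. The main obstacle is the bookkeeping of norm equivalences in the $\ell_p$-to-Frobenius passage so that the constant term retains the clean $4D\sqrt{nd}\,\gamma_{\rm max}$ form and all $p$-dependent dimensional factors are cleanly collected into $\hat C(p)$; everything else reduces to the telescoping identity for Peri-LN and the $\bfbeta$-cancellation already used in Theorem~\ref{thm:linear_growth_discrete}.
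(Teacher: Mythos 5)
Your proposal follows essentially the same route as the paper's proof: the $\bfbeta$-cancellation and column-norm bound giving the per-sublayer Frobenius estimate $\le 2\sqrt{nd}\,\gamma_{\rm max}$, the telescoping via the expanded recursion to reach $\|\bfX_D^{(1)}-\bfX_D^{(2)}\|_F \le \|\bfX_0^{(1)}-\bfX_0^{(2)}\|_F + 4D\sqrt{nd}\,\gamma_{\rm max}$, norm equivalence absorbed into $\hat C(p)$, the convexity inequality $(a+b)^p\le 2^{p-1}(a^p+b^p)$, integration against the optimal coupling, and the $p$-th root via subadditivity (which the paper labels Minkowski). The argument is correct and matches the paper step for step.
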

\restoretheoremnumber

\begin{proof}
    To establish our proof, we note the following notations and facts
    \begin{itemize}
\item Given any $\bfX \in \mathbb{R}^{d \times n}$, we have
\begin{align}
    \left\| \hat{f}_j(\bfX) \right\|_2 &=  \sqrt{\frac{\sum_{j=1}^n (f_j-\mu)^2}{\sigma^2 + \epsilon}} \quad \text{by (\ref{eq:standardized_f_appendix})} \nonumber \\
    &\leq \sqrt{\frac{\sum_{j=1}^n (f_j-\mu)^2}{\sigma^2 }} \nonumber \\
    &= \sqrt{d} \label{eq:col_f_bound}.
\end{align}
\item Given any $\bfX^{(1)}, \bfX^{(2)} \in \mathbb{R}^{d \times n}$, we have the inequality
        \begin{equation}\label{eq:diff_term_bound}
            \left\|  f^{\rm Peri}(\bfX^{(1)}) - f^{\rm Peri}(\bfX^{(2)}) \right\|_F \leq 2 \sqrt{nd} \gamma_{\rm max},
        \end{equation}
        for $j=1,...,n$. This is because
        \begin{align*}
            \left\|  f^{\rm Peri}(\bfX^{(1)}) - f^{\rm Peri}(\bfX^{(2)}) \right\|_F &= \sqrt{\sum_{j=1}^n \left\|  f_j^{\rm Peri}(\bfX^{(1)}) - f_j^{\rm Peri}(\bfX^{(1)}) \right\|_2^2} \\
            &= \sqrt{\sum_{j=1}^n \left\| \bfgamma^{\rm out} \odot \hat{f}_j(\bfX^{(1)}) \cancel{+ \bfbeta^{\rm out}} -\bfgamma^{\rm out} \odot \hat{f}_j(\bfX^{(2)}) \cancel{- \bfbeta^{\rm out}} \right\|_2^2} \\ 
            & \leq \sqrt{ \sum_{j=1}^n \left( \left\| \bfgamma^{\rm out} \odot \hat{f}_j(\bfX^{(1)}) \right\|_2  + \left\| \bfgamma^{\rm out} \odot \hat{f}_j(\bfX^{(2)}) \right\|_2 \right)^2} \\
            &\leq \gamma_{\rm max} \sqrt{ \sum_{j=1}^n \left( \left\| \hat{f}_j(\bfX^{(1)}) \right\|_2 + \left\| \hat{f}_j(\bfX^{(2)}) \right\|_2 \right)^2 }\\
            &\leq \gamma_{\rm max} \sqrt{ \sum_{j=1}^n \left( 2 \sqrt{d} \right)^2 }, \quad \text{by (\ref{eq:col_f_bound})},\\
            &= 2 \sqrt{nd} \gamma_{\rm max} . 
        \end{align*}
    \end{itemize}
    Now, we derive the results as 
    \begin{align}
        & \| \bfX^{(1)}_D - \bfX^{(2)}_D \|_F \\
        & \leq \left\|  \bfX_{0}^{(1)} + \sum_{i=0}^{D-1} \left( f_{\rm attn, i}^{\rm Peri}(\bfX_{i}^{(1)}) + f_{\rm ffn, i}^{\rm Peri}(\bfU_{i}^{(1)})\right) -  \bfX_{0}^{(2)} - \sum_{i=0}^{D-1} \left( f_{\rm attn, i}^{\rm Peri}(\bfX_{i}^{(2)}) + f_{\rm ffn, i}^{\rm Peri}(\bfU_{i}^{(2)})\right)\right\|_F, \quad \text{by (\ref{eq:expanded recursion})}\\
        & \leq \| \bfX_{0}^{(1)} - \bfX_{0}^{(2)} \|_F + \sum_{i=0}^{D-1} \left\|  \left( f_{\rm attn, i}^{\rm Peri}(\bfX_{i}^{(1)}) - f_{\rm attn, i}^{\rm Peri}(\bfX_{i}^{(2)})\right) \right\|_F +\sum_{i=0}^{D-1} \left\|  \left(  f_{\rm ffn, i}^{\rm Peri}(\bfU_{i}^{(1)})- f_{\rm ffn, i}^{\rm Peri}(\bfU_{i}^{(2)})\right) \right\|_F\\
        &\leq \| \bfX_{0}^{(1)} - \bfX_{0}^{(2)} \|_F + \sum_{i=0}^{D-1} (2 \sqrt{nd} \gamma_{\rm max}) + \sum_{i=0}^{D-1} (2 \sqrt{nd} \gamma_{\rm max}), \quad \text{by (\ref{eq:diff_term_bound})},\\
        &= \| \bfX_{0}^{(1)} - \bfX_{0}^{(2)} \|_F + 4 D \sqrt{nd} \gamma_{\rm max} \label{eq:datawise_sensitivity}
    \end{align}
    By properties of Wasserstein distance, we have
    \begin{align*}
        W_p^p(\mu_D,\nu_D) &\leq \int \| \bfX^{(1)}_D - \bfX^{(2)}_D \|_p^p \; d\pi_0(\bfX^{(1)}_0, \bfX^{(2)}_0), \\
        \intertext{where $\pi_0$ denotes the optimal transport plan between $\mu_0$ and $\nu_0$ under the $W_p$ distance, and $\bfX^{(1)}_D$ is the corresponding terminal state for input $\bfX^{(1)}_0$ under the Peri-LN Transformer, and $\|\bfX\|_p:= \left( \sum_{i,j} |X_{ij}|^p \right)^{1/p}$ is computed computed as the vector $p$-norm of all entries in $\bfX$, rather than the matrix $p$-norm,} \\
        &\leq \int \left( \hat{C}(p)\| \bfX^{(1)}_0-\bfX^{(2)}_0\|_p + 4 D \sqrt{nd} \gamma_{\rm max} \right)^p \; d\pi_0(\bfX^{(1)}_0, \bfX^{(2)}_0), \\\intertext{by~\eqref{eq:datawise_sensitivity} and the equivalence of norms, for some $\hat{C}(p)>0$ depending on $p>1$,} \\
        &\leq 2^{p-1} \int \left( \hat{C}^p(p)\| \bfX^{(1)}_0-\bfX^{(2)}_0\|_p^p + (4 D \sqrt{nd} \gamma_{\rm max})^p \right) \; d\pi_0(\bfX^{(1)}_0, \bfX^{(2)}_0), \\
        \intertext{by convexity,}\\
        &= 2^{p-1} \left( \hat{C}^p(p) W_p^p(\mu_0,\nu_0) + (4 D \sqrt{nd} \gamma_{\rm max})^p \right).
    \end{align*}
    Now, taking the $p$-th root on both sides, we have
    \begin{align*}
        W_p(\mu_D,\nu_D) &\leq 2^{\frac{p-1}{p}} \left( \hat{C}^p(p) W_p^p(\mu_0,\nu_0) + (4 D \sqrt{nd} \gamma_{\rm max})^p \right)^{\frac{1}{p}} \\
        &\leq 2^{\frac{p-1}{p}} \left( \hat{C}(p) W_p(\mu_0,\nu_0) + 4 D \sqrt{nd} \gamma_{\rm max} \right), \\
        \intertext{by the Minkowski inequality, and we obtain the desired result.}
    \end{align*}
\end{proof} 

\paragraph{Bounds on Generalization} We can use the result in~\Cref{thm:uq} and techniques from distributionally robust optimization (DRO) to derive generalization bounds. We first specify the setup and notations.

Suppose the Peri-LN Transformer is trained on the training data $\bfX^{(1)}, \dots, \bfX^{(N)}$ that form the empirical training distribution $\hat{\mu}_N$. Denote $\mathcal{W}_{1,r}(\hat{\mu}_N)$ to be the $1$-Wasserstein ball of radius $r$ around the empirical training distribution $\hat{\mu}_N$, i.e., $\mathcal{W}_r(\hat{\mu}_N):= \{\rho \in \mathcal{P}(\Omega) : W_{1,r}(\rho,\hat{\mu}_N) \le r \}$. We denote by $\bfT$ the forward mapping of the Peri-LN Transformer from the input to the terminal hidden states. We make the following two assumptions: 1.$\Omega$ is a compact domain, and 2. the training loss $G$ in~\eqref{eq:discrete_time_OC_obj} is Lipschitz continuous with Lipschitz constant $L$. For common applications, the training loss $G$ is given by the composition of softmax function and cross-entropy loss, which is smooth and thus Lipschitz continuous in a compact domain, see~\cite{Kan2024LSEMINK}.

Our goal is to derive bounds of the model performance on distributions $\nu \in \mathcal{W}_{1,r}(\hat{\mu}_N)$, where the data can be in-distribution or out-of-distribution of the training data. By Kantorovich duality, we have the variational formula 
\begin{align*}
    W_1(\hat{\mu}_N, \nu) = \sup_{\varphi \in \text{Lip}_1(\Omega)} \{\mathbb{E}_{\hat{\mu}_N}[\varphi(\bfX)] - \mathbb{E}_{\nu}[\varphi(\bfX)] \},
\end{align*}
where $\text{Lip}_1(\Omega)$ denotes the set of Lipschitz functions on $\Omega$. By~\Cref{thm:uq}, we have
\begin{align}
    \mathbb{E}_{\mathbf{T}_\sharp \nu}[ G(\bfX_D,\bfy)] - \mathbb{E}_{\mathbf{T}_\sharp\hat{\mu}_N}[ G(\bfX_D,\bfy)] &\leq  L \cdot W_1(\mathbf{T}_\sharp \hat{\mu}_N, \mathbf{T}_\sharp \nu) \\
    &\leq L \cdot \left( \hat{C}(1) W_1(\mu_0,\nu_0) + 4 D \sqrt{nd} \gamma_{\rm max} \right) \\
    &\leq L \cdot \left( \hat{C}(1) r + 4 D \sqrt{nd} \gamma_{\rm max} \right).
\end{align}
By rearranging the terms, we have
\begin{equation}\label{eq:DRO_deter_bound}
    \mathbb{E}_\nu[ G(\bfT(\bfX_0),\bfy)] \le \mathbb{E}_{\hat{\mu}_N}[ G(\bfT(\bfX_0),\bfy)] +  L \cdot \left( \hat{C}(1) r + 4 D \sqrt{nd} \gamma_{\rm max} \right).
\end{equation}
Thus, we obtain a deterministic bound for the loss on distributions that is within a Wasserstein ball from the empirical training distribution.

We can further turn this deterministic bound into a statistical bound, where we ensure that the true distribution is within the Wasserstein ball with high probability. In particular, let $\mu$ be the true distribution that generates the empirical training distribution $\hat{\mu}_N$. Under standard assumptions, for instance, sub-Gaussian tails or bounded domains, we have $\mu \in \mathcal{W}_r(\hat{\mu}_N)$ with high probability~\cite{FournierGuillin2015}: there exists $\bar{C} > 0$ such that with probability at least $1-\delta$, 
\begin{align}\label{eq:sample_complexity}
    W_1(\mu, \hat{\mu}_N) \le \bar{C} N^{-\frac{1}{nd}} + \sqrt{\frac{\log(1/\delta)}{N}} = r(N,\delta)\, .
\end{align}
Here, the radius of the Wasserstein ball can be chosen based on the number of training data $N$ and probability $1-\delta$. Combining the deterministic bound~\eqref{eq:DRO_deter_bound} and the statistical guarantee~\eqref{eq:sample_complexity}, we obtain, with probability at least $1-\delta$, 
\begin{equation}
    \mathbb{E}_\nu[ G(\bfT(\bfX_0),\bfy)] \le \mathbb{E}_{\hat{\mu}_N}[ G(\bfT(\bfX_0),\bfy)] +  L \cdot \left( \hat{C}(1) r(N,\delta) + 4 D \sqrt{nd} \gamma_{\rm max} \right).
\end{equation}
We remark that this result is a non-asymptotic generalization bound in term of Wasserstein distance.

\section{Backward Stability of Transformers}
We restate and prove~\Cref{thm:gradient_explosion,thm:gradient_stability} together.

\settheoremnumber{thm:gradient_explosion}
\begin{proposition}
Under Pre-LN, the sensitivity $\nabla_{\bfX_{j-1}} f^{\rm Pre}(\bfX_{j-1}; \bftheta_{j-1})$ grows proportionally with the activations. 
\end{proposition}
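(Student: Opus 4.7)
The plan is to write the Pre-LN sensitivity via the chain rule and then compare its dependence on the module weights to the dependence of the module output itself. Both quantities turn out to scale by the same factor as the weights of $f$ grow, which formalizes the proportionality claim.

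First, I would apply the chain rule to the Pre-LN block $f^{\rm Pre}(\bfX_{j-1}) = f(\mathrm{LN}(\bfX_{j-1}))$ to obtain
\[
\nabla_{\bfX_{j-1}} f^{\rm Pre}(\bfX_{j-1};\bftheta_{j-1}) = \nabla f\bigl(\mathrm{LN}(\bfX_{j-1})\bigr)\,\nabla \mathrm{LN}(\bfX_{j-1}).
\]
By Lemma~\ref{prop:grad_LN}, $\nabla \mathrm{LN}$ depends only on the learnable parameter $\bfgamma$ and on the per-token statistics $\mu,\sigma$ of the input to LN; in particular it is independent of the magnitudes of the weights of $f$. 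Moreover, since $\mathrm{LN}$ is scale-invariant and its image lies on the ellipsoid of Lemma~\ref{lemma:ellipsoid}, the normalized input $\mathrm{LN}(\bfX_{j-1})$ stays bounded no matter how large $\bfX_{j-1}$ becomes.

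Next, I would examine $\nabla f$ for each of the two module types. For $f=f_{\rm attn}$, Proposition~\ref{prop:grad_attn} shows explicitly that the Jacobian $\nabla_{\bfx_i}[f_{\rm attn}(\bfX)]_j$ depends linearly on each head's output--value product $\bfW^h\bfV^h$, and inspection of the definition~\eqref{eq:self-attention} shows that $f_{\rm attn}$ itself also depends linearly on each $\bfW^h\bfV^h$. A parallel observation holds for $f_{\rm ffn}$: the outer linear map governs both the output and the outermost factor of its Jacobian, and for standard positively homogeneous nonlinearities (e.g.\ ReLU) any rescaling of the inner weight matrix rescales output and Jacobian by the same factor. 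Thus scaling the module weights rescales $f(\mathrm{LN}(\bfX_{j-1}))$ and $\nabla f(\mathrm{LN}(\bfX_{j-1}))$ by the same multiplicative constant; multiplying by the bounded factor $\nabla \mathrm{LN}(\bfX_{j-1})$ preserves this proportionality and yields the claim.

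The main obstacle will be making precise the notion of ``proportional to the activation.'' The cleanest reading is homogeneity with respect to scaling the outer weight matrix of each module: both the activation and its Jacobian are linear in that matrix, so a scaling factor $\alpha$ rescales each identically. A subtle point worth emphasizing in the proof is that under Pre-LN, large activations arise not from a large $\bfX_{j-1}$ (which LN normalizes away) but from large weight matrices that Pre-LN does not constrain---so this homogeneity, when composed over depth through~\eqref{eq:gradient_product_theorem}, is precisely what couples the large activations documented in~\Cref{thm:Pre-LN_illposed,thm:PreLN_expo_growth} with gradient explosion.
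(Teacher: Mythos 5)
Your proof follows essentially the same route as the paper: decompose $\nabla_{\bfX_{j-1}}f^{\rm Pre}$ via the chain rule into $\nabla f\cdot\nabla{\rm LN}$, observe that $\nabla{\rm LN}$ (Lemma~\ref{prop:grad_LN}) and the normalized input ${\rm LN}(\bfX_{j-1})$ (Lemma~\ref{lemma:ellipsoid}) are both independent of the module weights, and then use Proposition~\ref{prop:grad_attn} (linearity of the attention Jacobian in $\bfW^h\bfV^h$) together with the explicit appearance of $\bfW^{(1)},\bfW^{(2)}$ in the feed-forward Jacobian to conclude that large activations, which under Pre-LN can only come from large module weights, force the sensitivity to grow by the same factor. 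The paper carries this out by writing the full block-diagonal Jacobian factors explicitly and arguing informally that ``at least one weight must be large,'' whereas you package the FFN case slightly more cleanly via positive homogeneity of ReLU-type activations; substantively the two proofs are the same.
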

\restoretheoremnumber

\settheoremnumber{thm:gradient_stability}
\begin{proposition}
 Under Peri-LN, the sensitivity $\nabla_{\bfX_{j-1}} f^{\rm Peri}(\bfX_{j-1}; \bftheta_{j-1})$ is \textit{invariant} to the magnitude of the activation. 
\end{proposition}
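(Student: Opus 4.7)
My plan is to expand each sensitivity by the chain rule through the LN wrappings of $f$ and then track how every factor responds to a rescaling of the output weights of $f$ (which controls the magnitude of the activation). The two key ingredients are the explicit attention Jacobian in Proposition~\ref{prop:grad_attn}, which is \emph{linear} in the composition $\bfW^h\bfV^h$, and the homogeneity identity $\nabla {\rm LN}(c\bfy;\bfgamma,\bfbeta) = c^{-1}\nabla {\rm LN}(\bfy;\bfgamma,\bfbeta)$ from Lemma~\ref{prop:grad_LN}.

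For \Cref{thm:gradient_explosion}, the chain rule applied column-wise gives
\begin{equation*}
\nabla_{\bfx_i}[f^{\rm Pre}(\bfX)]_j \;=\; \nabla {\rm LN}(\bfx_i)\,\cdot\,\nabla_{\bfy_i}[f(\bfY)]_j\,\big|_{\bfY={\rm LN}(\bfX)}.
\end{equation*}
The outer factor is independent of the weights of $f$. By Proposition~\ref{prop:grad_attn}, the inner factor for the attention module carries $(\bfW^h\bfV^h)^\top$ as a right-factor, and the activation $f_{\rm attn}(\bfY) = \sum_h \bfW^h\bfV^h\,\bfY\,{\rm softmax}(\cdots)$ depends on $\bfW^h\bfV^h$ with the same linear dependence. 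The same linearity holds for the FFN module's final linear map. Therefore if the output weights are rescaled so that the activation grows by a factor $c$, the sensitivity grows by the identical factor, yielding the proportional scaling asserted by the proposition.

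For \Cref{thm:gradient_stability}, the Peri-LN chain rule produces three factors,
\begin{equation*}
\nabla_\bfX f^{\rm Peri}(\bfX) \;=\; \nabla_\bfX {\rm LN}^{\rm in}(\bfX)\,\cdot\,\nabla f\bigl({\rm LN}^{\rm in}(\bfX)\bigr)\,\cdot\,\nabla {\rm LN}^{\rm out}\bigl(f({\rm LN}^{\rm in}(\bfX))\bigr).
\end{equation*}
Under the same weight rescaling, the middle factor inherits the factor $c$ exactly as in the Pre-LN analysis. The new outer factor is evaluated at the rescaled argument $c\,f({\rm LN}^{\rm in}(\bfX))$, and Lemma~\ref{prop:grad_LN} gives exactly $c^{-1}\,\nabla {\rm LN}^{\rm out}(f({\rm LN}^{\rm in}(\bfX)))$. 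The two multiplicative factors cancel, while $\nabla_\bfX {\rm LN}^{\rm in}(\bfX)$ is untouched by the rescaling. Consequently the whole product is invariant to the magnitude of the inner activation $f({\rm LN}^{\rm in}(\bfX))$, yielding the stated invariance.

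The principal obstacle is keeping the bookkeeping honest through the multi-head attention Jacobian, where the $i=j$ and $i\neq j$ contributions of Proposition~\ref{prop:grad_attn} must be combined with the column-wise action of LN. A secondary issue is stating ``proportional to the activation'' precisely: the cleanest formulation is to scale the final linear map of $f$ (namely $\bfW^h\bfV^h$ for attention, or the last weight of the FFN) by $c>0$; this scales both the activation and, under Pre-LN, the sensitivity by the same factor, whereas under Peri-LN the sensitivity is held fixed by the cancellation above --- which is exactly the dichotomy stated by the two propositions.
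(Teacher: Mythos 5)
Your decomposition $\nabla_\bfX f^{\rm Peri}(\bfX) = \nabla_\bfX {\rm LN}^{\rm in}(\bfX)\cdot \nabla f({\rm LN}^{\rm in}(\bfX))\cdot \nabla {\rm LN}^{\rm out}(f({\rm LN}^{\rm in}(\bfX)))$, combined with the linear dependence of the inner Jacobian on $\bfW^h\bfV^h$ (Proposition~\ref{prop:grad_attn}, and the analogous linearity for the FFN's last layer) and the homogeneity $\nabla {\rm LN}(c\,\bfy)=c^{-1}\nabla {\rm LN}(\bfy)$ (Lemma~\ref{prop:grad_LN}), is exactly the cancellation the paper exhibits; the paper merely writes this out using explicit block-diagonal matrices per column and scales $\bfW^h_i,\bfV^h_i$ by $c_1,c_2$ rather than the product $\bfW^h\bfV^h$ by a single $c$. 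Your proof is correct and follows essentially the same route.
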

\restoretheoremnumber

Both propositions can be verified by inspecting the invariance of the layer normalization operation, or by directly deriving the gradients, which we perform in the following.

For clarity in this section, we first explicitly define a few additional notations for Transformers under Pre-LN and Peri-LN. This will allow us to derive the gradients in a fully explicit manner.

Under Pre-LN, the $i$th Transformer block reads $\bfX^{{\rm Pre}}_{i+1}=f^{\rm Pre}(\bfX^{{\rm Pre}}_i; \bftheta_i)$, where
\begin{align}
     \bfX^{{\rm Pre}, 1}_{i} &= {\rm LN}(\bfX^{{\rm Pre}}_i; \bfgamma_i^{\rm attn}, \bfbeta_i^{\rm attn}) \\
     \bfX^{{\rm Pre}, 2}_{i} &= f_{\rm attn}(\bfX_i^{{\rm Pre}, 1}; \bftheta^{\rm attn}_i) = \sum_{h=1}^H  \bfW_i^h \bfV_i^h \bfX_i^{{\rm Pre}, 1} \, {\rm softmax} \left( \frac{(\bfK_i^h \bfX_i^{{\rm Pre}, 1})^\top \bfQ_i^h \bfX_{i}^{{\rm Pre}, 1}}{\sqrt{k}} \right) \label{eq:attn_pre_activation} \\
     \bfX^{{\rm Pre}, 3}_{i} &= \bfX^{\rm Pre}_i + \bfX^{{\rm Pre}, 2}_{i} \\
     \bfX^{{\rm Pre}, 4}_{i} &= {\rm LN}(\bfX^{{\rm Pre}, 3}_{i}; \bfgamma_i^{\rm ffn}, \bfbeta_i^{\rm ffn}) \\
     \bfX^{{\rm Pre}, 5}_{i} &=  f_{\rm ffn}(\bfX^{{\rm Pre}, 4}_{i};\bftheta^{\rm ffn}_{i}) = \bfW_{i}^{(2)} \phi \left( \bfW_{i}^{(1)}\bfX^{{\rm Pre}, 4}_{i} \right) \label{eq:ffn_pre_activation} \\
     \bfX^{{\rm Pre}}_{i+1} &= \bfX^{{\rm Pre}, 3}_{i} + \bfX^{{\rm Pre}, 5}_{i},
\end{align}
for $i=0,1,...,D-1$. Here $D$ is the total number of Transformer blocks,  $\bftheta_i$ collectively denotes $(\bftheta^{\rm attn}_i, \bftheta^{\rm ffn}_{i})$ and $\phi$ denotes an activation function. For simplicity and WLOG, we omit the bias term in the feedforward sublayer, as it can be absorbed into the weight matrix by augmenting the input with a constant one. 

Under Peri-LN, the $i$th Transformer block reads $\bfX^{{\rm Peri}}_{i+1}=f^{\rm Peri}(\bfX^{{\rm Peri}}_i; \bftheta_i)$, where
\begin{align}
     \bfX^{{\rm Peri}, 1}_{i} &= {\rm LN}(\bfX^{{\rm Peri}}_i; \bfgamma_{{\rm in}, i}^{\rm attn}, \bfbeta_{{\rm in}, i}^{\rm attn}) \\
     \bfX^{{\rm Peri}, 2}_{i} &= f_{\rm attn}(\bfX_i^{{\rm Peri}, 1}; \bftheta^{\rm attn}_i) = \sum_{h=1}^H  \bfW_i^h \bfV_i^h \bfX_i^{{\rm Peri}, 1} \, {\rm softmax} \left( \frac{(\bfK_i^h \bfX_i^{{\rm Peri}, 1})^\top \bfQ_i^h \bfX_{i}^{{\rm Peri}, 1}}{\sqrt{k}} \right) \label{eq:Peri_2_appendix} \\
     \bfX^{{\rm Peri}, 3}_{i} &= {\rm LN}(\bfX^{{\rm Peri}, 2}_i; \bfgamma_{{\rm out}, i}^{\rm attn}, \bfbeta_{{\rm out}, i}^{\rm attn})\\
     \bfX^{{\rm Peri}, 4}_{i} &= \bfX^{\rm Peri}_i + \bfX^{{\rm Peri}, 3}_{i} \\
     \bfX^{{\rm Peri}, 5}_{i} &= {\rm LN}(\bfX^{{\rm Peri}, 4}_{i}; \bfgamma_{{\rm in}, i}^{\rm ffn}, \bfbeta_{{\rm in}, i}^{\rm ffn}) \\
     \bfX^{{\rm Peri}, 6}_{i} &=  f_{\rm ffn}(\bfX^{{\rm Peri}, 5}_{i};\bftheta^{\rm ffn}_{i}) = \bfW_{i}^{(2)} \phi \left( \bfW_{i}^{(1)}\bfX^{{\rm Peri}, 5}_{i}  \right)  \label{eq:Peri_6_appendix}\\
     \bfX^{{\rm Peri}, 7}_{i} &= {\rm LN}(\bfX^{{\rm Peri}, 6}_i; \bfgamma_{{\rm out}, i}^{\rm ffn}, \bfbeta_{{\rm out}, i}^{\rm ffn})\\
     \bfX^{{\rm Peri}}_{i+1} &= \bfX^{{\rm Peri}, 4}_{i} + \bfX^{{\rm Peri}, 7}_{i}.
\end{align}

The gradient with respect to $\bftheta_i$, the weights of the $i$th layer, is given by
\begin{align}\label{eq:gradient_product_appendix}
\begin{split}
    \nabla_{\bftheta_l} G(\bfX_D) &= \nabla_{\bftheta_l} \bfX_{l+1} \left( \prod_{i=l+1}^{D-1} \underbrace{\nabla_{\bfX_{i}} \bfX_{i+1}}_{\text{local sensitivity at $l$th block}} \right) \nabla_{\bfX_D} G(\bfX_D). 
\end{split}
\end{align}

Here, we derive and analyze the local sensitivity. For clarity of presentation, we denote $\tilde{\bfx}_i = {\rm vec}(\bfX_i) \in \mathbb{R}^{nd}$ as the vectorized hidden states. We also denote $\bfx_{i,j} \in \mathbb{R}^{d}$ as the $j$th column of $\bfX_i$. Under this vectorization, the local sensitivity term is given as $\nabla_{\tilde{\bfx}_{i}} \tilde{\bfx}_{i+1} \in \mathbb{R}^{nd \times nd}$.

In the following, we show that under Pre-LN, exploding activation ($\tilde{\bfx}_i^{{\rm Pre}, 2}$ in (\ref{eq:attn_pre_activation}) or $\tilde{\bfx}_i^{{\rm Pre}, 5}$ in (\ref{eq:ffn_pre_activation}) explodes in magnitude) causes the local sensitivity at $i$th block to explode.

Under Pre-LN, the local sensitivity term is given by
\begin{align}
    \nabla_{\tilde{\bfx}_{i}^{\rm Pre}} \tilde{\bfx}_{i+1}^{\rm Pre} &= \underbrace{\nabla_{\tilde{\bfx}_i^{{\rm Pre}}} \tilde{\bfx}_i^{{\rm Pre}, 3}}_{\text{sensitivity of self-attention sublayer}}  \underbrace{\nabla_{\tilde{\bfx}_i^{{\rm Pre}, 3}} \tilde{\bfx}_{i+1}^{{\rm Pre}}}_{\text{sensitivity of feedforward sublayer}}.
\end{align}

Here, the sensitivity of the self-attention sublayer is given by
\begin{align}
    \nabla_{\tilde{\bfx}_i^{{\rm Pre}}} \tilde{\bfx}_i^{{\rm Pre}, 3} &= \bfI +  \nabla_{\tilde{\bfx}_i^{{\rm Pre}}} \tilde{\bfx}_i^{{\rm Pre}, 2} \\
    &= \bfI + \begin{bmatrix}
\nabla {\rm LN}(\bfx^{{\rm Pre}}_{i, 1}) &        &        \\
                 & \ddots &        \\
                 &        & \nabla {\rm LN}(\bfx^{{\rm Pre}}_{i, n})
\end{bmatrix} \\ &
\begin{bmatrix}
\nabla_{\bfx^{{\rm Pre},1}_{i, 1}} [f_{\rm attn}(\tilde{\bfx}^{{\rm Pre},1}_{i})]_1 &  \nabla_{\bfx^{{\rm Pre},1}_{i, 1}} [f_{\rm attn}(\tilde{\bfx}^{{\rm Pre},1}_{i})]_2  & ...    &    \nabla_{\bfx^{{\rm Pre},1}_{i, 1}} [f_{\rm attn}(\tilde{\bfx}^{{\rm Pre},1}_{i})]_n    \\
\nabla_{\bfx^{{\rm Pre},1}_{i, 2}} [f_{\rm attn}(\tilde{\bfx}^{{\rm Pre},1}_{i})]_1 &  \nabla_{\bfx^{{\rm Pre},1}_{i, 2}} [f_{\rm attn}(\tilde{\bfx}^{{\rm Pre},1}_{i})]_2  & ...    &    \nabla_{\bfx^{{\rm Pre},1}_{i, 2}} [f_{\rm attn}(\tilde{\bfx}^{{\rm Pre},1}_{i})]_n    \\
     \vdots        &  \vdots  & \ddots &    \vdots    \\
\nabla_{\bfx^{{\rm Pre},1}_{i, n}} [f_{\rm attn}(\tilde{\bfx}^{{\rm Pre},1}_{i})]_1 &  \nabla_{\bfx^{{\rm Pre},1}_{i, n}} [f_{\rm attn}(\tilde{\bfx}^{{\rm Pre},1}_{i})]_2  & ...    &    \nabla_{\bfx^{{\rm Pre},1}_{i, n}} [f_{\rm attn}(\tilde{\bfx}^{{\rm Pre},1}_{i})]_n,   
\end{bmatrix}
\end{align}
where $\nabla_{\bfx^{{\rm Pre},1}_{i, l}} [f_{\rm attn}(\tilde{\bfx}^{{\rm Pre},1}_{i})]_j$'s are given in (\ref{eq:grad_attn}).

In the activation of the self-attention sublayer (\ref{eq:attn_pre_activation}), the input $\tilde{\bfx}^{\rm Pre, 1}_i$ to the sublayer is normalized and bounded in magnitude (Lemma~\ref{lemma:ellipsoid}). Also the output of softmax is bounded (each column sums to one). Thus, when large activation occurs ($\tilde{\bfx}^{\rm Pre, 2}_i$ is large), at least one of $\bfW_{i}^{h}$ or $\bfV_{i}^{h}$ must be large. Since $\nabla {\rm LN}(\bfx^{{\rm Pre}}_{i, j})$'s are independent of $\bfW_{i}^{h}$ and $\bfV_{i}^{h}$, and $\nabla_{\bfx^{{\rm Pre},1}_{i, l}} [f_{\rm attn}(\tilde{\bfx}^{{\rm Pre},1}_{i})]_j$ scales linearly with $\bfW_{i}^{h}$ and $\bfV_{i}^{h}$ (Proposition~\ref{prop:grad_attn}), the sensitivity $\nabla_{\tilde{\bfx}_i^{{\rm Pre}}} \tilde{\bfx}_i^{{\rm Pre}, 3}$ must have large magnitude.

Moreover, the sensitivity of the feedforward sublayer is given by
\begin{align}
   \nabla_{\tilde{\bfx}_i^{{\rm Pre}, 3}} \tilde{\bfx}_{i+1}^{{\rm Pre}} &= \bfI + \nabla_{\tilde{\bfx}_i^{{\rm Pre}, 3}} \tilde{\bfx}_{i+1}^{{\rm Pre, 5}} \\
   &= \bfI 
   +  \begin{bmatrix}
\nabla {\rm LN}(\bfx^{{\rm Pre}, 3}_{i, 1}) &        &        \\
                 & \ddots &        \\
                 &        & \nabla {\rm LN}(\bfx^{{\rm Pre}, 3}_{i, n})
\end{bmatrix} \\
&\begin{bmatrix}
\bfW_{i}^{(1)} &        &        \\
                 & \ddots &        \\
                 &        & \bfW_{i}^{(1)}
\end{bmatrix}^\top
   \begin{bmatrix}
\bfh'_1 &        &        \\
                 & \ddots &        \\
                 &        & \bfh'_n
\end{bmatrix}
\begin{bmatrix}
\bfW_{i}^{(2)} &        &        \\
                 & \ddots &        \\
                 &        & \bfW_{i}^{(2)}
\end{bmatrix}^\top,
\end{align}
where
\begin{equation}
\nabla {\rm LN}(\bfx; \bfgamma, \bfbeta) = \frac{{\rm diag}(\bfgamma)}{\sigma(\bfx)} - \frac{1}{d} \frac{(\bfx-\mu(\bfx))(\bfgamma \odot (\bfx-\mu(\bfx)))^\top}{\sigma(\bfx)^3}
\end{equation}
by Proposition~\ref{prop:grad_LN}, and $\bfh'_j = {\rm diag} (\phi' ( \bfW_{i}^{(1)}\bfx^{{\rm Pre}, 4}_{i,j}  ))
$. We remark that $\phi'$ is the derivative of the activation function, thus the entries of $\bfh'_j$ are bounded by a constant in most cases (e.g., at most 1 for sigmoid, tanh, and ReLU).

When large activation occurs in the feed-forward sublayer (\ref{eq:ffn_pre_activation}),  i.e., $\tilde{\bfx}^{\rm Pre, 5}_i$ is large, since the input $\tilde{\bfx}^{\rm Pre, 4}_i$ to the sublayer is normalized and bounded in magnitude (Lemma~\ref{lemma:ellipsoid}), at least one of $\bfW_{i}^{(1)}$ or $\bfW_{i}^{(2)}$ must be large. Since $\bfh'_j$'s have bounded entries, and $\nabla {\rm LN}(\bfx^{{\rm Pre}, 3}_{i, j})$'s are independent of $\bfW_{i}^{(1)}$ and $\bfW_{i}^{(2)}$, $\nabla_{\tilde{\bfx}_i^{{\rm Pre}, 3}} \tilde{\bfx}_{i+1}^{{\rm Pre}}$ must have large magnitude.

Next, we show that under Peri-LN, the local sensitivity at $i$th block is invariant to re-scaling of activations ($\tilde{\bfx}_i^{{\rm Peri}, 2}$ in (\ref{eq:Peri_2_appendix}) or $\tilde{\bfx}_i^{{\rm Peri}, 6}$ in (\ref{eq:Peri_6_appendix})). This implies that even when activation explodes in magnitude, the local sensitivity stay at its nominal magnitude.

For Peri-LN, the local sensitivity term is given by 
\begin{align}
    \nabla_{\tilde{\bfx}_{i}^{\rm Peri}} \tilde{\bfx}_{i+1}^{\rm Peri} &= \underbrace{\nabla_{\tilde{\bfx}_i^{{\rm Peri}}} \tilde{\bfx}_i^{{\rm Peri}, 4}}_{\text{sensitivity of self-attention sublayer}}  \underbrace{\nabla_{\tilde{\bfx}_i^{{\rm Peri}, 4}} \tilde{\bfx}_{i+1}^{{\rm Peri}}}_{\text{sensitivity of feedforward sublayer}}.
\end{align}

Here, the sensitivity of the self-attention sublayer is given by
\begin{align}
    \nabla_{\tilde{\bfx}_i^{{\rm Peri}}} \tilde{\bfx}_i^{{\rm Peri}, 4} &= \bfI +  \nabla_{\tilde{\bfx}_i^{{\rm Peri}}} \tilde{\bfx}_i^{{\rm Peri}, 3} \\
    &= \bfI + \begin{bmatrix}
\nabla {\rm LN}(\bfx^{{\rm Peri}}_{i, 1}) &        &        \\
                 & \ddots &        \\
                 &        & \nabla {\rm LN}(\bfx^{{\rm Peri}}_{i, n})
\end{bmatrix} \\ &
\begin{bmatrix}
\nabla_{\bfx^{{\rm Peri},1}_{i, 1}} [f_{\rm attn}(\tilde{\bfx}^{{\rm Peri},1}_{i})]_1 &  \nabla_{\bfx^{{\rm Peri},1}_{i, 1}} [f_{\rm attn}(\tilde{\bfx}^{{\rm Peri},1}_{i})]_2  & ...    &    \nabla_{\bfx^{{\rm Peri},1}_{i, 1}} [f_{\rm attn}(\tilde{\bfx}^{{\rm Peri},1}_{i})]_n    \\
\nabla_{\bfx^{{\rm Peri},1}_{i, 2}} [f_{\rm attn}(\tilde{\bfx}^{{\rm Peri},1}_{i})]_1 &  \nabla_{\bfx^{{\rm Peri},1}_{i, 2}} [f_{\rm attn}(\tilde{\bfx}^{{\rm Peri},1}_{i})]_2  & ...    &    \nabla_{\bfx^{{\rm Peri},1}_{i, 2}} [f_{\rm attn}(\tilde{\bfx}^{{\rm Peri},1}_{i})]_n    \\
     \vdots        &  \vdots  & \ddots &    \vdots    \\
\nabla_{\bfx^{{\rm Peri},1}_{i, n}} [f_{\rm attn}(\tilde{\bfx}^{{\rm Peri},1}_{i})]_1 &  \nabla_{\bfx^{{\rm Peri},1}_{i, n}} [f_{\rm attn}(\tilde{\bfx}^{{\rm Peri},1}_{i})]_2  & ...    &    \nabla_{\bfx^{{\rm Peri},1}_{i, n}} [f_{\rm attn}(\tilde{\bfx}^{{\rm Peri},1}_{i})]_n,   
\end{bmatrix}\\ 
& \begin{bmatrix}
\nabla {\rm LN}(\bfx^{{\rm Peri}, 2}_{i, 1}) &        &        \\
                 & \ddots &        \\
                 &        & \nabla {\rm LN}(\bfx^{{\rm Peri}, 2}_{i, n})
\end{bmatrix}
\end{align}
where $\nabla_{\bfx^{{\rm Peri},1}_{i, l}} [f_{\rm attn}(\tilde{\bfx}^{{\rm Peri},1}_{i})]_j$'s are given in (\ref{eq:grad_attn}).

We first remark that this sensitivity for the self-attention sublayer is invariant under re-scaling of $\bfW_{i}^{h}$ and $\bfV_{i}^{h}$. Note that $\bfx^{{\rm Peri}}_{i, j}$'s are independent of $\bfW_{i}^{h}$ and $\bfV_{i}^{h}$. If we multiply $\bfW_{i}^{h}$ and $\bfV_{i}^{h}$ by positive constants $c_1$ and $c_2$, respectively, then ${\bfx}^{{\rm Peri}, 2}_{i,j}$'s become $c_1 c_2 {\bfx}^{{\rm Peri}, 2}_{i,j}$ by~(\ref{eq:Peri_2_appendix}), and $\nabla_{\bfx^{{\rm Peri},1}_{i, l}} [f_{\rm attn}(\tilde{\bfx}^{{\rm Peri},1}_{i})]_j$'s become $c_1 c_2 \nabla_{\bfx^{{\rm Peri},1}_{i, l}} [f_{\rm attn}(\tilde{\bfx}^{{\rm Peri},1}_{i})]_j$ by Proposition~\ref{prop:grad_attn}, and we have
\begin{align}
    \nabla_{\tilde{\bfx}_i^{{\rm Peri}}} \tilde{\bfx}_i^{{\rm Peri}, 4} 
    &= \bfI + \begin{bmatrix}
\nabla {\rm LN}(\bfx^{{\rm Peri}}_{i, 1}) &        &        \\
                 & \ddots &        \\
                 &        & \nabla {\rm LN}(\bfx^{{\rm Peri}}_{i, n})
\end{bmatrix} \\ &
\begin{bmatrix}
c_1 c_2 \nabla_{\bfx^{{\rm Peri},1}_{i, 1}} [f_{\rm attn}(\tilde{\bfx}^{{\rm Peri},1}_{i})]_1 & c_1 c_2 \nabla_{\bfx^{{\rm Peri},1}_{i, 1}} [f_{\rm attn}(\tilde{\bfx}^{{\rm Peri},1}_{i})]_2  & ...    &  c_1 c_2  \nabla_{\bfx^{{\rm Peri},1}_{i, 1}} [f_{\rm attn}(\tilde{\bfx}^{{\rm Peri},1}_{i})]_n    \\
c_1 c_2 \nabla_{\bfx^{{\rm Peri},1}_{i, 2}} [f_{\rm attn}(\tilde{\bfx}^{{\rm Peri},1}_{i})]_1 & c_1 c_2 \nabla_{\bfx^{{\rm Peri},1}_{i, 2}} [f_{\rm attn}(\tilde{\bfx}^{{\rm Peri},1}_{i})]_2  & ...    &  c_1 c_2  \nabla_{\bfx^{{\rm Peri},1}_{i, 2}} [f_{\rm attn}(\tilde{\bfx}^{{\rm Peri},1}_{i})]_n    \\
     \vdots        &  \vdots  & \ddots &    \vdots    \\
c_1 c_2 \nabla_{\bfx^{{\rm Peri},1}_{i, n}} [f_{\rm attn}(\tilde{\bfx}^{{\rm Peri},1}_{i})]_1 & c_1 c_2 \nabla_{\bfx^{{\rm Peri},1}_{i, n}} [f_{\rm attn}(\tilde{\bfx}^{{\rm Peri},1}_{i})]_2  & ...    &  c_1 c_2  \nabla_{\bfx^{{\rm Peri},1}_{i, n}} [f_{\rm attn}(\tilde{\bfx}^{{\rm Peri},1}_{i})]_n,   
\end{bmatrix}\\ 
& \begin{bmatrix}
\nabla {\rm LN}(c_1 c_2 \bfx^{{\rm Peri}, 2}_{i, 1}) &        &        \\
                 & \ddots &        \\
                 &        & \nabla {\rm LN}( c_1 c_2\bfx^{{\rm Peri}, 2}_{i, n})
\end{bmatrix} \\
& \text{using Proposition~\ref{prop:grad_LN}, $\nabla {\rm LN}(c_1 c_2 \bfx^{{\rm Peri}, 2}_{i, j})=\frac{1}{c_1 c_2} \nabla {\rm LN}(\bfx^{{\rm Peri}, 2}_{i, j})$, we have} \\
&= \bfI + \begin{bmatrix}
\nabla {\rm LN}(\bfx^{{\rm Peri}}_{i, 1}) &        &        \\
                 & \ddots &        \\
                 &        & \nabla {\rm LN}(\bfx^{{\rm Peri}}_{i, n})
\end{bmatrix} \\ 
& \cancel{(c_1 c_2)} \begin{bmatrix}
 \nabla_{\bfx^{{\rm Peri},1}_{i, 1}} [f_{\rm attn}(\tilde{\bfx}^{{\rm Peri},1}_{i})]_1 &  \nabla_{\bfx^{{\rm Peri},1}_{i, 1}} [f_{\rm attn}(\tilde{\bfx}^{{\rm Peri},1}_{i})]_2  & ...    &   \nabla_{\bfx^{{\rm Peri},1}_{i, 1}} [f_{\rm attn}(\tilde{\bfx}^{{\rm Peri},1}_{i})]_n    \\
 \nabla_{\bfx^{{\rm Peri},1}_{i, 2}} [f_{\rm attn}(\tilde{\bfx}^{{\rm Peri},1}_{i})]_1 &  \nabla_{\bfx^{{\rm Peri},1}_{i, 2}} [f_{\rm attn}(\tilde{\bfx}^{{\rm Peri},1}_{i})]_2  & ...    &    \nabla_{\bfx^{{\rm Peri},1}_{i, 2}} [f_{\rm attn}(\tilde{\bfx}^{{\rm Peri},1}_{i})]_n    \\
     \vdots        &  \vdots  & \ddots &    \vdots    \\
\nabla_{\bfx^{{\rm Peri},1}_{i, n}} [f_{\rm attn}(\tilde{\bfx}^{{\rm Peri},1}_{i})]_1 &  \nabla_{\bfx^{{\rm Peri},1}_{i, n}} [f_{\rm attn}(\tilde{\bfx}^{{\rm Peri},1}_{i})]_2  & ...    &   \nabla_{\bfx^{{\rm Peri},1}_{i, n}} [f_{\rm attn}(\tilde{\bfx}^{{\rm Peri},1}_{i})]_n,   
\end{bmatrix}\\ 
& \cancel{(\frac{1}{c_1 c_2})} \begin{bmatrix}
\nabla {\rm LN}(\bfx^{{\rm Peri}, 2}_{i, 1}) &        &        \\
                 & \ddots &        \\
                 &        & \nabla {\rm LN}(\bfx^{{\rm Peri}, 2}_{i, n})
\end{bmatrix},
\end{align}
which remains the same and verifies the re-scaling invariance.

In the activation of the self-attention sublayer (\ref{eq:Peri_2_appendix}), the input $\tilde{\bfx}^{\rm Peri, 1}_i$ to the sublayer is normalized and bounded in magnitude (Lemma~\ref{lemma:ellipsoid}). Also the output of softmax is bounded (each column sums to one). Thus, when large activation occurs ($\tilde{\bfx}^{\rm Peri, 2}_i$ is large), at least one of $\bfW_{i}^{h}$ or $\bfV_{i}^{h}$ must be large. However, since the sensitivity of self-attention sublayer $\nabla_{\tilde{\bfx}_i^{{\rm Peri}}} \tilde{\bfx}_i^{{\rm Peri}, 4}$ is invariant under re-scaling of $\bfW_{i}^{h}$ and $\bfV_{i}^{h}$, it stays at its normal magnitude.

Next, sensitivity of feedforward sublayer is given by
\begin{align}
   \nabla_{\tilde{\bfx}_i^{{\rm Peri}, 4}} \tilde{\bfx}_{i+1}^{{\rm Peri}} &= \bfI + \nabla_{\tilde{\bfx}_i^{{\rm Peri}, 4}} \tilde{\bfx}_{i+1}^{{\rm Peri, 7}} \\
   &= \bfI 
   +  \begin{bmatrix}
\nabla {\rm LN}(\bfx^{{\rm Peri}, 4}_{i, 1}) &        &        \\
                 & \ddots &        \\
                 &        & \nabla {\rm LN}(\bfx^{{\rm Peri}, 4}_{i, n})
\end{bmatrix} 
\begin{bmatrix}
\bfW_{i}^{(1)} &        &        \\
                 & \ddots &        \\
                 &        & \bfW_{i}^{(1)}
\end{bmatrix}^\top \\
   & \begin{bmatrix}
\bfh'_1 &        &        \\
                 & \ddots &        \\
                 &        & \bfh'_n
\end{bmatrix}
\begin{bmatrix}
\bfW_{i}^{(2)} &        &        \\
                 & \ddots &        \\
                 &        & \bfW_{i}^{(2)}
\end{bmatrix}^\top \begin{bmatrix}
\nabla {\rm LN}(\bfx^{{\rm Peri}, 6}_{i, 1}) &        &        \\
                 & \ddots &        \\
                 &        & \nabla {\rm LN}(\bfx^{{\rm Peri}, 6}_{i, n})
\end{bmatrix},
\end{align}
where
\begin{equation}
\nabla {\rm LN}(\bfx; \bfgamma, \bfbeta) = \frac{{\rm diag}(\bfgamma)}{\sigma(\bfx)} - \frac{1}{d} \frac{(\bfx-\mu(\bfx))(\bfgamma \odot (\bfx-\mu(\bfx)))^\top}{\sigma(\bfx)^3}
\end{equation}
by Proposition~\ref{prop:grad_LN}, and $\bfh'_j = {\rm diag} (\phi' ( \bfW_{i}^{(1)}\bfx^{{\rm Peri}, 5}_{i,j} ))
$. We remark that $\phi'$ is the derivative of the activation function, thus the entries of $\bfh'_j$ are bounded by a constant in most cases (e.g., at most 1 for sigmoid, tanh, and ReLU).

We first note that this sensitivity for the feed-forward sublayer is invariant under re-scaling of $\bfW_{i}^{(1)}$ and $\bfW_{i}^{(2)}$. Specifically, if we multiply $\bfW_{i}^{(1)}$ and $\bfW_{i}^{(2)}$ by positive constants $c_1$ and $c_2$, respectively, then $\tilde{\bfx}^{{\rm Peri}, 6}_i$ becomes $c_1 c_2 \tilde{\bfx}^{{\rm Peri}, 6}_i$ by~(\ref{eq:Peri_6_appendix}), and
\begin{align}
   \nabla_{\tilde{\bfx}_i^{{\rm Peri}, 4}} \tilde{\bfx}_{i+1}^{{\rm Peri}} &= \bfI 
   +  \begin{bmatrix}
\nabla {\rm LN}(\bfx^{{\rm Peri}, 4}_{i, 1}) &        &        \\
                 & \ddots &        \\
                 &        & \nabla {\rm LN}(\bfx^{{\rm Peri}, 4}_{i, n})
\end{bmatrix} 
\begin{bmatrix}
c_1\bfW_{i}^{(1)} &        &        \\
                 & \ddots &        \\
                 &        & c_1\bfW_{i}^{(1)}
\end{bmatrix}^\top \\
   & \begin{bmatrix}
\bfh'_1 &        &        \\
                 & \ddots &        \\
                 &        & \bfh'_n
\end{bmatrix}
\begin{bmatrix}
c_2\bfW_{i}^{(2)} &        &        \\
                 & \ddots &        \\
                 &        & c_2\bfW_{i}^{(2)}
\end{bmatrix}^\top \begin{bmatrix}
\nabla {\rm LN}(c_1 c_2 \bfx^{{\rm Peri}, 6}_{i, 1}) &        &        \\
                 & \ddots &        \\
                 &        & \nabla {\rm LN}(c_1 c_2 \bfx^{{\rm Peri}, 6}_{i, n}),
\end{bmatrix} \\
& \text{using Proposition~\ref{prop:grad_LN}, $\nabla {\rm LN}(c_1 c_2 \bfx^{{\rm Peri}, 6}_{i, j})=\frac{1}{c_1 c_2} \nabla {\rm LN}(\bfx^{{\rm Peri}, 6}_{i, j})$, we have} \\
&= \bfI 
   +  \begin{bmatrix}
\nabla {\rm LN}(\bfx^{{\rm Peri}, 4}_{i, 1}) &        &        \\
                 & \ddots &        \\
                 &        & \nabla {\rm LN}(\bfx^{{\rm Peri}, 4}_{i, n})
\end{bmatrix} 
\begin{bmatrix}
\cancel{c_1}\bfW_{i}^{(1)} &        &        \\
                 & \ddots &        \\
                 &        & \cancel{c_1}\bfW_{i}^{(1)}
\end{bmatrix}^\top \\
   & \begin{bmatrix}
\bfh'_1 &        &        \\
                 & \ddots &        \\
                 &        & \bfh'_n
\end{bmatrix}
\begin{bmatrix}
\cancel{c_2}\bfW_{i}^{(2)} &        &        \\
                 & \ddots &        \\
                 &        & \cancel{c_2}\bfW_{i}^{(2)}
\end{bmatrix}^\top \begin{bmatrix} \cancel{\frac{1}{c_1 c_2}}
\nabla {\rm LN}( \bfx^{{\rm Peri}, 6}_{i, 1}) &        &        \\
                 & \ddots &        \\
                 &        & \cancel{\frac{1}{c_1 c_2}} \nabla {\rm LN}(\bfx^{{\rm Peri}, 6}_{i, n})
\end{bmatrix},
\end{align}
which remains the same and verifies the re-scaling invariance.

When large activation occurs in the feed-forward sublayer (\ref{eq:Peri_6_appendix}), i.e., $\bfx^{\rm Peri, 6}_i$ is large, since the input $\bfx^{\rm Peri, 5}_i$  to the sublayer is normalized and bounded in magnitude (Lemma~\ref{lemma:ellipsoid}), at least one of $\bfW_{i}^{(1)}$ or $\bfW_{i}^{(2)}$ must be large. However, the local sensitivity term for the feed-forward layer is invariant under re-scaling of $\bfW_{i}^{(1)}$ and $\bfW_{i}^{(2)}$, thus, it stays at its nominal magnitude.
\end{document}